\def\eqref#1{equation~\ref{#1}}
\def\1{\bm{1}}
\DeclareMathAlphabet{\mathsfit}{\encodingdefault}{\sfdefault}{m}{sl}
\SetMathAlphabet{\mathsfit}{bold}{\encodingdefault}{\sfdefault}{bx}{n}
\newcommand{\E}{\mathbb{E}}
\DeclareMathOperator*{\argmax}{arg\,max}
\pgfplotsset{compat=newest}
\tikzset{%
  >={Latex[width=1.5mm,length=2mm]},
  vertex/.style={draw,circle,inner sep=0mm,semithick,minimum width=4mm},
  point/.style = {circle, draw, inner sep=0.04cm,fill,node contents={}},
  uvertex/.style={outer sep=0},
  bidir/.style={<->,dashed, line width=0.25mm},
  dir/.style={->, line width=0.25mm},
  regime/.style={shape=rectangle,fill=black,inner sep=0pt,minimum size=3pt,draw},
  node distance=1cm,
  font=\scriptsize\sffamily
}
\definecolor{betterred}{RGB}{228,26,28}
\definecolor{betterblue}{RGB}{55,126,184}
\definecolor{bettergreen}{RGB}{77,175,74}
\definecolor{betterpurple}{RGB}{152,78,163}
\definecolor{LightCyan}{rgb}{0.88,1,1}
    \def\tikz@path@do@at@end{\endpgfonlayer\endgroup\tikz@path@do@at@end}%
\newcommand{\xdashleftrightarrow}[2][]{\ext@arrow 3359\leftrightarrowfill@@{#1}{#2}}
\def\rightarrowfill@@{\arrowfill@@\relax\relbar\rightarrow}
\def\leftarrowfill@@{\arrowfill@@\leftarrow\relbar\relax}
\def\leftrightarrowfill@@{\arrowfill@@\leftarrow\relbar\rightarrow}
\def\arrowfill@@#1#2#3#4{%
  $\m@th\thickmuskip0mu\medmuskip\thickmuskip\thinmuskip\thickmuskip
   \relax#4#1
   \xleaders\hbox{$#4#2$}\hfill
   #3$%
}
\newcommand{\convexpath}[2]{
  [   
  create hullcoords/.code={
    \global\edef\namelist{#1}
    \foreach [count=\counter] \nodename in \namelist {
      \global\edef\numberofnodes{\counter}
      \coordinate (hullcoord\counter) at (\nodename);
    }
    \coordinate (hullcoord0) at (hullcoord\numberofnodes);
    \pgfmathtruncatemacro\lastnumber{\numberofnodes+1}
    \coordinate (hullcoord\lastnumber) at (hullcoord1);
  },
  create hullcoords
  ]
  ($(hullcoord1)!#2!-90:(hullcoord0)$)
  \foreach [
  evaluate=\currentnode as \previousnode using \currentnode-1,
  evaluate=\currentnode as \nextnode using \currentnode+1
  ] \currentnode in {1,...,\numberofnodes} {
    let \p1 = ($(hullcoord\currentnode) - (hullcoord\previousnode)$),
    \n1 = {atan2(\y1,\x1) + 90},
    \p2 = ($(hullcoord\nextnode) - (hullcoord\currentnode)$),
    \n2 = {atan2(\y2,\x2) + 90},
    \n{delta} = {Mod(\n2-\n1,360) - 360}
    in 
    {arc [start angle=\n1, delta angle=\n{delta}, radius=#2]}
    -- ($(hullcoord\nextnode)!#2!-90:(hullcoord\currentnode)$) 
  }
}
\theoremstyle{definition}
\newtheorem{definition}{Definition}
\newtheorem{example}{Example}
\theoremstyle{plain}
\newtheorem{corollary}{Corollary}
\newtheorem{lemma}{Lemma}
\newtheorem{theorem}{Theorem}
\newcommand{\independent}{\perp\mkern-9.5mu\perp}
\newcommand{\notindependent}{\centernot{\independent}}
\newcommand{\interv}[1]{\operatorname{do}({#1})}
\newcommand{\Inv}[3]{P_{#3}\left ( #1 ; #2 \right)}
\newcommand{\InvE}[3]{\E_{#3}\left [ #1 ; #2 \right]}
\newcommand{\InvEs}[3]{\E^{#3}\left [ #1 ; #2 \right]}
\newcommand{\inv}[2]{P\left ( #1 ; #2\right)}
\newcommand{\invE}[2]{\E\left [ #1 ; #2 \right]}
\Crefname{equation}{Eq.}{Eqs.}
\Crefname{align}{Eq.}{Eqs.}
\Crefname{figure}{Fig.}{Figs.}
\Crefname{tabular}{Tab.}{Tabs.}
\Crefname{theorem}{Thm.}{Thms.}
\Crefname{lemma}{Lem.}{Lems.}
\Crefname{proposition}{Prop.}{Props.}
\Crefname{definition}{Def.}{Defs.}
\Crefname{algorithm}{Algo.}{Algs.}
\Crefname{corollary}{Corol.}{Corol.}
\Crefname{section}{Sec.}{Sec.}
\Crefname{appendix}{App.}{Apps.}
\Crefname{prop}{Prop.}{Props.}
\Crefname{task}{Task.}{Tasks.}
\Crefname{setting}{Setting.}{Settings.}
\Crefname{example}{Example}{Expamples}
\newcommand{\ci}{\independent}
\newcommand{\Braces}[1]{\left\{ #1\right\}}
\newcommand{\Parens}[1]{\left(#1\right)}
\newcommand{\angles}[2][]{#1\langle#2 #1\rangle}
\newcommand{\D}{\Omega}
\newcommand{\G}{\mathcal{G}}
\newcommand{\M}{\mathcal{M}}
\newcommand{\Pa}{\mli{Pa}}
\newcommand{\PA}{\mli{PA}}
\newcommand{\pa}{\mli{pa}}
\newcommand{\De}{\mli{De}}
\newcommand{\An}{\mli{An}}
\newcommand{\an}{\mli{an}}
\newcommand{\de}{\mli{de}}
\newcommand{\Ch}{\mli{Ch}}
\newcommand{\ch}{\mli{ch}}
\newcommand{\doo}{\text{do}}
\newcommand{\tuple}[2][]{\angles[#1]{#2}}
\newcommand{\mli}[1]{\mathit{#1}}
\def\*#1{\boldsymbol{#1}}
\def\1#1{\mathcal{#1}}
\def\2#1{\mathscr{#1}}
\def\3#1{\mathbb{#1}}
\title{Causally Aligned Curriculum Learning}
\author{Mingxuan Li {\normalfont and} Junzhe Zhang {\normalfont and} Elias Bareinboim \\
Causal Artificial Intelligence Lab, 
Columbia University, USA\\
\texttt{\{ml,junzhez,eb\}@cs.columbia.edu} 
}
\begin{document}
\maketitle

\begin{abstract}

A pervasive challenge in Reinforcement Learning (RL) is the ``curse of dimensionality'' which is the exponential growth in the state-action space when optimizing a high-dimensional target task. The framework of \emph{curriculum learning} trains the agent in a curriculum composed of a sequence of related and more manageable source tasks. The expectation is that when some optimal decision rules are shared across source tasks and the target task, the agent could more quickly pick up the necessary skills  to behave optimally in the environment, thus accelerating the learning process. 
However, this critical assumption of invariant optimal decision rules does not necessarily hold in many practical applications, specifically when the underlying environment contains \emph{unobserved confounders}. This paper studies the problem of curriculum RL through causal lenses. We derive a sufficient graphical condition characterizing causally aligned source tasks, i.e., the invariance of optimal decision rules holds. We further develop an efficient algorithm to generate a causally aligned curriculum, provided with qualitative causal knowledge of the target task. Finally, we validate our proposed methodology through experiments in \xadd{discrete and continuous} confounded tasks \xadd{with pixel observations}.
\end{abstract}

\section{Introduction}
As Roma was not built in one day, learning to achieve a complex task (e.g., cooking, driving) directly can be challenging. Instead, the human learning process is scaffolded with incremental difficulty to support \change{the acquisition of}{acquiring} progressively advanced knowledge and skills. The idea of training with increasingly complex tasks, known as curriculum learning, has been applied in reinforcement learning when \cite{trainrobo} used a carefully curated sequence of tasks to train agents to solve a modified Cart Pole system. 
In recent years, there has been a growing interest in 
automatically generating curricula tailored to the agent's current capabilities, which opens up a new venue called ``Automatic Curriculum Learning''~\citep{rl_curriculum}. An automatic curriculum generator requires two components: an encoded task space and a task characterization function~\citep{curriculum_survey,wang2020enhancedpoet}. Task space encoding is often a bijective function that maps a task to a low dimensional vector~\citep{evolvingcurricula,currot,goalgan,plr,alpgmm,wang2019poet,wang2020enhancedpoet,cho2023outcomedirected,huang2022curriculum}. A proper task space encoding lays the foundation of a reasonable task characterization function measuring the fitness of tasks~\citep{goalgan,tripleagent,andreas2017modular,selfplay,plr}. 
New training tasks, called source tasks, are generated by changing the target task's state space or parameters of transition functions in the encoded task space. A system designer then determines in which order the agent should be trained in these source tasks, following the task characterization function. The set of generated source tasks and the training order defined upon this set defines a \emph{curriculum} for the learning agent. \xadd{Please see \Cref{sec:related work} for more related work.}





\begin{figure}[t]
    \centering
    \hfill
     \begin{subfigure}[b]{.38\textwidth}
        \centering
        \includegraphics[width=\textwidth]{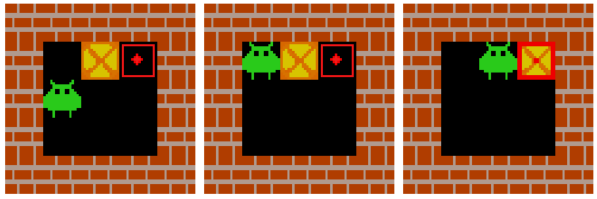}
        \caption{Misaligned Source Task}
        \label{fig:harmful}
    \end{subfigure}
    \hfill
    \begin{subfigure}[b]{.38\textwidth}
        \centering
        \includegraphics[width=\textwidth]{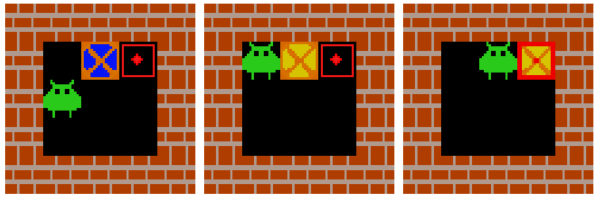}
        \caption{Aligned Source Task}
    \label{fig:helpful}
    \end{subfigure}\hfill\null

    \hfill
    \begin{subfigure}[b]{\textwidth}
        \centering
        \includegraphics[width=\textwidth]{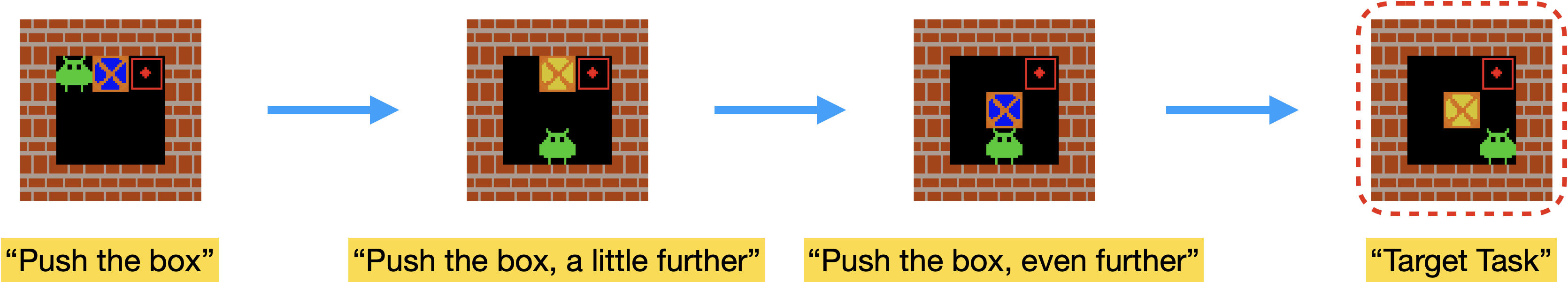}
        \caption{Aligned Curriculum}
        \label{fig:curriculum}
    \end{subfigure}\hfill\null
    \caption{Examples of (\subref{fig:harmful}) full episode of a misaligned source task that \xadd{intervenes in the box color}, (\subref{fig:helpful}) full episode of an aligned source task that \xadd{only changes the initial box location}, and (\subref{fig:curriculum}) an aligned curriculum where none of the source tasks \xadd{intervenes in the box's color.}}
    \label{fig:intro example}
\end{figure}

While impressive, most curriculum RL methods described so far rely on the assumption that generated source tasks are aligned with the target. Consequently, the agent could pick up some valuable skills by training in such source tasks, allowing it to behave optimally in certain situations in the target environment. However, this critical assumption does not necessarily hold in many real-world decision-making settings. For concreteness, consider a modified Sokoban game shown in \Cref{fig:intro example} inspired by \cite{SchraderSokoban2018} where an unobserved confounder $U_t$ randomly determines the box color $C_t$ ($0$ for yellow, $1$ for blue) at every time step $t$. The agent receives a positive reward $Y_t$ only when it pushes the box to the goal state when the box color appears yellow ($U_t = 0$); otherwise, it gets penalized ($U_t = 1$). We apply several state-of-the-art curriculum generators that construct source tasks by fixing the box color to yellow or blue, including ALP-GMM~\citep{alpgmm}, PLR~\citep{plr}, Goal-GAN~\citep{goalgan}, and Currot~\citep{currot}.  \Cref{fig:harmful} shows an example of the generated source tasks. We evaluate agents' performance trained by those generated curricula and compare it with the one directly trained in the target task. Surprisingly, simulation results shown in \Cref{fig:perf_curve} reveal that agents trained by the curricula failed to learn to push the yellow box to the destination. This suggests source tasks generated by intervening in the box color are misaligned; that is, training in these source tasks harms the agents' target task performance. 

\begin{wrapfigure}[10]{r}{0.33\textwidth}
    \vspace{-0.15in}
    \captionsetup{aboveskip=-0.05\normalbaselineskip}
    \includegraphics[width=0.33\textwidth]{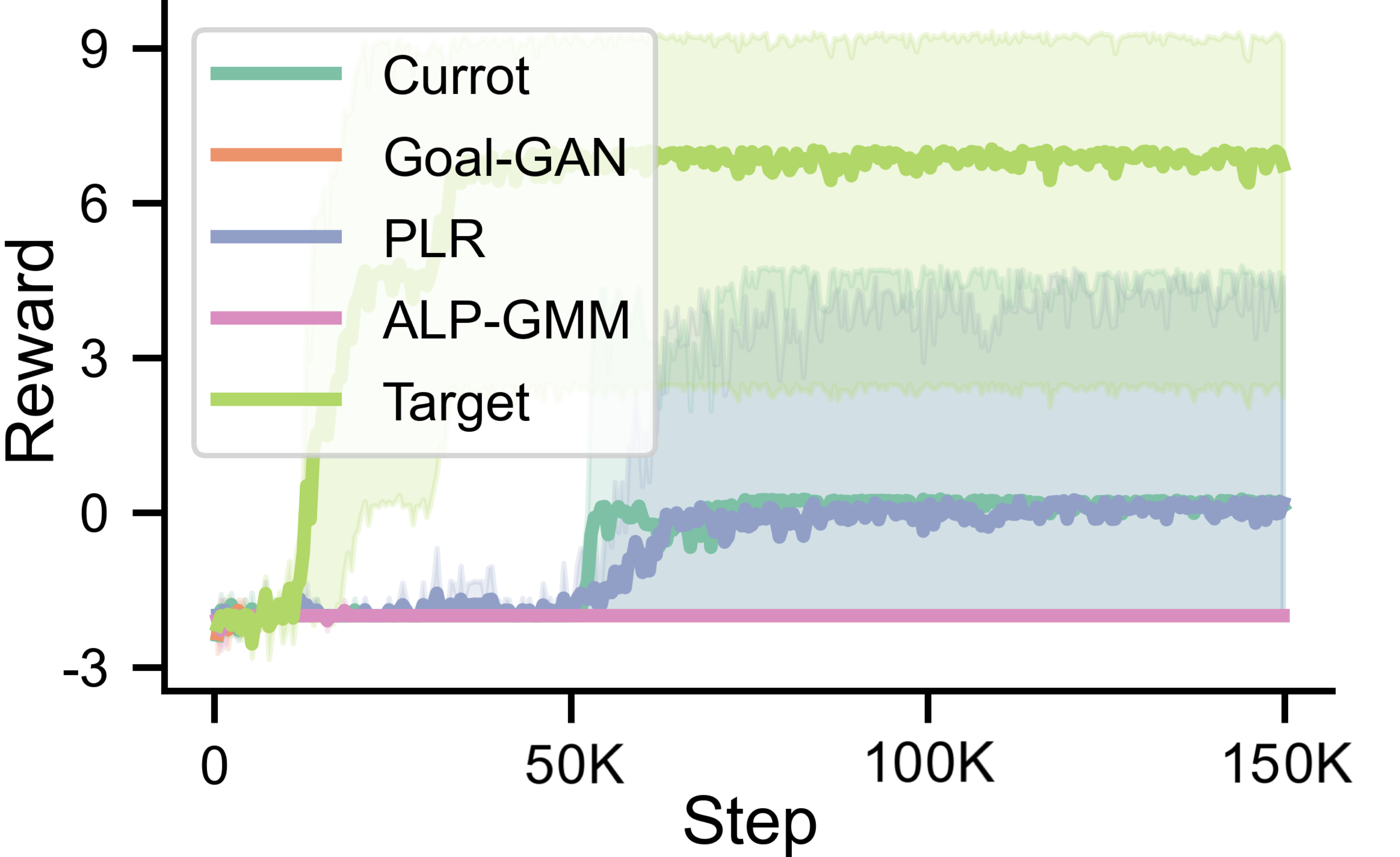}
    \caption{The average performance of curriculum generators.}
    \label{fig:perf_curve}
\end{wrapfigure}
Several observations follow from the Sokoban example. (1) A curriculum designer generates source tasks by modifying the data-generating mechanisms in the target tasks. (2) Such modifications could lead to a shift in system dynamics between the target task and source tasks. When this distribution shift is significant, training in source tasks may harm the agent's learning. (3) The agent must avoid misaligned source tasks to achieve optimal learning performance. There exist methods attempting to address the challenges of misaligned source tasks leveraging 
a heuristic similarity measure between the target and source tasks~\citep{svetlikAutomaticCurriculumGraph2017,Silva2018ObjectOrientedCG}.
Yet, a systematic and theoretically justified approach for exploiting other types of knowledge, e.g., qualitative, about the target task is missing. 

This paper aims to address the challenges of misaligned source tasks in curriculum generation by exploring causal relationships among variables present in the underlying environment. To realize this, we formalize curriculum learning in the theoretical framework of structural causal models (SCMs) \citep{pearl2009causality}. This formulation allows us to characterize misaligned source tasks by examining the structural invariance across the optimal policies obtained from the target and source tasks. More specifically, our contributions are summarized as follows. (1) We derive a sufficient graphical condition determining potentially misaligned source tasks. (2) We develop efficient algorithms for detecting misaligned source tasks and constructing source tasks that are guaranteed to align with the target task. (3) We introduce a novel augmentation procedure that enables state-of-the-art curriculum learning algorithms to generate aligned curricula to accelerate the agent's learning. Finally, we validate the proposed framework through extensive experiments in various decision-making tasks. 

\subsection{Preliminaries}
This section introduces necessary notations and definitions that will be used throughout the discussion. We use capital letters ($X$) to denote a random variable, lowercase letters ($x$) to represent a specific value of the random variable, and $\D(\cdot)$ to denote the domain of a random variable. We use bold capital letters ($\boldsymbol{V}$) to denote a set of random variables and use $|\boldsymbol{V}|$ to denote its cardinality.

The basic semantical framework of our analysis rests on \textit{structural causal models} (SCMs) \citep{pearl2009causality,bareinboimCausalInferenceDatafusion2016}. An SCM $\1M$ is a tuple $\tuple{\*U, \*V, \2F, P}$, where $\*U$ is a set of exogenous variables and $\*V$ is a set of endogenous variables. $\2F$ is a set of functions s.t. each $f_V \in \2F$ decides values of an endogenous variable $V \in \*V$ taking as argument a combination of other variables in the system. That is, $V \leftarrow f_{V}(\*\PA_V, \*U_V), \*\PA_V \subseteq \*V, \*U_V \subseteq \*U$. Values of exogenous variables $\*U$ are drawn from the exogenous distribution $P(\*U)$. 
A policy $\pi$ over a subset of variables $\*X \subseteq \*V$ is a sequence of decision rules $\left\{ \pi(X | \*S_X) \right\}_{X \in \*X}$, where every $\pi(X |\*S_X)$ is a probability distribution mapping from domains of a set of covariates $\*S_X \subseteq \*V$ to the domain of action $X$. An intervention following a policy $\pi$ over variables $\*X$, denoted by $\doo(\pi)$, is an operation which sets values of every $X \in \*X$ to be decided by policy $X \sim \pi(X | \*S_X)$ \citep{sigmacalculus}, replacing the functions $f_{\*X} = \{f_{X}: \forall X \in \*X\}$ that would normally determine their values. For an SCM $\1M$, let $\1M_{\pi}$ be a submodel of $M$ induced by intervention $\doo(\pi)$. For a set $\*Y \subseteq \*V$, the interventional distribution $\inv{\*Y}{\pi}$ is defined as the distribution over $\*Y$ in the submodel $\1M_{\pi}$, i.e., $\Inv{\*Y}{\pi}{\1M}\triangleq P_{\1M_{\pi}}\Parens{\*Y}$; restriction $\M$ is left implicit when it is obvious.

Each SCM $\1M$ is also associated with a causal diagram $\G$ (e.g., \Cref{fig:sokoban target}), which is a directed acyclic graph (DAG) where nodes represent endogenous variables $\*V$ and arrows represent the arguments $\*\PA_V, \*U_V$ of each structural function $f_{V} \in \2F$. Exogenous variables $\*U$ are often not explicitly shown by convention. However, a bi-directed arrow $V_i \leftrightarrow V_j$ indicates the presence of an unobserved confounder (UC), $U_{i, j} \in \*U$ affecting $V_i, V_j$, simultaneously~\citep{PCH}. We will use standard graph-theoretic family abbreviations to represent graphical relationships, such as parents ($\pa$), children ($\ch$), descendants ($\de$), and ancestors ($\an$). For example, the set of parent nodes of $\*X$ in $\1G$ is denoted by $\pa(\*X)_{\1G} = \cup_{X \in \*X} \pa(X)_{\1G}$. Capitalized versions $\Pa, \Ch, \De, \An$ include the argument as well, e.g., $\Pa(\*X)_{\1G} = \pa(\*X)_{\1G} \cup \*X$. 
A path from a node $X$ to a node $Y$ in $\G$ is a sequence of edges that does not include a particular node more than once. Two sets of nodes $\*X, \*Y$ are said to be d-separated by a third set $\*Z$ in a DAG $\G$, denoted by $(\*X \ci \*Y | \*Z)_{\G}$, if every edge path from nodes in $\*X$ to nodes in $\*Y$ is ``blocked'' by nodes in $\*Z$. The criterion of blockage follows \citet[Def.~1.2.3]{pearl2009causality}. For more details on SCMs, we refer readers to \citet{pearl2009causality,PCH}. \xadd{For the relationship between (PO)MDPs and SCMs, please see \Cref{sec:mdp relation}.}

\section{Challenges of Misaligned Source Tasks} \label{sec:causal curriculum learning}


This section will formalize the concept of aligned source tasks and provide an efficient algorithmic procedure to find such tasks based on causal knowledge about the data-generating process. Formally, a planning/policy learning task (for short, a task) is a decision-making problem composed of an environment and an agent. We focus on the sequential setting where the agent determines values of a sequence of actions $\*X = \{X_1, \dots, X_H\}$ based on the input of observed states $\{\*S_1, \dots, \*S_H \}$. The mapping between states and actions defines the space of candidate policies, namely,
\begin{definition}[Policy Space]\label{def:policy_space}
	For an SCM $\M = \tuple{\*U,\*V, \2F, P}$, a policy space $\Pi$ is a set of policies $\pi$ over actions $\*X = \{X_1, \dots, X_H\}$. Each policy $\pi$ is a sequence of decision rules $\left\{\pi_1(X_1 | \*S_1), \dots, \pi_H(X_H | \*S_H) \right\}$ where for every $i = 1, \dots, H$,
	\begin{enumerate}[label=(\roman*), leftmargin=20pt, topsep=0pt, parsep=0pt]
		\item Action $X_i$ is a non-descendent of future actions $X_{i+1}, \dots, X_{H}$, i.e., $X_i \in \*V \setminus \De(\bar{\*X}_{i+1:H})$;
		\item States $\*S_i$ are non-descendants of future actions $X_i, \dots, X_{H}$, i.e., $\*S_i \subseteq \*V \setminus \De(\bar{\*X}_{i:H})$.
	\end{enumerate}
	Henceforth, we will consistently denote such a policy space by $\Pi = \left \{ \langle X_1, \*S_1\rangle, \dots, \langle X_H, \*S_H \rangle \right \}$.
\end{definition}
The agent interacts with the environment by performing intervention $\doo(\pi), \forall \pi \in \Pi$ to optimize a reward function $\mathcal{R}(\*Y)$ taking a set of reward signals $\*Y \subseteq \*V$ as input.\footnote{For instance, a cumulative discounted reward is defined as $\mathcal{R}(\boldsymbol{Y}) = \sum_{i=1}^{H} \gamma^{i-1} Y_i$ where $Y_i \in \*V$, $i = 1, \dots, H$, are endogenous variables, and $\gamma\in (0, 1]$ is a discount factor.}
A policy space, a reward function, and an SCM environment formalize a target decision-making task. We will graphically describe a target task using an augmented causal diagram $\G$ constructed from the SCM $\1M$; actions $\*X$ are highlighted in blue; reward signals $\*Y$ are highlighted in red; input states $\*S_i$ for every action $X_i \in \*X$ are shaded in light blue. For instance, \Cref{fig:sokoban target} shows a causal diagram representing the decision-making task in the Sokoban game (\Cref{fig:intro example}). For every time step $i = 1, \dots, H$, $L_i$ stands for the agent's location, $B_i$ for the box location, and $C_i$ for the box color.

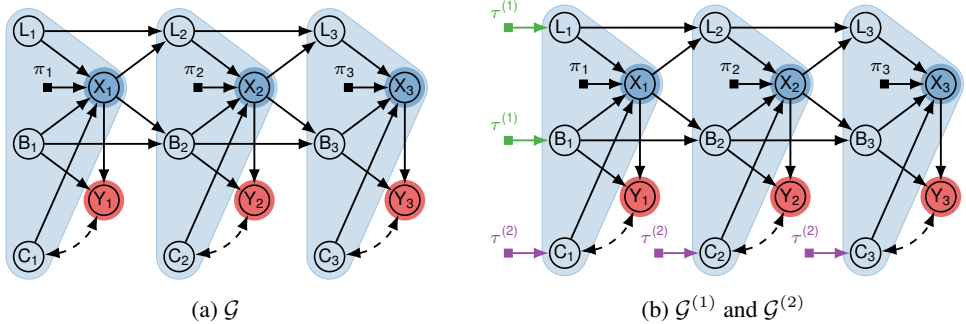
\begin{figure}[t]
\centering
\hfill%
\begin{subfigure}{0.4\linewidth}\centering
  \begin{tikzpicture}
      \def\outerr{3}
      \def\innerr{2.7}

	  \node[vertex] (B1) at (-1, -0.75) {B\textsubscript{1}};
	  \node[vertex] (L1) at (-1, 0.75) {L\textsubscript{1}};
	  \node[vertex] (C1) at (-1, -2.25) {C\textsubscript{1}};
	  \node[vertex] (X1) at (0, 0) {X\textsubscript{1}};
	  \node[vertex] (Y1) at (0, -1.5) {Y\textsubscript{1}};
	  \node[vertex] (L2) at (1, 0.75) {L\textsubscript{2}};
	  \node[vertex] (B2) at (1, -0.75) {B\textsubscript{2}};
	  \node[vertex] (C2) at (1, -2.25) {C\textsubscript{2}};
	  \node[vertex] (X2) at (2, 0) {X\textsubscript{2}};
	  \node[vertex] (Y2) at (2, -1.5) {Y\textsubscript{2}};
	  \node[vertex] (L3) at (3, 0.75) {L\textsubscript{3}};
	  \node[vertex] (B3) at (3, -0.75) {B\textsubscript{3}};
	  \node[vertex] (C3) at (3, -2.25) {C\textsubscript{3}};
	  \node[vertex] (X3) at (4, 0) {X\textsubscript{3}};
	  \node[vertex] (Y3) at (4, -1.5) {Y\textsubscript{3}};

	  \node[regime, label={[shift={(-0.05,-0.05)}]\scriptsize $\pi$\textsubscript{1}}] (p1) at (-0.75, 0) {};
	  \node[regime, label={[shift={(-0.05,-0.05)}]\scriptsize $\pi$\textsubscript{2}}] (p2) at (1.25, 0) {};
	  \node[regime, label={[shift={(-0.05,-0.05)}]\scriptsize $\pi$\textsubscript{3}}] (p3) at (3.25, 0) {};
	  
	  \draw[dir] (L1) to (L2);
	  \draw[dir] (L1) to (X1);
	  \draw[dir] (B1) to (B2);
	  \draw[dir] (B1) to (X1);
	  \draw[dir] (B1) to (Y1);
	  \draw[dir] (C1) to (X1);
	  \draw[bidir] (C1) to [bend right = 30] (Y1);
	  \draw[dir] (X1) to (Y1);
	  \draw[dir] (X1) to (L2);
	  \draw[dir] (X1) to (B2);

	  \draw[dir] (L2) to (L3);
	  \draw[dir] (L2) to (X2);
	  \draw[dir] (B2) to (B3);
	  \draw[dir] (B2) to (X2);
	  \draw[dir] (B2) to (Y2);
	  \draw[dir] (C2) to (X2);
	  \draw[bidir] (C2) to [bend right = 30] (Y2);
	  \draw[dir] (X2) to (Y2);
	  \draw[dir] (X2) to (L3);
	  \draw[dir] (X2) to (B3);

	  \draw[dir] (L3) to (X3);
	  \draw[dir] (B3) to (X3);
	  \draw[dir] (B3) to (Y3);
	  \draw[dir] (C3) to (X3);
	  \draw[bidir] (C3) to [bend right = 30] (Y3);
	  \draw[dir] (X3) to (Y3);
		
	  \draw[dir] (p1) to (X1);
	  \draw[dir] (p2) to (X2);
	  \draw[dir] (p3) to (X3);
	  
	  \begin{pgfonlayer}{back}
		  \draw[fill=betterblue!25, draw = betterblue!45] \convexpath{L1, X1, C1}{\outerr mm};
		  \draw[fill=betterblue!25, draw = betterblue!45] \convexpath{L2, X2, C2}{\outerr mm};
		  \draw[fill=betterblue!25, draw = betterblue!45] \convexpath{L3, X3, C3}{\outerr mm};
		  \node[circle,fill=betterblue!65,draw=none,minimum size=2*\innerr mm] at (X1) {};
		  \node[circle,fill=betterblue!65,draw=none,minimum size=2*\innerr mm] at (X2) {};
		  \node[circle,fill=betterblue!65,draw=none,minimum size=2*\innerr mm] at (X3) {};
		  \node[circle,fill=betterred!65,draw=none,minimum size=2*\innerr mm] at (Y1) {};
		  \node[circle,fill=betterred!65,draw=none,minimum size=2*\innerr mm] at (Y2) {};
		  \node[circle,fill=betterred!65,draw=none,minimum size=2*\innerr mm] at (Y3) {};
	  \end{pgfonlayer}
  \end{tikzpicture}
  \caption{$\G$}
  \label{fig:sokoban target}
  \end{subfigure}\hfill
\begin{subfigure}{0.5\linewidth}\centering
  \begin{tikzpicture}
      \def\outerr{3}
      \def\innerr{2.7}

	  \node[vertex] (B1) at (-1, -0.75) {B\textsubscript{1}};
	  \node[vertex] (L1) at (-1, 0.75) {L\textsubscript{1}};
	  \node[vertex] (C1) at (-1, -2.25) {C\textsubscript{1}};
	  \node[vertex] (X1) at (0, 0) {X\textsubscript{1}};
	  \node[vertex] (Y1) at (0, -1.5) {Y\textsubscript{1}};
	  \node[vertex] (L2) at (1, 0.75) {L\textsubscript{2}};
	  \node[vertex] (B2) at (1, -0.75) {B\textsubscript{2}};
	  \node[vertex] (C2) at (1, -2.25) {C\textsubscript{2}};
	  \node[vertex] (X2) at (2, 0) {X\textsubscript{2}};
	  \node[vertex] (Y2) at (2, -1.5) {Y\textsubscript{2}};
	  \node[vertex] (L3) at (3, 0.75) {L\textsubscript{3}};
	  \node[vertex] (B3) at (3, -0.75) {B\textsubscript{3}};
	  \node[vertex] (C3) at (3, -2.25) {C\textsubscript{3}};
	  \node[vertex] (X3) at (4, 0) {X\textsubscript{3}};
	  \node[vertex] (Y3) at (4, -1.5) {Y\textsubscript{3}};

	  \node[regime, bettergreen, label={[shift={(-0.05,-0.05)}, bettergreen]\scriptsize $\tau$\textsuperscript{(1)}}] (l1) at (-1.75, 0.75) {};
	  \node[regime, bettergreen, label={[shift={(-0.05,-0.05)}, bettergreen]\scriptsize $\tau$\textsuperscript{(1)}}] (b1) at (-1.75, -0.75) {};

	  \node[regime, betterpurple, label={[shift={(-0.05,-0.05)}, betterpurple]\scriptsize $\tau$\textsuperscript{(2)}}] (c1) at (-1.75, -2.25) {};
	  \node[regime, betterpurple, label={[shift={(-0.05,-0.05)}, betterpurple]\scriptsize $\tau$\textsuperscript{(2)}}] (c2) at (0.25, -2.25) {};
	  \node[regime, betterpurple, label={[shift={(-0.05,-0.05)}, betterpurple]\scriptsize $\tau$\textsuperscript{(2)}}] (c3) at (2.25, -2.25) {};

   	  \node[regime, label={[shift={(-0.05,-0.05)}]\scriptsize $\pi$\textsubscript{1}}] (p1) at (-0.75, 0) {};
	  \node[regime, label={[shift={(-0.05,-0.05)}]\scriptsize $\pi$\textsubscript{2}}] (p2) at (1.25, 0) {};
	  \node[regime, label={[shift={(-0.05,-0.05)}]\scriptsize $\pi$\textsubscript{3}}] (p3) at (3.25, 0) {};

	  \draw[dir] (L1) to (L2);
	  \draw[dir] (L1) to (X1);
	  \draw[dir] (B1) to (B2);
	  \draw[dir] (B1) to (X1);
	  \draw[dir] (B1) to (Y1);
	  \draw[dir] (C1) to (X1);
	  \draw[bidir] (C1) to [bend right = 30] (Y1);
	  \draw[dir] (X1) to (Y1);
	  \draw[dir] (X1) to (L2);
	  \draw[dir] (X1) to (B2);

	  \draw[dir] (L2) to (L3);
	  \draw[dir] (L2) to (X2);
	  \draw[dir] (B2) to (B3);
	  \draw[dir] (B2) to (X2);
	  \draw[dir] (B2) to (Y2);
	  \draw[dir] (C2) to (X2);
	  \draw[bidir] (C2) to [bend right = 30] (Y2);
	  \draw[dir] (X2) to (Y2);
	  \draw[dir] (X2) to (L3);
	  \draw[dir] (X2) to (B3);

	  \draw[dir] (L3) to (X3);
	  \draw[dir] (B3) to (X3);
	  \draw[dir] (B3) to (Y3);
	  \draw[dir] (C3) to (X3);
	  \draw[bidir] (C3) to [bend right = 30] (Y3);
	  \draw[dir] (X3) to (Y3);

	  \draw[dir, bettergreen] (l1) to (L1);
	  \draw[dir, bettergreen] (b1) to (B1);
	  \draw[dir, betterpurple] (c1) to (C1);
	  \draw[dir, betterpurple] (c2) to (C2);
	  \draw[dir, betterpurple] (c3) to (C3);

   	  \draw[dir] (p1) to (X1);
	  \draw[dir] (p2) to (X2);
	  \draw[dir] (p3) to (X3);

	  \begin{pgfonlayer}{back}
		  \draw[fill=betterblue!25, draw = betterblue!45] \convexpath{L1, X1, C1}{\outerr mm};
		  \draw[fill=betterblue!25, draw = betterblue!45] \convexpath{L2, X2, C2}{\outerr mm};
		  \draw[fill=betterblue!25, draw = betterblue!45] \convexpath{L3, X3, C3}{\outerr mm};
		  \node[circle,fill=betterblue!65,draw=none,minimum size=2*\innerr mm] at (X1) {};
		  \node[circle,fill=betterblue!65,draw=none,minimum size=2*\innerr mm] at (X2) {};
		  \node[circle,fill=betterblue!65,draw=none,minimum size=2*\innerr mm] at (X3) {};
		  \node[circle,fill=betterred!65,draw=none,minimum size=2*\innerr mm] at (Y1) {};
		  \node[circle,fill=betterred!65,draw=none,minimum size=2*\innerr mm] at (Y2) {};
		  \node[circle,fill=betterred!65,draw=none,minimum size=2*\innerr mm] at (Y3) {};
	  \end{pgfonlayer}
  \end{tikzpicture}
  \caption{$\G^{(1)}$ and $\G^{(2)}$}
  \label{fig:sokoban source}
  \end{subfigure}\hfill\null
  \caption{Causal diagram for (\subref{fig:sokoban target}) the target task $\1T$; and (\subref{fig:sokoban source}) comparing domain discrepancies between the target task $\1T$ and source tasks $\1T^{(1)}$ and $\1T^{(2)}$. \xadd{(b) is (a) augmented by edit indicators.}}
  \label{fig:sokoban diagram}
\end{figure}

\begin{definition}[Target Task]
\label{def:task}
	A target task is a tuple $\1T = \langle \1M, \Pi, \1R \rangle$, where $\1M = \tuple{\*U, \*V, \2F, P}$ is an SCM, $\Pi$ is a policy space over actions $\*X \subseteq \*V$, and $\1R$ is a reward function over signals $\*Y \subseteq \*V$. 
\end{definition}
The goal is to find an optimal policy $\pi^* \in \Pi$ that maximizes the expected reward function $\invE{\1R(\*Y)}{\pi}$ evaluated in the underlying environment $\1M$, i.e., 
\begin{align}
\label{eq:fundamental rl eq}
\pi^* = \argmax_{\pi \in \Pi} \InvE{\1R(\*Y)}{\pi}{\1M}. 
\end{align}
When the detailed parametrization of the SCM $\1M$ is provided, the optimal policy $\pi^*$ is obtainable by applying planning algorithms, e.g., dynamic programming \citep{bellmanDynamicProgramming1966} or influence diagrams \citep{koller2003multi}. 
However, when underlying system dynamics are complex or the state-action domains are high-dimensional, it might be challenging to solve an optimal policy even with state-of-the-art planning algorithms. 
We will then consider the curriculum learning approach~\citep{trainrobo}, 
where the agent is not immediately trained in the target task but provided with a sequence of related yet simplified source tasks.
\begin{definition}[Source Task]\label{def:source_task}
	For a target task $\1T = \langle \mathcal{M}, \Pi, \mathcal{R}\rangle$, a source task $\1T^{(j)}$ is a tuple $\langle \1M^{(j)}, \Pi, \mathcal{R}, \Delta^{(j)}\rangle$ where $\1M^{(j)}$ is an SCM compatible with the same causal diagram as $\1M$, i.e., $\G_{\1M} = \1G_{\1M^{(j)}}$; a set of variables $\Delta^{(j)} \subseteq \*V$ is called \emph{edits} where there might exist a discrepancy that $f_V \neq f^{(j)}_V$ or $P(\*U_V) \neq P^{(j)}(\*U_V)$ for every $V \in \Delta^{(j)}$.
\end{definition}
In practice, source tasks are constructed from the target task by modifying parameters of the underlying structural functions $\2F$ or exogenous distributions $P(\*U)$. Consider again the Sokoban game described in \Cref{fig:intro example}. The system designer could generate a source task $\1T^{(1)}$ by changing the agent and box's initial location $L_1, B_1$. \Cref{fig:sokoban source} shows a causal diagram $\G^{(1)}$ 
\footnote{We will consistently use the superscript $(j)$ to indicate a diagram $\G^{(j)} \triangleq \G_{\1M^{(j)}}$ associated with a source task $\1T^{(j)}$. Similarly, we write $P^{(j)}(\*Y; \pi) = \Inv{\*Y}{\pi}{\1M^{(j)}}$ and $\pi^{(j)} = \argmax_{\pi \in \Pi} \InvE{\1R(\*Y)}{\pi}{\1M^{(j)}}$.} 
representing the source task $\1T^{(1)}$; $\tau^{(1)}$ is an \emph{edit indicator} representing the domain discrepancies $\Delta^{(1)}$ between the target $\1T$ and source tasks $\1T^{(1)}$. Here, arrows $\tau^{(1)} \to L_1$, $\tau^{(1)} \to B_1$ suggest that structural functions $f_{L_1}, f_{B_1}$ or exogenous distributions $P(\*U_{L_1}, \*U_{B_1})$ have been changed in the source task $\1T^{(1)}$ while other parts of the system remain the same as the target task $\1T$.


By simplifying the system dynamics, learning an optimal policy in the source task $\1T^{(j)}$ could be easier than in the target task $\1T$. The expectation here is that the optimal decision rules $\pi^{(j)}$ over some actions $\*X^{(j)} \subseteq \*X$ remain invariant across the source and target tasks. If so, we will call such source tasks as \emph{aligned}. Training in an aligned source task thus guides the agent to move toward an optimal policy $\pi^*$. For example, \Cref{fig:helpful} shows an aligned source task for the Sokoban game where the agent and box's locations are set close to the goal state. By training in the simplified task, the agent learns the optimal decision rule to push the yellow box to the goal state in this game. 

However, modifying the target task could lead to a misaligned source task whose system dynamics differ significantly from the target. Interestingly and more seriously, training in these source tasks may ``harm'' the agent's performance, resulting in suboptimal decision rules, as illustrated next. 
\begin{example}[Misaligned Source Task]
\label{exp:harmful}
Consider the Sokoban game $\1T = \tuple{\1M, \Pi, \1R}$ described in \Cref{fig:intro example}; \Cref{fig:sokoban target} shows its causal diagram $\G$. Specifically, the box color $C_i$ ($0$ for yellow, $1$ for blue) is determined by an unobserved confounder $U_i \in \{0, 1\}$ randomly drawn from a distribution $P(U_i = 1) = 3/4$. Box location $B_i$ and agent location $L_i$ are determined following system dynamics in deterministic grid worlds~\citep{gym_minigrid}. The reward signal $Y_i$ is given by,
\begin{align}
Y_i = \begin{cases}  
    10  &\mbox{if } B_i = \text{``next to goal''}\wedge X_i = \text{``push''} \wedge (U_i=0) \\
    -10 &\mbox{if } B_i = \text{``next to goal''}\wedge X_i = \text{``push''} \wedge (U_i=1)  \\
    -0.1 &\mbox{otherwise}
\end{cases}.
\end{align}
If the agent pushes the box into the goal location (top right corner in \Cref{fig:intro example}), it receives a positive reward when the box appears yellow; it gets penalized when the box appears blue. Since $C_i \gets U_i$, evaluating the conditional reward $\invE{Y_i \big| b_i, c_i}{\doo(x_i)}$ in the Sokoban environment $\1M$ gives,
\begin{align}
	&\invE{Y_i \big| B_i =  \text{``next to goal''}, C_i}{\doo\Parens{X_i = \text{``push''}}} = \begin{cases}
		10 &\mbox{if } C_i = 0\\
		-10 &\mbox{if }  C_i = 1
	\end{cases}
\end{align}
Thus, the agent should aim to push yellow boxes to the goal location in the target. 
The curriculum designer now attempts to generate a source task $\1T^{(2)}$ by fixing the box color to yellow, i.e., $C_i \gets 0$. \Cref{fig:sokoban source} shows the causal diagram $\G^{(2)}$ associated with the source environment $\1M^{(2)}$ where edit indicators $\tau^{(2)}$ denote the change in the structural function $f_{C_i}$ determining the box color $C_i$. Evaluating the conditional reward $\invE{Y_i \big|  b_i}{\doo(c_i, x_i)}$ in this manipulated environment $\1M^{(2)}$ gives
\begin{align}
	&\InvEs{Y_i \big| B_i =  \text{``next to goal''}}{\doo\Parens{C_i = 0, X_i = \text{``push''}}}{(2)} = -5.
\end{align}
Detailed computations are provided in \Cref{sec:details of exp1}. Perhaps counter-intuitively, pushing the yellow box to the goal location in the source task $\1T^{(2)}$ results in a negative expected reward. This is because box color $C_i$ is only a proxy to the unobserved $U_i$ that controls the reward. Fixing $C_i$ won't affect $Y$ but only breaks this synergy, hiding the critical information of $U_i$ from the agent. Consequently, when training in the source task $\1T^{(2)}$, the agent will learn to \emph{never push} the box even when it is next to the goal location, which is suboptimal in the target Sokoban game $\1T$. $\hfill \blacksquare$
\end{example}

\subsection{Causally Aligned Source Task}
\label{sec:design aligned source task}
\Cref{exp:harmful} suggests that naively training in a misaligned source task may lead to suboptimal performance in the target task. The remainder of this section will introduce an efficient strategy to avoid misaligned source tasks, provided with the causal knowledge of the underlying data-generating mechanisms in the environment. For a target task $\1T = \langle \1{M}, \Pi, \mathcal{R}\rangle$, let $\G$ be the causal diagram associated with $\1M$. Let $\G_{\pi}$ be an \emph{intervened} diagram obtained from $\G$ by replacing incoming arrows if action $X_i \in \*X$ with arrows from input states $\*S_i$ for every action $X_i \in \*X$. We first characterize a set of variables $\Delta^{(j)} \subseteq \*V$ amenable to editing (for short, editable states) using independence relationships between edit indicators $\tau^{(j)}$ and reward signals $\*Y$. Formally,
\begin{definition}[Editable States]
For a target task $\1T= \langle \mathcal{M}, \Pi, \mathcal{R}\rangle$, let $\G$ be a causal diagram of $\1M$ and $\*X^{(j)} \subseteq \*X$ be a subset of actions. A set of variables $\Delta^{(j)} \subseteq \*V \setminus \*X^{(j)}$ is \emph{editable} w.r.t $\*X^{(j)}$ if and only if $\forall X_i \in \*X^{(j)}$, the following independence holds in the intervened diagram $\G_{\pi}$,
	\begin{align}
		\Parens{\tau^{(j)} \independent \*Y \cap \De(X_i) \mid X_i, \*S_{i}},
	\end{align}
where $\tau^{(j)}$ is the set of added edit indicators pointing into nodes in $\Delta^{(j)}$.
\label{def:admissible states}
\end{definition}
For example, consider again the Sokoban game described in \Cref{exp:harmful}. The initial agent and box's position  $\Delta^{(1)} = \{B_1, L_1\}$ is editable with regard to all actions $\*X$ following \Cref{def:admissible states}. Precisely, in the augmented diagram $\G^{(1)}$ of \Cref{fig:sokoban source}, for every action $X_i \in \*X$, the edit indicators $\tau^{(1)}$ are d-separated the reward signals $\*Y\cap \De(X_i) = \{Y_i,\dots, Y_H \}$ given input states $\{L_i, B_i, C_i\}$. On the other hand, the set of box color variables $\Delta^{(2)} = \{C_1, \dots, C_H\}$ are not editable w.r.t. actions $\*X$ since in the augmented diagram $\1G^{(2)}$ of \Cref{fig:sokoban source}, for every action $X_i \in \*X$, there exists an active path between edit indicators $\tau^{(2)}$ and reward signals $\{Y_i,\dots, Y_H \}$ given action $X_i$ and input states $\{L_i, B_i, C_i\}$, violating the criterion given by \Cref{def:admissible states}.

 For a fixed policy $\pi \in \Pi$, for any subset $\*S \subseteq \*V$, we denote by $\D^{(j)}(\*S; \pi) = \{\forall \*s \in \D(\*S) \mid \Inv{\*s}{\pi}{\1M} > 0\}$ the set of \emph{reachable} values of $\*S$, which is the set of states that are possible to reach in a source task $\1T^{(j)}$ under intervention $\doo(\pi)$. The following proposition establishes that modifying functions and distributions over a set of editable states $\Delta^{(j)}$ leads to an aligned source task.
\begin{theorem}[Causally Aligned Source Task]
For a target task $\1T = \langle \1M, \Pi, \1R\rangle$, let $\1T^{(j)} = \langle \mathcal{M}^{(j)}, \Pi, \mathcal{R}, \Delta^{(j)}\rangle$ be a source task of $\1T$ by modifying states $\Delta^{(j)} \subseteq \*V$. If $\Delta^{(j)}$ is editable w.r.t some actions $\*X^{(j)} \subseteq \*X$, then for every action $X_i \in \*X^{(j)}$, 
\begin{align}
	\pi^*_i(X_i \mid \*s_i) = \pi^{(j)}_i(X_i \mid \*s_i), \;\; \forall \*s_i \in \D^{(j)}(\*S_i; \pi^{(j)}) \cap \D(\*S_i;\pi^*)
\end{align}
where $\pi^*, \pi^{(j)} \in \Pi$ are optimal policies in the target $\1T$ and source $\1T^{(j)}$ tasks, respectively.
\label{thm:aligned source task}
\end{theorem}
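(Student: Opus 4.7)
The plan is to reduce the theorem to a Bellman-style statement: for each action $X_i \in \*X^{(j)}$, the optimal decision rule at a reachable state $\*s_i$ is an argmax over $X_i$ of an optimal Q-function, and this Q-function takes identical values in the target SCM $\1M$ and in the source SCM $\1M^{(j)}$. Once this is established, $\pi^*_i$ and $\pi^{(j)}_i$ can be chosen to agree on the reachable intersection by matching argmax sets.

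First I would translate the editability condition into a distributional equality. Viewing $\tau^{(j)}$ as a selection variable whose value distinguishes $\1M$ (say $\tau^{(j)} = 0$) from $\1M^{(j)}$ ($\tau^{(j)} = 1$), the d-separation $\Parens{\tau^{(j)} \independent \*Y \cap \De(X_i) \mid X_i, \*S_i}_{\G_\pi}$ in the intervened graph implies
\[
\Inv{\*Y \cap \De(X_i) \mid X_i, \*S_i}{\pi}{\1M} = \Inv{\*Y \cap \De(X_i) \mid X_i, \*S_i}{\pi}{\1M^{(j)}}
\]
for every $\pi \in \Pi$ and every $(X_i,\*S_i)$ reachable under $\pi$ in both SCMs. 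This step applies the selection-diagram semantics: conditioning on the two values of $\tau^{(j)}$ corresponds exactly to evaluating the same interventional distribution in $\1M$ and $\1M^{(j)}$, and the d-separation says $\tau^{(j)}$ carries no residual information about the relevant rewards once $X_i, \*S_i$ are fixed.

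Next I introduce the Q-function
\[
Q^{\pi}_i(x, \*s_i;\1M) = \InvE{\1R(\*Y \cap \De(X_i)) \mid X_i = x, \*S_i = \*s_i}{\pi_{i+1:H}}{\1M}
\]
and use the distributional equality above to obtain $Q^{\pi}_i(x,\*s_i;\1M) = Q^{\pi}_i(x,\*s_i;\1M^{(j)})$ for every continuation $\pi_{i+1:H}$. Maximizing both sides over $\pi_{i+1:H}$ preserves the equality, so the optimal Q-functions $Q^{*}_i(x,\*s_i;\1M) = \max_{\pi_{i+1:H}} Q^{\pi}_i(x,\*s_i;\1M)$ coincide across $\1M$ and $\1M^{(j)}$. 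I then invoke Bellman optimality: because the rewards in $\*Y \setminus \De(X_i)$ are non-descendants of $X_i$, their distribution is unaffected by the choice of $\pi_i$, so the $\pi_i$-optimization decouples and any optimal policy must place mass only on the argmax set of $Q^{*}_i(\cdot,\*s_i)$. Equality of the optimal Q-functions forces those argmax sets to agree, and hence $\pi^*_i$ and $\pi^{(j)}_i$ can be chosen to match on $\D^{(j)}(\*S_i;\pi^{(j)}) \cap \D(\*S_i;\pi^*)$.

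The main obstacle I anticipate is the first step: cleanly converting the graphical d-separation in $\G_\pi$ into the distributional equality that must hold for every policy $\pi \in \Pi$. This requires carefully combining the intervention semantics that replaces incoming arrows of each $X_k$ with arrows from $\*S_k$, the placement of the edit indicators $\tau^{(j)}$ as parents of $\Delta^{(j)}$, and a d-separation argument over the augmented diagram. A secondary subtle point is the reachability qualifier: the conditional distributions above are only well-defined on states visited under the relevant policy in its own SCM, which is exactly what the intersection $\D^{(j)}(\*S_i;\pi^{(j)}) \cap \D(\*S_i;\pi^*)$ captures, and the argmax-matching argument must be restricted to this set to avoid conditioning on zero-probability events.
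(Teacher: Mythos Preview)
Your reduction to a conditional-distribution equality is the same first move the paper makes, and your use of the selection/edit indicator $\tau^{(j)}$ to bridge $\1M$ and $\1M^{(j)}$ is exactly right. Where your plan diverges from the paper—and where it has a real gap—is the Bellman step. You write the Q-function as depending only on the continuation $\pi_{i+1:H}$ and then assert that any optimal policy must put mass on $\argmax_x Q^*_i(\cdot,\*s_i)$ with $Q^*_i=\max_{\pi_{i+1:H}}Q^{\pi}_i$. That identity is valid precisely when the conditional law of $\*Y\cap\De(X_i)$ given $(X_i,\*S_i)$ does not depend on the earlier decision rules $\pi_{1:i-1}$, i.e., when the task is soluble in the sense of \Cref{def:soluble task}. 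The theorem, however, is stated without any solubility assumption, and the d-separation you invoke lives in $\G_\pi$, the \emph{fully} intervened diagram; it tells you $Q^{\pi}_i$ agrees across $\1M$ and $\1M^{(j)}$ for each fixed full policy $\pi$, not that the object $\max_{\pi_{i+1:H}}Q^{\pi}_i$ is well-defined independently of $\pi_{1:i-1}$ or that it equals the optimal-policy Q-value.

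Concretely, from the equality-for-every-$\pi$ you can conclude $Q^{\pi^{(j)}}_i(x,\*s_i;\1M)=Q^{\pi^{(j)}}_i(x,\*s_i;\1M^{(j)})$, so $\pi^{(j)}_i$ is the argmax of $Q^{\pi^{(j)}}_i(\cdot,\*s_i;\1M)$. But you need it to coincide with the argmax of $Q^{\pi^{*}}_i(\cdot,\*s_i;\1M)$, and $\pi^{(j)}\neq\pi^{*}$ a priori. The paper closes this gap by (i) passing to the \emph{relevance graph} over actions, collapsing strongly connected components into single ``super-actions'' so that a backward-induction order exists even without solubility; (ii) proving a propagation lemma (their \Cref{thm:expanded criterion}) showing that editability of $\Delta^{(j)}$ w.r.t.\ $X_i$ implies editability w.r.t.\ every action preceding $X_i$ in that order; and (iii) running an induction along the relevance order to conclude $\pi^*_{\prec X_i}=\pi^{(j)}_{\prec X_i}$ on reachable states, after which the conditional reward distributions—and hence the argmax sets—match. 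Your outline is salvageable if you either explicitly assume solubility or insert this induction and the accompanying propagation lemma; without one of these, the ``max over continuations'' step does not go through.
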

\Cref{thm:aligned source task} implies that whenever states $\Delta^{(j)}$ is editable w.r.t. some actions $\*X^{(j)}$, one could always construct an aligned source task $\1T^{(j)}$ such that the optimal decision rules $\pi^*$ over $\*X^{(j)}$ is invariant across the target $\1T$ and source $\1T^{(j)}$ tasks. Consequently, one could \emph{transport} these optimal decision rules trained in the source task $\1T^{(j)}$ without harming the agent's performance in the target domain $\1T$. \footnote{Causal aligned source tasks (\Cref{thm:aligned source task}) and editable states (\Cref{def:admissible states}) are related to the concept of \emph{transportability} in causal inference literature \citep{bareinboimCausalInferenceDatafusion2016}, which generalizes estimation of unknown causal effects from different domains. Here we study the generalizability of an optimal decision policy.} For example, in the Sokoban game of \Cref{exp:harmful}, since initial states $\Delta^{(1)} = \{B_1, L_1\}$ is editable w.r.t. actions $\*X$, moving the agent and box's location leads to an aligned source task, which allows the agent to learn how to behave optimally when getting closer to the goal state. However, the performance guarantee in \Cref{thm:aligned source task} does not necessarily hold when states $\Delta^{(j)}$ are not editable. For instance, recall that $\Delta^{(2)} = \{C_1, \dots, C_H\}$ are not editable in the Sokoban game. Modifying the box's color could lead to a misaligned source task $\1T^{(2)}$. An agent trained in this source task could pick up undesirable behaviors, as demonstrated in \Cref{exp:harmful}.

\begin{wrapfigure}[12]{r}{0.60\textwidth} 
\vspace{-0.2in}
    \begin{algorithm}[H]
        \SetAlgoNoEnd
        \SetCustomAlgoRuledWidth{\textwidth}  
        \caption{\textsc{FindMaxEdit}}
        \label{alg:single stage edit search}
        \KwInput{A causal diagram $\1G_{\pi}$ and a set of actions, $\*X^{(j)}$.}
        \KwOutput{The maximal editable states $\Delta^{(j)}$ w.r.t $\*X^{(j)}$.}
        Let $\Delta^{(j)} \leftarrow \emptyset$\;
        \For{$V \in \*V\setminus \*X \cup \*Y$}{
			$\Delta^{(j)} \leftarrow \Delta^{(j)} \cup \{V\}$\;
        	\For{every $X_i \in \*X^{(j)}$}{
    			\If{$\Parens{\tau^{(j)} \notindependent \*Y \cap \De(X_i) \mid X_i, \*S_{i}} \text{ in } \1G_{\pi}$}{
					Remove $V$ from $\Delta^{(j)}$ and break;
    			}
            }
        } 
        \Return $\Delta^{(j)}$\;
    \end{algorithm}
\end{wrapfigure}

\Cref{alg:single stage edit search} describes an algorithmic procedure, \textsc{FindMaxEdit}, to find a maximal editable set $\Delta^{(j)}$ in a causal diagram $\G$ w.r.t. a set of actions $\*X^{(j)} \subseteq \*X$. A set of editable states $\Delta^{(j)}$ is maximal w.r.t. $\*X^{(j)}$ if there is no other editable states $\Delta^{(j)}_*$ strictly containing $\Delta^{(j)}$. We always prefer a maximal editable set since it offers the maximum freedom to simplify the system dynamics in the target task. Particularly, \textsc{FindMaxEdit} iteratively adds endogenous variables $\*V \setminus (\*X \cup \*Y)$ to the editable states $\Delta^{(j)}$ and test the independence criterion in \Cref{def:admissible states}. This procedure continues until it cannot add any more endogenous variables. Evidently, \textsc{FindMaxEdit} returns a maximal editable set $\Delta^{(j)}$ w.r.t. $\*X^{(j)}$. A natural question arising at this point is whether the ordering of endogenous variables $V$ changes the output. Fortunately, the next proposition shows that this is not the case.
\begin{theorem}
	\label{thm:ase correctness}
	For a target task $\1T= \langle \mathcal{M}, \Pi, \mathcal{R}\rangle$, let $\G_{\pi}$ be an intervened causal diagram of $\1M$ and let $\*X^{(j)} \subseteq \*X$ be a subset of actions. $\textsc{FindMaxEdit}\Parens{\G_{\pi}, \*X^{(j)}}$ returns a maximal editable set $\Delta^{(j)}$ w.r.t actions $\*X^{(j)}$; moreover, such a maximal set $\Delta^{(j)}$ is unique.
\end{theorem}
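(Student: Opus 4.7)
The plan is to reduce the theorem to two closure properties of the family of editable sets—union closure and subset closure—from which both uniqueness of the maximum and the correctness of the greedy algorithm follow via a simple invariant argument.

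First I would establish \emph{union closure}: if $\Delta_1$ and $\Delta_2$ are each editable w.r.t.\ $\*X^{(j)}$, then so is $\Delta_1 \cup \Delta_2$. The proof hinges on the fact that edit indicators appear only as sources in the augmented diagram (they have no incoming edges), so adding a new outgoing arrow $\tau^{(j)} \to V'$ (i) cannot alter the collider/non-collider status of any internal node on a pre-existing path, and (ii) cannot enlarge the descendant set of any non-$\tau^{(j)}$ node. Given any path $p$ from $\tau^{(j)}$ to a node in $\*Y \cap \De(X_i)$ in the diagram augmented with $\Delta_1 \cup \Delta_2$, its first edge has the form $\tau^{(j)} \to V$ for some $V$; without loss of generality $V \in \Delta_1$. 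Because a path visits $\tau^{(j)}$ only once (at its endpoint), the remainder of $p$ lies entirely in the original diagram and corresponds to a path with identical internal structure in the diagram augmented only with $\Delta_1$. Since $\Delta_1$ is editable that path is blocked by $\{X_i,\*S_i\}$, and by (i)--(ii) the block transfers to $p$. The symmetric and easier \emph{subset closure} statement—every subset of an editable set is editable—follows from exactly the same observation.

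Uniqueness is then immediate: if $\Delta_1$ and $\Delta_2$ were both maximal editable sets, union closure would make $\Delta_1 \cup \Delta_2$ editable, and maximality would force $\Delta_1 = \Delta_1 \cup \Delta_2 = \Delta_2$. Call this unique maximum $\Delta^*$.

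For the algorithmic correctness I would use induction on the outer loop of \textsc{FindMaxEdit}, maintaining the invariant that the current working set $\Delta^{(j)}$ is both editable and contained in $\Delta^*$. The base case $\Delta^{(j)}=\emptyset$ is trivial. In the inductive step, whenever $V$ passes the test the new set is editable by construction; then union closure with $\Delta^*$, together with maximality, forces $V\in\Delta^*$, so the invariant is preserved. Upon termination the invariant yields an editable $\Delta^{(j)}\subseteq\Delta^*$. Conversely, for any candidate $V\in\Delta^*\setminus\Delta^{(j)}$, the working set at the moment $V$ was considered was a subset of $\Delta^*$; by subset closure applied to $\Delta^*$, adding $V$ would have preserved editability, contradicting the algorithm's rejection of $V$. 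Hence $\Delta^{(j)}=\Delta^*$, and this conclusion is manifestly independent of the enumeration order.

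The step I anticipate as the main obstacle is union closure, specifically the careful bookkeeping of $d$-connection under the new $\tau^{(j)}$-arrows. The delicate point is to rule out the scenario where adding $\tau^{(j)}\to V'$ covertly rearranges the descendant set or collider structure relevant to some pre-existing path; this is precisely what observations (i) and (ii) above preclude, together with the fact that $\tau^{(j)}$ is never an internal node of any path considered. Once this closure step is in place, subset closure, uniqueness, and the greedy analysis all slot in with little extra work.
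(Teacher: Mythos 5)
Your proposal is correct and follows essentially the same route as the paper's proof: both rest on the observation that editability is preserved under unions (equivalently, is decided variable-by-variable, since the added indicator arrows are source nodes that cannot interact with one another), which immediately yields uniqueness of the maximal set and correctness of the greedy scan irrespective of enumeration order. You merely make explicit, via the path-blocking argument and the loop invariant, the closure property that the paper's proof asserts without detailed justification.
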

Let $n$ and $m$ denote the number of nodes and edges in the intervened diagram $\1G_{\pi}$ and let $d$ be the number of actions $\*X$. Since testing d-separation has a time complexity of $\mathcal{O}(n+m)$, \textsc{FindMaxEdit} has a time complexity of $\mathcal{O}\Parens{d(n+m)}$. We also provide other algorithmic procedures for directly deciding a set's editability and constructing editable sets for a target task $\1T$ in \Cref{sec:variations of ase}.

\section{Curriculum Learning via Causal Lens}


Once a collection of source tasks is constructed, the system designer could organize them into an ordered list, called a \emph{curriculum}, as defined next:
\begin{definition}[Curriculum]\label{def:curriculum}
	For a target task $\1T = \langle \mathcal{M}, \Pi, \mathcal{R}\rangle$, a curriculum $\mathcal{C}$ for $\1T$ is a sequence of source tasks $\{\1T^{(j)} \}_{j=1}^N$, where $\1T^{(j)} = \tuple{ \mathcal{M}^{(j)}, \Pi, \mathcal{R}, \Delta^{(j)}}$.
\end{definition}

\begin{wrapfigure}[11]{r}{0.52\textwidth} 
\vspace{-0.25in}
    \begin{algorithm}[H]
        \SetAlgoNoEnd
        \SetCustomAlgoRuledWidth{\textwidth}  
        \caption{\textsc{Curriculum Learning}}
        \label{alg:curriculum learning}
        \KwInput{A curriculum $\1C$.}
        \KwOutput{A policy $\pi^{(N)} \in \Pi$.}
        Initialize a baseline policy $\pi^{(0)}$\;
        \For{$j = 1,...,N$}{
            Update a policy $\pi^{(j)}$ from $\pi^{(j-1)}$ 
            such that
            \begin{align}
            	\pi^{(j)} = \argmax_{\pi \in \Pi} \InvE{\1R(\*Y)}{\pi}{\1M^{(j)}}
            \end{align}
        } 
        \Return $\pi^{(N)}$\;
    \end{algorithm}
\end{wrapfigure} 
For instance, \Cref{fig:curriculum} describes a curriculum in the Sokoban game where the agent and the box are placed increasingly further away from the goal location. Given a curriculum $\1C$, a typical curriculum learning algorithm trains the agent sequentially in each source task, following the curriculum's ordering. \Cref{alg:curriculum learning} shows the pseudo-code describing this training process. It first initializes an arbitrary baseline policy $\pi^{(0)}$. For every source task $\1T^{(j)} \in \1C$, the algorithm updates the current policy $\pi^{(j-1)}$ such that the new policy $\pi^{(j)}$ is optimal in the source task $\1T^{(j)}$. This step could be performed using a standard gradient-based algorithm, e.g., the policy gradient \citep{rlbook}. 
The expectation is that, as the agent picks up more skills in the source tasks, it could consistently improve its performance in the target task or at least not regress. 
\begin{definition}[Causally Aligned Curriculum]
\label{def:causally aligned curriculum}
    For a target task $\1T = \langle \mathcal{M}, \Pi, \mathcal{R}\rangle$, let $\1C = \{\1T^{(j)} \}_{j=1}^N$ be a curriculum for $\1T$.  
    Curriculum $\1C$ is said to be causally aligned with $\1T$ if for every $j = 1, \dots, N-1$, 
    the set of invariant optimal decision rules across the source task and the target task expands, i.e., 
    \begin{align}
        \Parens{\pi^{(j)}\cap \pi^*} \subseteq \Parens{\pi^{(j+1)}\cap \pi^*},
    \end{align}
    where $\pi^* \in \Pi$ is an optimal policy in the target task $\1T$.
\end{definition}
A naive approach to construct a causally aligned curriculum is to (1) construct a set of aligned source tasks by modifying editable states (\Cref{thm:aligned source task}) and (2) organize these tasks in an arbitrary ordering. However, the following example shows this is not a viable option.
\begin{example}[Overwriting in Curriculum Learning]
\label{exp:overwriting}
Consider a two-stage target task where action $X_1$ takes input $H$ and $X_2$ takes input $Z$. The task SCM is, $H = U_H, Z = \neg X_1 \oplus U_Z$, $Y_1 = 0.5 * (H \oplus X_1)$, $Y_2 = \neg H \oplus X_2 \wedge Z$ where $P(U_Z = 1) = 1/2$, $P(U_H = 1) = 1/10$. Other than the reward $Y_1, Y_2$, all other variables are binary. The optimal policy for the target task is $\pi^*(X_1=\neg H|H) = 1$,  $\pi^*(X_2=0|Z) = 1$. We create two source tasks. For $\1T^{(1)}$, let $P(U_H = 1) = 9/10$ while other parts stay the same as target task $\1T$. For $\1T^{(2)}$, let $Z = \neg X_1$ while other parts stay the same as target task $\1T$. From the causal diagram, we see that $\Delta^{(1)} = \{H\}$ is editable w.r.t $\*X^{(1)} = \{X_1\}$ and $\Delta^{(2)} = \{Z\}$ is editable w.r.t $\*X^{(2)} = \{X_2\}$. 

Now if the agent is trained in a curriculum $\1C = \Braces{\1T^{(1)},\ \1T^{(2)}}$, its target task performance will deteriorate instead of improving. To witness, the optimal policy for $X_2$ in $\1T^{(1)}$ is $\pi^{(1)}(X_2=1|Z) = 1$ and the optimal policy for $X_1$ in $\1T^{(2)}$ is $\pi^{(2)}(X_1 = 0|H) = 1$. After training in $\1T^{(1)}$, $\pi^{(1)}$ has an expected target task reward of $0.55$ since $\pi^{(1)}(X_2=1|Z)$ is not optimal in the target yet. So, the agent proceeds to train in $\1T^{(2)}$. It will learn the optimal target policy for $X_2$, $\pi^*(X_2=0|Z) = 1$. But in the mean time, optimal policy of $X_1$ learned from $\1T^{(1)}$, $\pi^*(X_1=\neg H|H) = 1$, is also overwritten by $\pi^{(2)}$. The agent will only receive $0.5$ in the target task, which is even worse than before training in $\1T^{(2)}$. This suggests that curriculum $\1C$ is not causally aligned. 
%
%
$\hfill \blacksquare$
\end{example}
\begin{wrapfigure}[8]{r}{0.3\linewidth}
\centering
\captionsetup{aboveskip=0.5\normalbaselineskip}
\vspace{-0.2in}
  \begin{tikzpicture}
	  \node[draw, inner sep=0.5mm, align=center, text width=0.9cm] (T1) at (0, 0) {$\1T^{(1)}: \pi_1^*, \pi_2$};
	  \node[draw, inner sep=0.5mm, align=center, text width=0.9cm] (T2) at (3, 0) {$\1T^{(2)}:  \pi_1, \pi_2^*$};

	  \draw[dir] (T1) to [bend left = 30] node [anchor=center, above] {\tiny Forget $\pi_1^*$} (T2);
	  \draw[dir] (T2) to [bend left = 30] node [anchor=center, below] {\tiny Forget $\pi_2^*$} (T1);
	\end{tikzpicture}
  \caption{Policy overwriting described in \Cref{exp:overwriting}.}
  \label{fig:example2}
\end{wrapfigure}
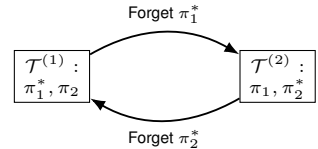
In the above example, the agent fails to learn an optimal policy due to ``policy overwriting''. \Cref{fig:example2} provides a graphical representation of this phenomenon. Particularly, each source task $\1T^{(1)},\1T^{(2)}$ covers one of the optimal decision rules over action $X_1, X_2$, respectively. An agent trained in one of the source tasks, say $\1T^{(1)}$, learns the optimal decision rule $\pi^*_1$ for action $X_1$, but forgets the decision rule $\pi^*_2$ for the other action $X_2$ learned previously in $\1T^{(2)}$. The same overwriting also occurs when the agent moves from task $\1T^{(2)}$ to $\1T^{(1)}$. This means that regardless of how the system designer orders the curriculum, e.g., $\1C = \Braces{\1T^{(1)},\ \1T^{(2)},\ \1T^{(1)}, \ \1T^{(2)}, \dots}$, the agent will always forget useful skills it picked up from previous source tasks, thus making it unable to achieve satisfactory performance in the target task. This example implies that there are more conditions for a curriculum to be ``causally aligned''.

\subsection{Designing Causally Aligned Curriculum}
We will next introduce a novel algorithmic procedure to construct a causally aligned curriculum while avoiding the issue of overwriting. We will focus on a general class of \emph{soluble} target tasks, which generalizes the \emph{perfect recall} criterion \citep{PGMbook} in the planning/decision-making literature~\citep{lauritzen2001representing}.
\begin{definition}[Soluble Target Task]
\label{def:soluble task}
	A target task $\1T = \langle \1M, \Pi, \1R \rangle$ is soluble if whenever $j < i$, $(\*Y\cap \De(X_i) \independent \pi_j |\*S_i, X_i)$ in $\1G_{\pi}$, where $\pi_j$ is a newly added regime node pointing to $X_j$.
\end{definition}
In words, \Cref{def:soluble task} says that for a soluble target task $\1T$, for every action $X_i \in \*X$, the input states $\*S_i$ summarizes all the states and actions' history $\*S_1, \dots, \*S_{i-1}, X_1, \dots, X_{i-1}$. If this is the case, an optimal policy $\pi^*$ for task $\1T$ is obtainable by solving a series of dynamic programs \citep{lauritzen2001representing,koller2003multi}. For instance, the Sokoban game $\1T$ graphically described in \Cref{fig:sokoban target} is soluble. For every time step $i = 1, \dots, H$, given input states $\*S_i = \{L_i, B_i, C_i\}$ and action $X_i$, regime variables $\pi_1, \dots, \pi_{i-1}$ are d-separated from subsequent reward signals $Y_i, \dots, Y_H$. 
\begin{theorem}[Causally Aligned Curriculum]
\label{thm:causally aligned curriculum}
For a soluble target task $\1T = \langle \mathcal{M}, \Pi, \mathcal{R}\rangle$, a curriculum $\1C = \{\1T^{(j)}\}_{j=1}^N$ is causally aligned if the following conditions hold, 
\begin{enumerate}[label=(\roman*), leftmargin=20pt, topsep=0pt, parsep=0pt]
	\item Every source task $\1T^{(j)} \in \1C$ is causally aligned w.r.t. actions $\*X^{(j)}$ (\Cref{def:admissible states});
	\item For every $j = 1, \dots, N - 1$, actions $\*X^{(j)} \subseteq \*X^{(j+1)}$. 
\end{enumerate}
\end{theorem}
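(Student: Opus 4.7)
The strategy is to combine solubility (which turns policy optimization into a per-action problem) with the nesting condition (which ensures previously matched rules are never displaced). Write $\pi^{(j)}_i$ for the decision rule that $\pi^{(j)}$ assigns to action $X_i$, and $\pi^*_i$ analogously for a target-optimal policy $\pi^*$.

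First, I would exploit solubility (\Cref{def:soluble task}) to reduce policy optimization to a backward-induction problem over the action sequence. Concretely, the d-separation $(\*Y \cap \De(X_i) \independent \pi_j \mid \*S_i, X_i)$ in $\G_{\pi}$ for all $j < i$ means the optimal decision rule at $X_i$ depends only on its input states $\*S_i$ and the joint distribution of downstream reward signals $\*Y \cap \De(X_i)$ given its input, not on how earlier decision rules were set. Because every source task $\1T^{(j)}$ shares the causal diagram with $\1T$, the same per-action decomposition applies throughout the curriculum.

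Second, I would invoke \Cref{thm:aligned source task} at each stage to relate source-task and target-task optima. By condition (i), for each $j$ the editable set $\Delta^{(j)}$ is editable with respect to $\*X^{(j)}$, so \Cref{thm:aligned source task} yields a target-optimal $\pi^*$ such that $\pi^{(j)}_i = \pi^*_i$ on $\D^{(j)}(\*S_i;\pi^{(j)})\cap \D(\*S_i;\pi^*)$ for every $X_i \in \*X^{(j)}$. The key upgrade needed is to produce a single target-optimal witness $\pi^*$ valid across all stages. Here I would construct $\pi^*$ coactively with the curriculum: for each action $X_i$, define $\pi^*_i$ to coincide with $\pi^{(j_i)}_i$ on reachable inputs at the smallest $j_i$ with $X_i \in \*X^{(j_i)}$, and extend arbitrarily elsewhere. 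Solubility guarantees that any rule attaining the optimal downstream expected reward is target-optimal at $X_i$, so this assignment extends to a bona fide optimal policy in $\Pi$.

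Third, I would use the nesting condition (ii) together with \Cref{alg:curriculum learning}'s incremental update to argue monotonicity. Because $\*X^{(j)} \subseteq \*X^{(j+1)}$, applying \Cref{thm:aligned source task} at stage $j+1$ yields $\pi^{(j+1)}_i = \pi^*_i$ for every $X_i \in \*X^{(j+1)}$, in particular for $X_i \in \*X^{(j)}$. For actions $X_i \notin \*X^{(j+1)}$ on which $\pi^{(j)}_i$ already equaled $\pi^*_i$, the ``update from $\pi^{(j)}$'' step in \Cref{alg:curriculum learning} preserves those rules, since replacing $\pi^{(j+1)}_i$ by $\pi^*_i$ cannot decrease the source-task expected reward by the per-action Q-value argument from solubility combined with the editability of $\Delta^{(j+1)}$. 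Hence the set of actions on which $\pi^{(j)}$ agrees with this fixed $\pi^*$ is nondecreasing in $j$, which is exactly $\Parens{\pi^{(j)} \cap \pi^*} \subseteq \Parens{\pi^{(j+1)} \cap \pi^*}$ in \Cref{def:causally aligned curriculum}.

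The principal obstacle is the ``single $\pi^*$ witness'' issue: \Cref{thm:aligned source task} only guarantees a per-stage witness, so I must rule out the case where two stages force incompatible target-optimal rules for the same action. I expect to handle this by combining solubility's uniqueness-up-to-ties of the optimal Q-value with the non-forgetting update rule: ties can always be broken consistently across stages because any tie-breaking choice remains simultaneously optimal at every later stage where that action is reoptimized, and remains untouched at stages where it is not.
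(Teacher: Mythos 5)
Your core route is the same one the paper takes: condition (i) together with \Cref{thm:aligned source task} gives invariance of the optimal decision rules over $\*X^{(j)}$ between $\1T^{(j)}$ and $\1T$, and condition (ii)'s nesting $\*X^{(j)} \subseteq \*X^{(j+1)}$ then makes the set of guaranteed-invariant rules expand, which is exactly the containment required by \Cref{def:causally aligned curriculum}; the paper's own proof is essentially these few lines. Two remarks on where you deviate. First, your concern about exhibiting a \emph{single} target-optimal witness $\pi^*$ across all stages is a legitimate refinement that the paper's proof glosses over, and your construction (fix $\pi^*_i$ from the first stage covering $X_i$ and use solubility to certify per-action optimality) is a reasonable way to discharge it. Second, your third step imports \Cref{alg:curriculum learning}'s training dynamics and a ``non-forgetting'' property of the update; this is both out of scope and unsupported. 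In \Cref{def:causally aligned curriculum}, $\pi^{(j)}$ denotes an optimal policy of the source task $\1T^{(j)}$ itself (the argmax fixed in the footnote of the source-task section), not the policy produced after training on stage $j$, so the monotonicity claim is purely structural and no statement about the optimizer preserving untouched decision rules is available or needed --- with function approximation such preservation generally fails, so if your argument truly relied on it, that would be a gap. Fortunately it does not: for every $X_i \in \*X^{(j)} \subseteq \*X^{(j+1)}$, both $\pi^{(j)}_i$ and $\pi^{(j+1)}_i$ coincide with $\pi^*_i$ on the relevant reachable inputs by \Cref{thm:aligned source task}, and that alone yields the expansion of the invariant rule set; incidental agreements on actions outside $\*X^{(j)}$ are not part of what the theorem asserts, so you can simply drop that portion of the argument.
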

Consider again the Sokoban game described in \Cref{fig:sokoban target}. Let $\1C = \{\1T^{(j)}\}_{j=1}^H$ be a curriculum such that for every source task $\1T^{(j)}$ is obtained by editing the agent and box's location $\Delta^{(j)} = \{L_i, B_i\}$ at time step $i = H - j + 1$. We now examine conditions in \Cref{thm:causally aligned curriculum} and see if $\1C$ is causally aligned. First, Condition (i) holds since every source task $\1T^{(j)}$ is causally aligned w.r.t. actions $\*X^{(j)} = \{X_{H-j+1}, \dots, X_H\}$ following discussion in the previous section. Also, Condition (ii) holds since for every $j = 1, \dots, H - 1$, actions $\*X^{(j)} \subseteq \*X^{(j+1)}$. This implies that one could construct a causally aligned curriculum in the Sokoban game by repeatedly editing the agent and box' location following a reversed topological ordering; \Cref{fig:curriculum} describes such an example.

\begin{wrapfigure}[12]{r}{0.5\textwidth} 
\vspace{-0.2in}
    \begin{algorithm}[H]
        \SetAlgoNoEnd
        \SetCustomAlgoRuledWidth{\textwidth}  
        \caption{\textsc{FindCausalCurriculum}}
        \label{alg:construct curriculum}
        \KwInput{A target task $\1T$, a causal diagram $\1G_{\pi}$}
        \KwOutput{A causally aligned curriculum $\1C$}
        Let $\1C \leftarrow \emptyset$\;
        \For{$j = H, \dots, 1$}{
        	Let $\*X^{(j)} \gets \{X_j, \dots, X_H\}$\;
			Let $\Delta^{(j)} \leftarrow \textsc{FindMaxEdit}(\1G_{\pi},\*X^{(j)})$\;
        	Let $\1T^{(j)} \gets \textsc{Gen}(\1T,\Delta^{(j)})$\;
        	Let $\1C = \1C \cup \{\1T^{(j)}\}$\;
       	} 
        \Return $\1C$\;
    \end{algorithm}
\end{wrapfigure}
The idea in \Cref{thm:causally aligned curriculum} suggests a natural procedure for constructing a causally aligned curriculum, which is implemented in~\textsc{FindCausalCurriculum} (\Cref{alg:construct curriculum}). Particularly, it assumes access to a curriculum generator $\textsc{Gen}(\1T, \Delta^{(j)})$ which generates a source task $\1T^{(j)}$ by editing a set of states $\Delta^{(j)} \subseteq \*V$ in the target task $\1T$. It follows a reverse topological ordering over actions $\*X = \{X_1, \dots, X_H\}$. For every step $j = H, \dots, 1$, the algorithm call the subroutine \textsc{FindMaxEdit} (\Cref{alg:single stage edit search}) to find a set of editable states $\Delta^{(j)}$ w.r.t. actions $\*X^{(j)} = \{X_j, \dots, X_H\}$. It then calls the generator \textsc{Gen} to generate a source task $\1T^{(j)}$ by editing states $\Delta^{(j)}$ . The conditions in \Cref{thm:causally aligned curriculum} ensure that \Cref{alg:construct curriculum} returns a causally aligned curriculum.
\begin{corollary}
	\label{corol:construct curriculum}
	For a soluble target task $\1T= \langle \mathcal{M}, \Pi, \mathcal{R}\rangle$, let $\1G_{\pi}$ be an intervened causal diagram of $\1M$. $\textsc{FindCausalCurriculum}\Parens{\1T, \1G_{\pi}}$ returns a causally aligned curriculum.
\end{corollary}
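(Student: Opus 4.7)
My plan is to directly verify the two sufficient conditions of \Cref{thm:causally aligned curriculum} for the curriculum $\1C$ produced by \textsc{FindCausalCurriculum}, at which point the corollary follows immediately since $\1T$ is soluble by hypothesis.

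First, I would handle condition~(i): that each source task in $\1C$ is causally aligned w.r.t.\ its action set. Fix an iteration $j \in \{H, \ldots, 1\}$ of the main loop. The algorithm sets $\*X^{(j)} = \{X_j, \ldots, X_H\}$, calls $\textsc{FindMaxEdit}(\1G_\pi, \*X^{(j)})$ to obtain $\Delta^{(j)}$, and generates $\1T^{(j)} = \textsc{Gen}(\1T, \Delta^{(j)})$. By \Cref{thm:ase correctness}, $\Delta^{(j)}$ is the unique maximal editable set w.r.t.\ $\*X^{(j)}$; in particular it is editable in the sense of \Cref{def:admissible states}. Since $\1T^{(j)}$ differs from $\1T$ only through modifications to states in $\Delta^{(j)}$, I would then invoke \Cref{thm:aligned source task} to conclude that every optimal policy $\pi^{(j)}$ of $\1T^{(j)}$ agrees with some optimal target policy $\pi^*$ on every action $X_i \in \*X^{(j)}$ at all commonly reachable input states, giving condition~(i) for every $\1T^{(j)} \in \1C$.

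Next, I would verify condition~(ii): the monotonicity of the action sets along the curriculum ordering. Because the algorithm appends source tasks in the order $\1T^{(H)}, \1T^{(H-1)}, \ldots, \1T^{(1)}$, it is convenient to re-index by curriculum position $k \in \{1, \ldots, H\}$ via $j(k) = H - k + 1$. The associated action sets then satisfy
\[
\*X^{(j(k))} = \{X_{H-k+1}, \ldots, X_H\} \subseteq \{X_{H-k}, \ldots, X_H\} = \*X^{(j(k+1))},
\]
so the chain is monotonically non-decreasing across the curriculum, which is precisely condition~(ii) of \Cref{thm:causally aligned curriculum}.

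With both conditions verified and $\1T$ soluble by assumption, a single application of \Cref{thm:causally aligned curriculum} yields that $\1C$ is causally aligned, which is the conclusion. The main ``obstacle'' here is purely bookkeeping rather than mathematical depth: one must track the reverse traversal of the loop ($j$ decreasing from $H$ to $1$) against the increasing curriculum index used in the statement of \Cref{thm:causally aligned curriculum}. Once that indexing is matched up correctly, the proof is a mechanical composition of \Cref{thm:aligned source task}, \Cref{thm:ase correctness}, and \Cref{thm:causally aligned curriculum}.
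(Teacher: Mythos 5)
Your proposal is correct and follows essentially the same route as the paper's own proof: condition (i) of \Cref{thm:causally aligned curriculum} via the correctness of \textsc{FindMaxEdit} (\Cref{thm:ase correctness}) together with \Cref{thm:aligned source task}, condition (ii) from the construction $\*X^{(j)} = \{X_j,\dots,X_H\}$, and then one application of \Cref{thm:causally aligned curriculum}. Your explicit re-indexing of the reverse loop against the curriculum ordering is a point the paper leaves implicit, but the argument is the same.
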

A more detailed discussion on the additional conditions under which a combination of \Cref{alg:curriculum learning,alg:construct curriculum} is guaranteed to find an optimal target task policy is provided in \Cref{sec:causal aligned curriculum algo}.

\section{Experiments}

In this section, we build on~\Cref{alg:construct curriculum} and different curriculum generators to evaluate causally aligned curricula for solving challenging tasks in which confounding bias is present and previous, non-causal generators cannot solve.
In particular, we test four best-performing curriculum generators: ALP-GMM~\citep{alpgmm}, PLR~\citep{plr}, Goal-GAN~\citep{goalgan}, and Currot~\citep{currot} in two confounded environments \xadd{with pixel observations}: (a) Colored Sokoban, (b) Button Maze\xadd{, (c) Continuous Button Maze (\Cref{sec:exp details})}. All experiments are conducted with five random seeds and reported in Interquartile Mean (IQM) normalized w.r.t the minimum and maximum rewards with $95\%$ confidence intervals shown in shades. See \Cref{sec:exp details} for more details.

\textbf{Colored Sokoban.} Consider the same Sokoban game as shown in \Cref{exp:harmful}. The curriculum generators are allowed to vary the initial location of the agent, to vary the initial box location, and to intervene the box's color. Without editing, the box color syncs with the true underlying rewards, i.e., pushing a yellow box always yields a positive reward. However, after intervening the box color, this sync is broken and the agent has no information on the right time to push the box. 
As shown in \Cref{fig:all experiment}, agents trained by original curriculum generators failed to converge due to this. After causal augmentation, those misaligned source tasks with intervened box color are all eliminated from the search space during curriculum generation. The causal versions of those generators successfully train the agent to converge efficiently and surpass those trained directly in the target task. 

\begin{figure}[t]
    \centering
    \hspace{-0.2in}
    \hfill
    \begin{subfigure}[b]{.03\textwidth}
    \begin{tikzpicture}
        \node [text width=2.2cm, align=center, rotate=90] {\ \ Colored Sokoban};
    \end{tikzpicture}
    \end{subfigure}\hfill\null
    \hfill
     \begin{subfigure}[b]{.24\textwidth}
        \centering
        \includegraphics[width=\textwidth]{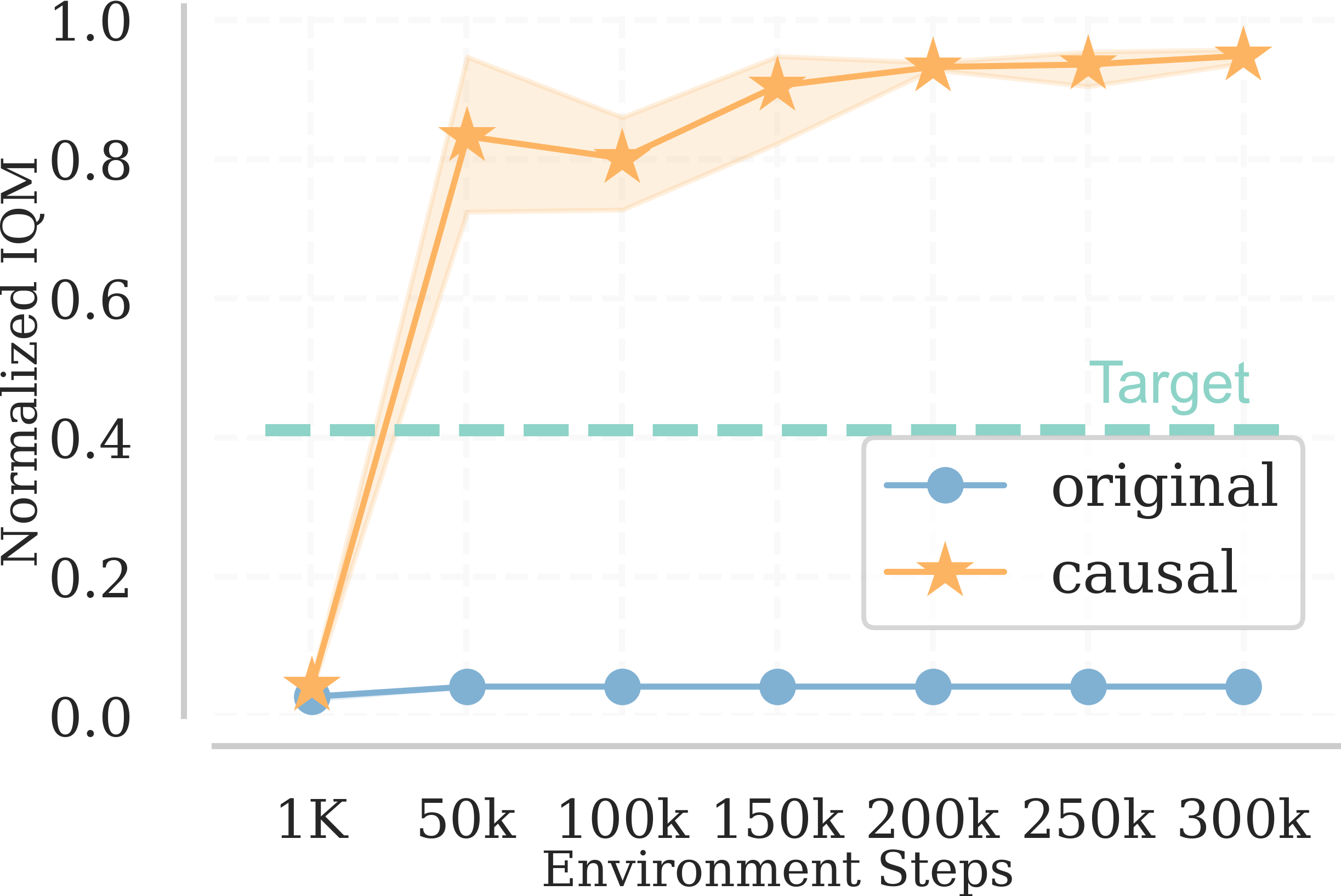}
    \end{subfigure}
    \hfill
    \begin{subfigure}[b]{.24\textwidth}
        \centering
        \includegraphics[width=\textwidth]{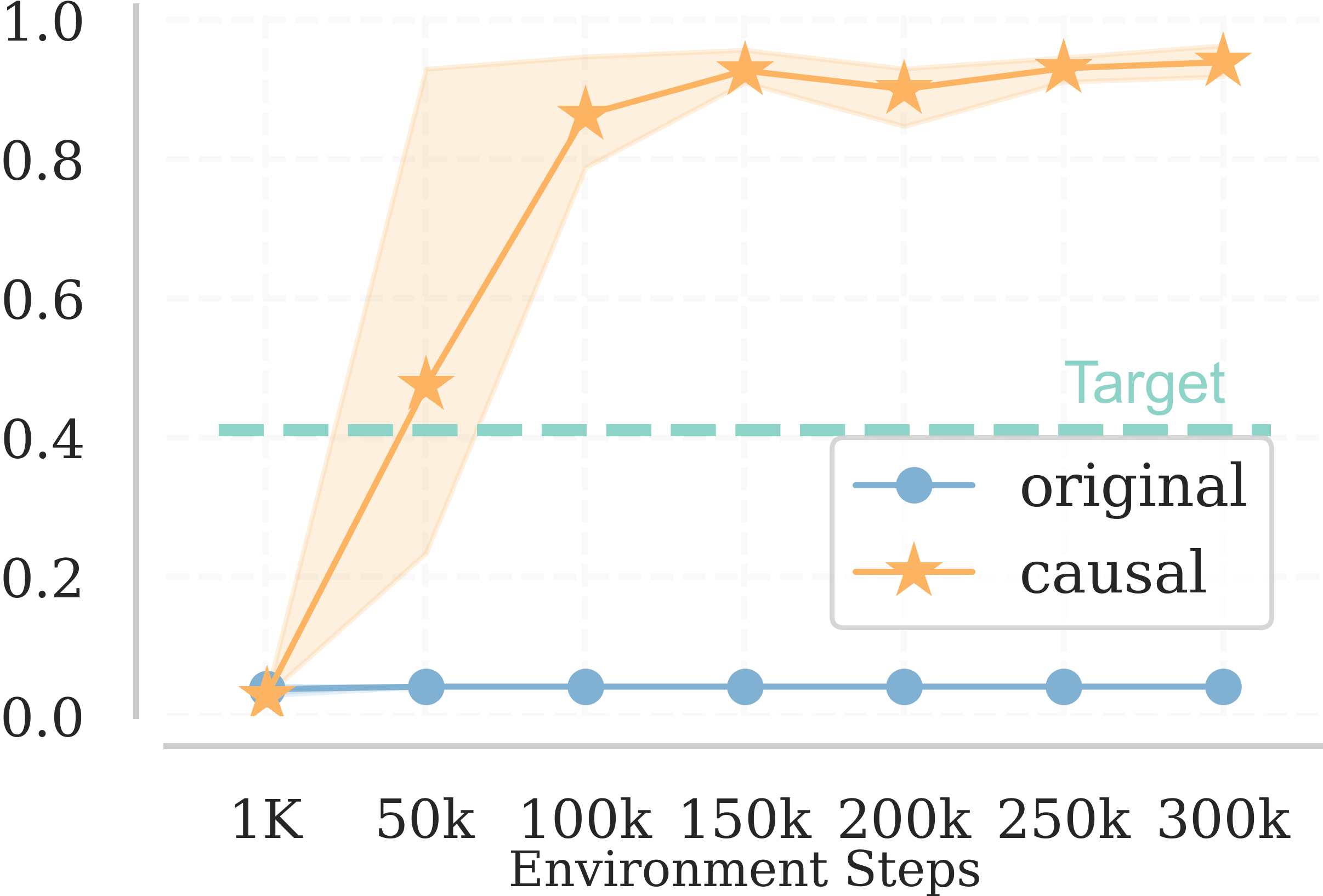}
    \end{subfigure}\hfill\null
    \hfill
    \begin{subfigure}[b]{.24\textwidth}
        \centering
        \includegraphics[width=\textwidth]{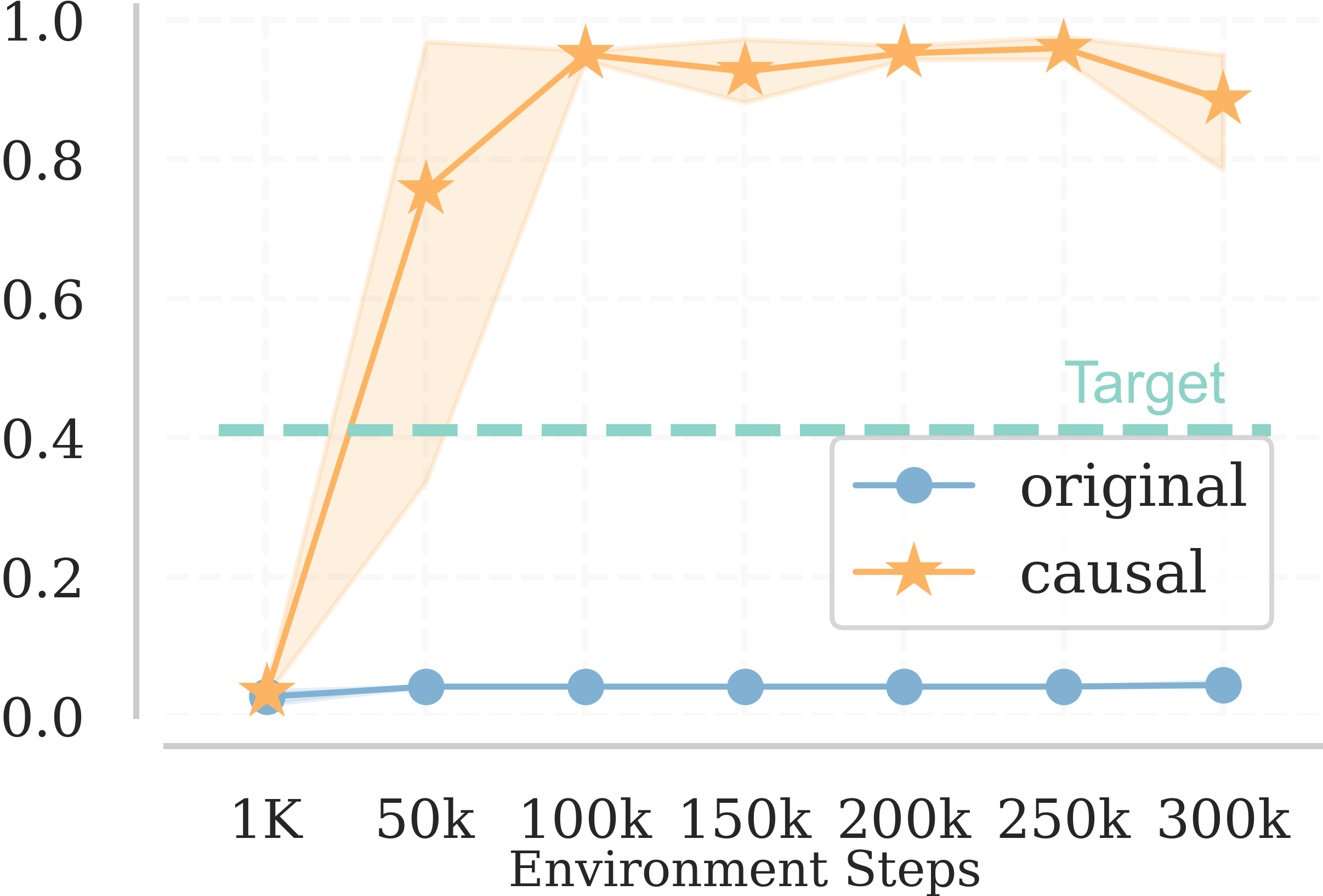}
    \end{subfigure}\hfill\null
    \hfill
    \begin{subfigure}[b]{.24\textwidth}
        \centering
        \includegraphics[width=\textwidth]{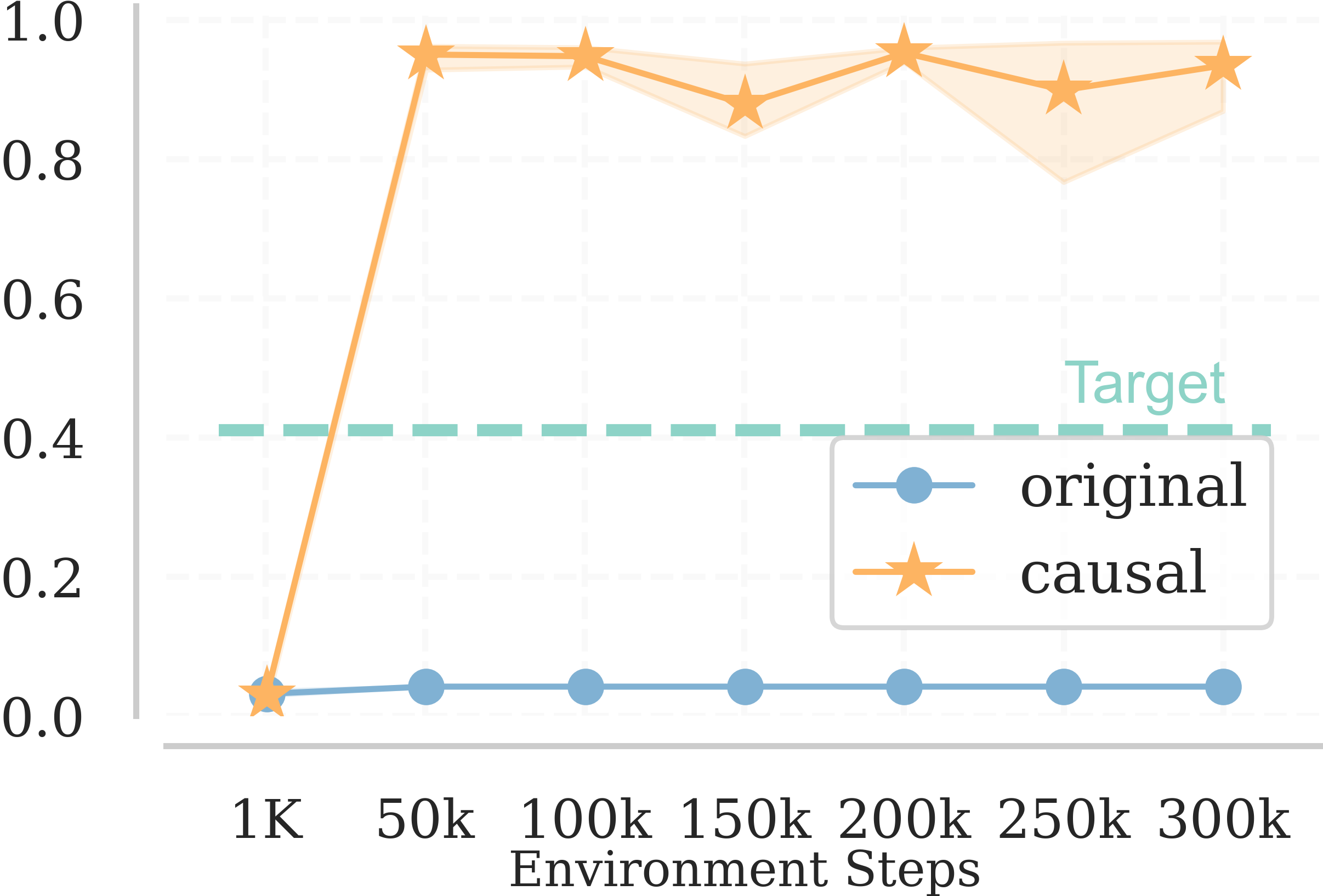}
    \end{subfigure}\hfill\null

    \hspace{-0.2in}
    \hfill
    \begin{subfigure}[b]{.03\textwidth}
    \begin{tikzpicture}
        \node [text width=3cm, align=center, rotate=90] {\ \ \ \ \ \ \ Button Maze}; 
    \end{tikzpicture}
    \end{subfigure}\hfill\null
    \hfill
    \begin{subfigure}[b]{.24\textwidth}
        \centering
        \includegraphics[width=\textwidth]{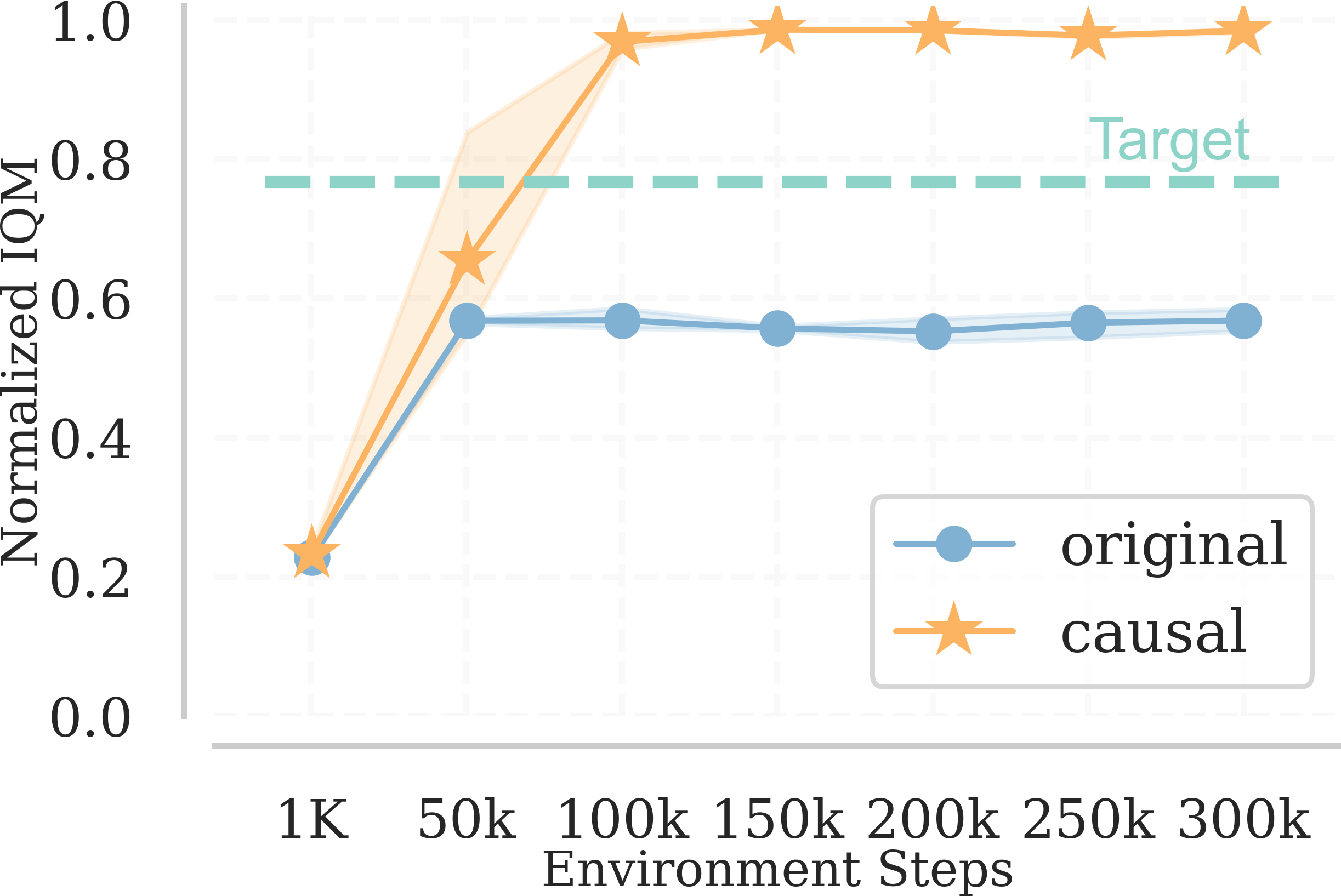}
        \caption{ALP-GMM}
    \end{subfigure}
    \hfill
    \begin{subfigure}[b]{.24\textwidth}
        \centering
        \includegraphics[width=\textwidth]{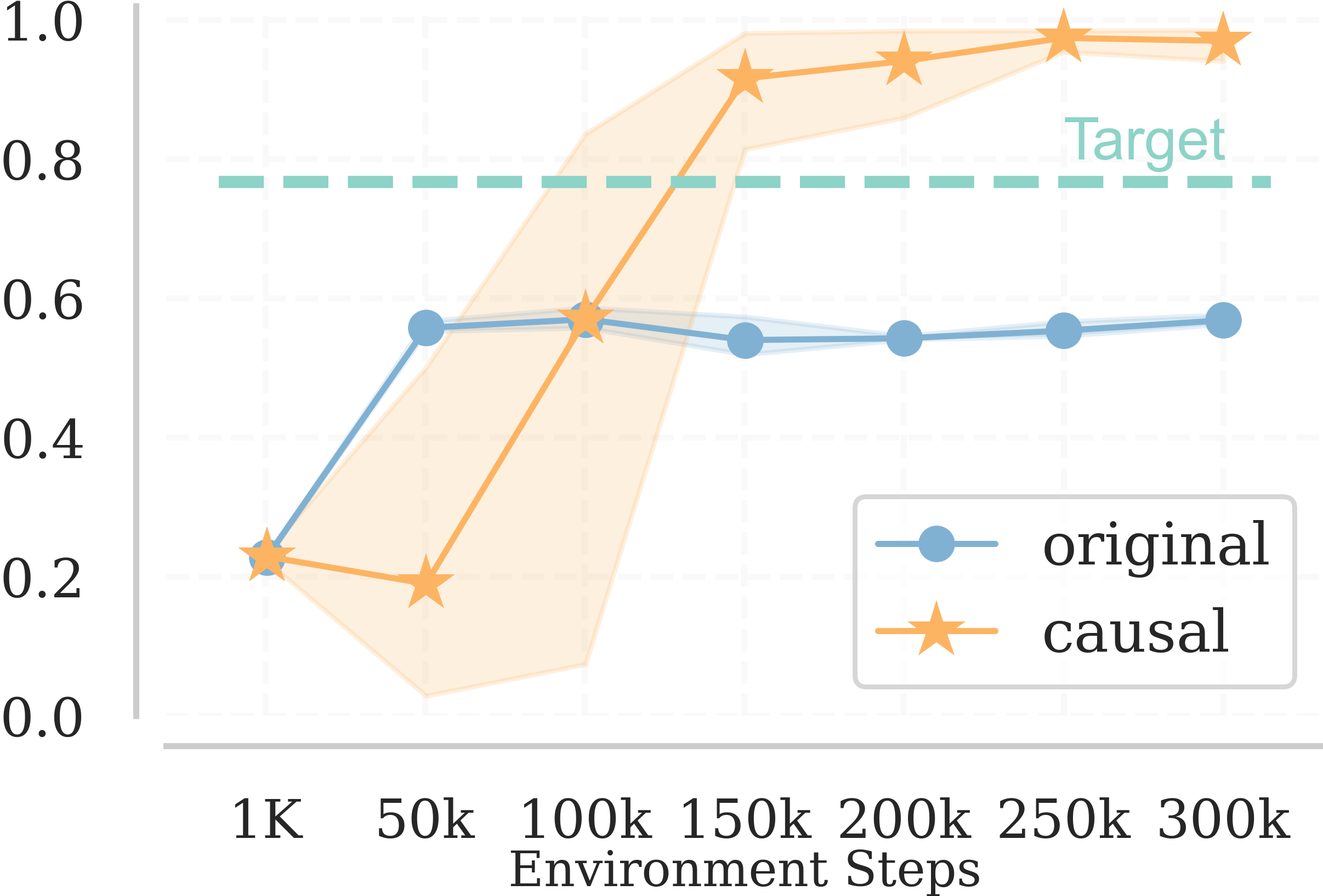}
        \caption{PLR}
    \end{subfigure}\hfill\null
    \hfill
    \begin{subfigure}[b]{.24\textwidth}
        \centering
        \includegraphics[width=\textwidth]{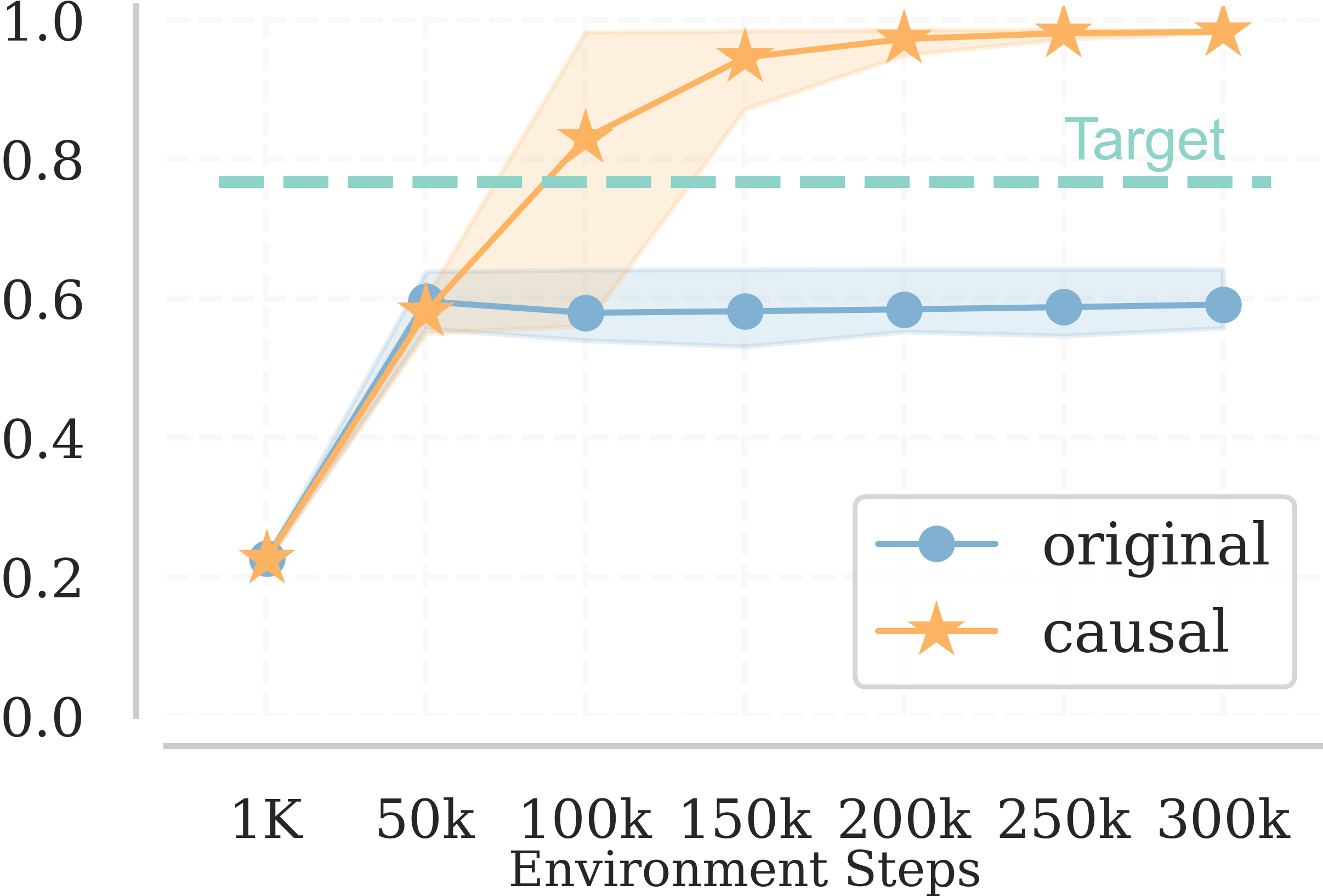}
        \caption{Goal-GAN}
    \end{subfigure}\hfill\null
    \hfill
    \begin{subfigure}[b]{.24\textwidth}
        \centering
        \includegraphics[width=\textwidth]{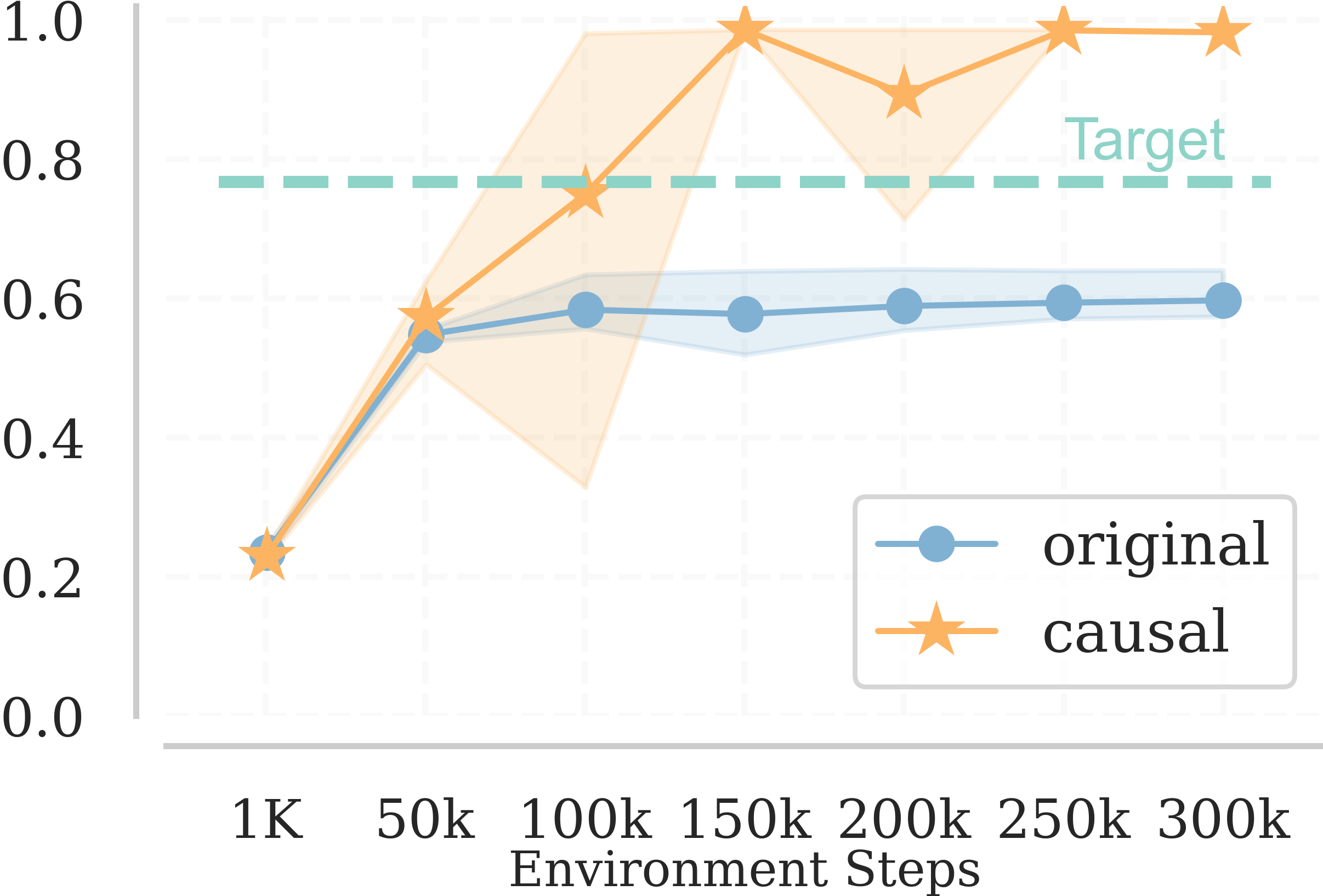}
        \caption{Currot}
    \end{subfigure}\hfill\null
    \caption{Target task performance of the agents at different training stages in Colored Sokoban (Row 1) and Button Maze (Row 2) using different curriculum generators \xadd{(Columns). The horizontal green line shows the performance of the agent trained directly in the target. ``original'' refers to the unaugmented curriculum generator and ``causal'' refers to its causally augmented version.}}
    \vspace{-0.2in}
    \label{fig:all experiment}
\end{figure}

\begin{wrapfigure}[9]{r}{.15\linewidth}
    \vspace{-0.15in}
    \centering
    \includegraphics[width=\linewidth]{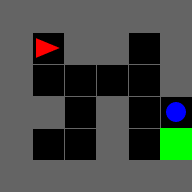}
    \caption{Button Maze.}
    \label{fig:button-maze}
\end{wrapfigure} 
\textbf{Button Maze.} In this grid world environment~\citep{gym_minigrid}, the agent must navigate to the goal location and step onto it at the right time. Specifically, after pushing the button, the goal region will turn green and yield a positive reward if the agent steps onto it. 
However, before pushing the button, there is only a $20\%$ chance the agent gets a positive reward for reaching the goal, and the goal randomly blinks between red and green, independent of the underlying rewards. Curriculum generators can intervene the goal color and vary the agent's initial location but intervening goal colors creates misaligned curricula (\Cref{thm:causally aligned curriculum}). As shown in~\Cref{fig:all experiment}, agents trained by vanilla curriculum generators failed to learn at all, while the agents trained by their causally-augmented versions all converged to the optimal, even surpassing the one trained directly in the target task. 

\section{Conclusion}

We develop a formal treatment for automatic curriculum design in confounded environments through causal lenses. We propose a sufficient graphical criterion that edits must conform with to generate causally aligned source tasks in which the agent is guaranteed to learn optimal decision rules for the target task.
We also develop a practical implementation of our graphical criteria, i.e., \textsc{FindMaxEdit}, that augments the existing curriculum generators into ones that generate aligned source tasks regardless of the existence of unobserved confounders. Finally, we analyze causally aligned curricula' design principles with theoretical performance guarantees. The effectiveness of our approach is empirically verified in two high-dimensional pixel-based tasks. 

\newpage

\section{Acknowledgements}
This research was supported in part by the NSF, ONR, AFOSR, DoE, Amazon, JP Morgan, and the Alfred P. Sloan Foundation.

\section{Reproducibility Statement}
For all the theorems, corollaries, and algorithms, we provide proofs and correctness analysis in \Cref{sec:proof}. To implement the algorithms, we provide pseudo-code in the main text and \Cref{sec:variations of ase}. We also provide experiment specifications, environment setup, and neural network hyperparameters in \Cref{sec:exp details}. Colored Sokoban and Button Maze are implemented based on Sokoban~\citep{SchraderSokoban2018} and GridWorld~\citep{gym_minigrid}, respectively.





\bibliography{bibwithID}
\bibliographystyle{iclr2024_conference}

\appendix
\clearpage
\appendix

\section{Method Overview}
\label{sec:overview}
In this section, we will summarize our proposed approach from a high-level view and provide a figure for better illustration of the pipeline.
\begin{figure}
    \centering
    \includegraphics[width=\textwidth]{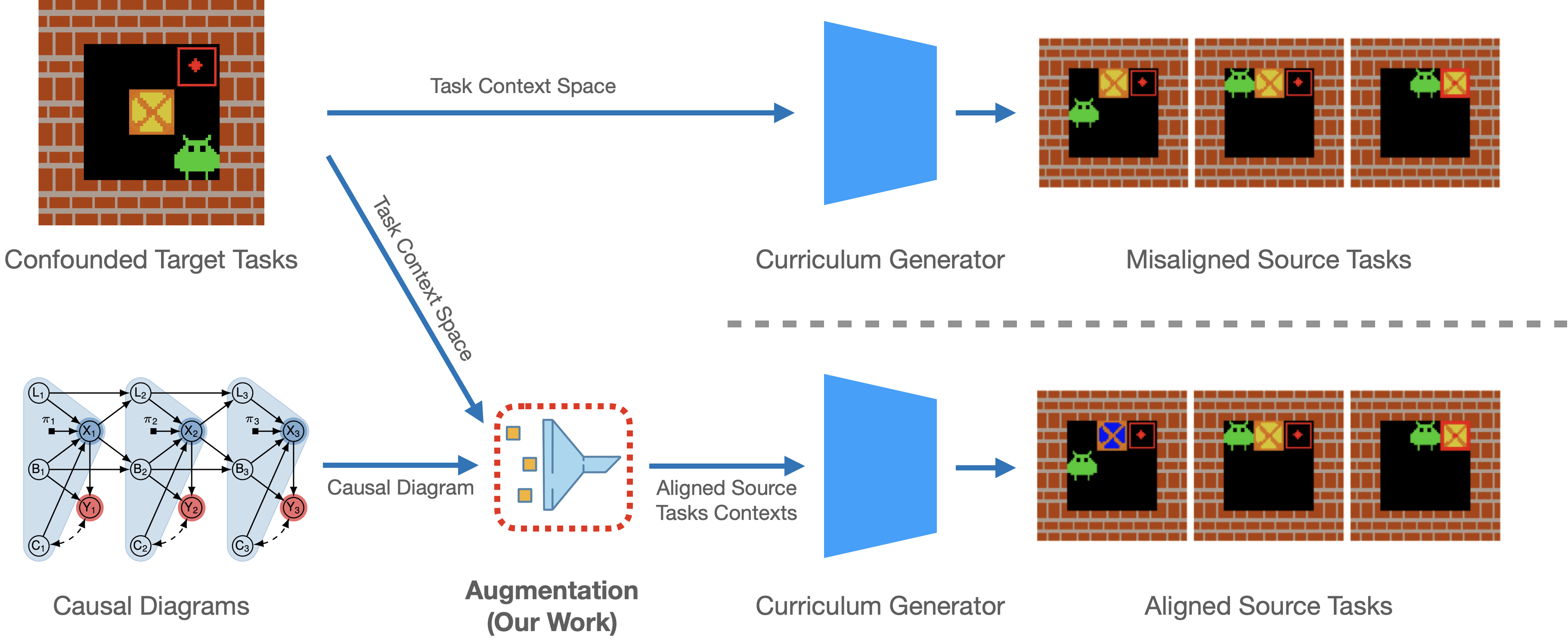}
    \caption{An overview of our proposed approach. In the upper half, the non-causal curriculum generator will produce misaligned source tasks failing to train the agent to convergence. While in our approach (the bottom half), we utilize the qualitative causal knowledge (causal diagrams) to filter out those misaligned source task contexts such that applying the same curriculum generator can now produce aligned source tasks.}
    \label{fig:method}
\end{figure}

\section{Detailed Computations of Example \ref{exp:harmful}}
\label{sec:details of exp1}
For the reward of pushing the box, we have,
\begin{align}
	&\invE{Y_i \big| B_i =  \text{``next to goal''}, C_i}{\doo\Parens{X_i = \text{``push''}}} \\
    &=\invE{Y_i \big| B_i =  \text{``next to goal''}, U_i}{\doo\Parens{X_i = \text{``push''}}}\\
    &= 10 \times \mathbbm{1}(U_i = 0) -10\times \mathbbm{1}(U_i = 1)\\
    &=\begin{cases}
		10 &\mbox{if } C_i = 0\\
		-10 &\mbox{if }  C_i = 1
	\end{cases}
\end{align}

After fixing the box color, we have,
\begin{align}
	&\invE{Y_i \big| B_i =  \text{``next to goal''}}{\doo\Parens{C_i, X_i = \text{``push''}}} \\
    &= 10 \times P(U_i = 0) - 10\times P(U_i = 1)\\
    &=-5.
\end{align}

\section{Variations of Algorithms}
\label{sec:variations of ase}

\begin{table}[t]
\caption{Time complexity of \textsc{FindMaxEdit} and its variations.}
\label{tab:ase time complexity}
\begin{center}
\begin{tabular}{llc}
\multicolumn{1}{c}{\bf Algorithm}  &\multicolumn{1}{c}{\bf Description}  &\multicolumn{1}{c}{\bf Runtime}
\\ \hline \\
\textsc{IsEdit}&For given set of $\tau^{(j)}$, $\*X^{(j)}$ and graph $\1G_{\pi}$ decide editability   &$\1O(d(n+m))$ \\
\textsc{FindEdit}&Find an admissible set of $\tau^{(j)}$ w.r.t $\*X^{(j)}$ in $\1G_{\pi}$                 &$\1O(dn^2)$ \\
\textsc{FindMaxEdit}&Find the maximal admissible set of $\tau^{(j)}$ w.r.t $\*X^{(j)}$  in $\1G_{\pi}$       &$\1O(dn^2)$\\
\textsc{ListEdit}&List all admissible sets of $\tau^{(j)}$ w.r.t $\*X^{(j)}$ in $\1G_{\pi}$               &$\1O(n)$ delay
\end{tabular}
\end{center}
\end{table}

The first task of deciding editability is a direct application of the criterion in \Cref{def:admissible states} and can be implemented by using \textsc{TestSep} (\Cref{alg:isedit}). Finding an editable set and finding the maximal editable set are both solved by our proposed Algo.\ref{alg:single stage edit search}. After we find the maximal editable set, by \Cref{thm:ase correctness}, the power set of it is indeed all the possible editable sets w.r.t $\*X^{(j)}$ in $\1G_{\pi}$ (\Cref{alg:listedit}). Note that there are exponentially many subsets so we focus on the time complexity of yielding each output. We implement this by bit manipulation. Specifically, we denote the existence of a variable in the output set as $0$ or $1$. Then listing all the elements in a power set of $\Delta^{(j)}$ is equivalent to decrease a bit mask from $2^{|\Delta^{(j)}|} - 1$ to $0$. Each mask in this process represents a unique subset of $\Delta^{(j)}$.

    \begin{algorithm}[t]                
        \SetAlgoNoEnd
        \SetCustomAlgoRuledWidth{\textwidth}  
        \caption{\textsc{IsEdit}}
        \label{alg:isedit}
        \KwInput{A set of edit indicators $\tau^{(j)}$, a set of actions $\*X^{(j)}$ and a graph $\1G_{\pi}$.}
        \KwOutput{Whether the input edits are admissible.}
        \For{$X$ in $\*X^{(j)}$}{
            \If{$\Parens{\tau^{(j)} \independent \*Y \cap \De(X) \mid X, \*S_{X}} \text{ in } \1G_{\pi}$}{
                \Return False\;
            }
        }
        \Return True\;
    \end{algorithm}
 
    \begin{algorithm}[t]                
        \SetAlgoNoEnd
        \SetCustomAlgoRuledWidth{\textwidth}  
        \caption{\textsc{ListEdit}}
        \label{alg:listedit}
        \KwInput{A causal diagram $\1G_{\pi}$, a set of actions $\*X^{(j)}$.}
        \KwOutput{All possible editable sets.}
 		$\Delta^{(j)} \rightarrow $\textsc{FindMaxEdit}($\1G_{\pi}$, $\*X^{(j)}$)\;
        mask $\leftarrow 2^{|\Delta^{(j)}|} - 1$\;
        \While{mask}{
            $\Delta \leftarrow \emptyset$\;
            tmpmask $\leftarrow$ mask\;
            $i \leftarrow |\Delta^{(j)}| - 1$\;
            \While{tmpmask}{
                \If{tmpmask $\& 1$}{
                    $\Delta \leftarrow \Delta \cup \{\Delta^{(j)}[i]\}$
                }
                $i \leftarrow i - 1$\;
                tmpmask $\gg 1$;
            }
            mask$\leftarrow$ mask $-1$\;
            \textbf{yield} $\Delta$\;
        }
    \end{algorithm}

\section{Algorithm for Causally Aligned Curriculum Learning}
\label{sec:causal aligned curriculum algo}

\Cref{alg:construct curriculum} construct the curriculum before any training is done. However, a more common practice is to construct the curriculum while adapting to the edge of the agent's capability. In this section, we will discuss a more involved version of \Cref{alg:construct curriculum} that implements this idea. As we have seen in the main text, two source tasks $\1T^{(i)}, \1T^{(j)}$ in a soluble curriculum satisfy $\*X^{(i)} \subseteq \*X^{(j)}$ if $i < j$. This incremental nature of the action sets actually brings us opportunities to reduce further the computation of finding editable states. \Cref{thm:expanded action sets} reveals a nice property of $\*X^{(j)}$ such that if a set of state variables, $\Delta^{(j)}$, is editable w.r.t an action $X$, it is also editable w.r.t any actions $X'$ precedent to $X$ in the soluble ordering, $X' \prec X$, i.e., $\left(X'\cap \De(Y) \independent \pi \mid X', \*S_{X'}\right)$ in $\1G_{\pi}$ where $\pi$ is a new regime node pointing to $X$.  

\begin{theorem}[Expanded Action Sets]
\label{thm:expanded action sets}
Let $\1T = \langle \1M, \Pi, \1R\rangle$ be a soluble target task and $\1T^{(j)} = \langle \1M, \Pi, \1R, \Delta^{(j)}\rangle$ an aligned source task of $\1T$. Then $\Delta^{(j)}$ is also editable w.r.t $\*X^{(j)+} = \left\{X' \mid X' \prec X, X\in \*X^{(j)}\right\}$.
\end{theorem}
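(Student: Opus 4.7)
The plan is to verify Definition~\ref{def:admissible states}'s independence condition for every $X' \in \*X^{(j)+}$, leveraging both the editability of $\Delta^{(j)}$ w.r.t.\ $\*X^{(j)}$ and the soluble structure of $\1T$. Fix $X' \in \*X^{(j)+}$ and choose some $X \in \*X^{(j)}$ with $X' \prec X$; the goal is to establish
\begin{equation*}
\Parens{\tau^{(j)} \ci \*Y \cap \De(X') \mid X', \*S_{X'}}_{\1G_{\pi}}.
\end{equation*}
To organize the analysis, I would partition the target set as $\*Y \cap \De(X') = \*Y_1 \cup \*Y_2$ with $\*Y_1 := \*Y \cap \De(X)$ and $\*Y_2 := (\*Y \cap \De(X')) \setminus \De(X)$, and treat the two pieces separately using decomposition of conditional independence.

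For $\*Y_1$, the editability hypothesis at $X$ already supplies $(\tau^{(j)} \ci \*Y_1 \mid X, \*S_X)_{\1G_{\pi}}$. I would ``transport'' this independence from the conditioning set $\{X, \*S_X\}$ to $\{X', \*S_{X'}\}$ by inducting along the actions lying between $X'$ and $X$ in the soluble ordering. At each single step, solubility ensures that the earlier $\{X_k, \*S_{X_k}\}$ is a Markovian summary of history with respect to anything downstream of $X_{k+1}$, which lets me exchange conditioning sets without opening new paths between $\tau^{(j)}$ and $\*Y_1$. Graphoid axioms (weak union, contraction, decomposition) will glue the single-step arguments into the overall independence.

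For $\*Y_2$, whose rewards descend from $X'$ but not from $X$, the editability hypothesis gives no direct leverage, so blocking must be argued from graph topology and solubility alone. I would proceed by contradiction: any d-connecting path from $\tau^{(j)}$ to some $Y \in \*Y_2$ given $\{X', \*S_{X'}\}$ must avoid the blocking nodes $X'$ and $\*S_{X'}$, and cannot exploit descent through $X$ (since $Y \notin \De(X)$); such a path would connect $\tau^{(j)}$-influenced variables in $\Delta^{(j)}$ to $Y$ via a route that solubility of $\1T$ forbids, because $\*S_{X'}$ together with $X'$ already summarizes all the history relevant to rewards appearing before the next scheduled action after $X'$.

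The main obstacle will be the $\*Y_2$ case: unlike $\*Y_1$, where editability hands us a starting independence to transport, the intermediate rewards demand a delicate graph-theoretic argument to show every would-be open path is blocked, especially since $\tau^{(j)}$ may point into variables at arbitrary positions of the ordering, including variables that are neither descendants of $X$ nor ancestors of $X'$. The inductive transport for $\*Y_1$ is also non-trivial because each step must ensure the change of conditioning set does not open a new path through other targets of $\tau^{(j)}$ inside $\Delta^{(j)}$; carefully combining the soluble Markov property with the d-separation structure of $\1G_{\pi}$ is the crux of the argument.
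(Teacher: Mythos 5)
There is a genuine gap, and it begins with the direction of the soluble ordering $\prec$. In this paper $X' \prec X$ means that $X'$ is optimized \emph{before} $X$, which for a soluble task means $X' \in \De(X)$: the actions in $\*X^{(j)+}$ come \emph{later} in time than those in $\*X^{(j)}$ (cf.\ the example ``$X_3 \prec X_2 \prec X_1$'' in \Cref{sec:causal aligned curriculum algo} and the ordering $\{X_H,\dots,X_1\}$ used in \Cref{alg:causal curriculum learning}). Your proposal reads $\prec$ temporally, taking $X'$ to be upstream of $X$: the decomposition $\*Y \cap \De(X') = \*Y_1 \cup \*Y_2$ with $\*Y_1 = \*Y \cap \De(X)$ only makes sense if $\De(X) \subseteq \De(X')$, i.e., $X \in \De(X')$. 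Under the paper's convention $\De(X') \subseteq \De(X)$, so $\*Y_2 = \emptyset$ and the half of your plan you flag as the ``main obstacle'' addresses a non-existent case, while the statement you are actually attempting (editability transfers from a later action to an earlier one) is false in general. Concretely, in the Sokoban task of \Cref{fig:sokoban target} take $\Delta^{(j)} = \{C_1\}$: the indicator $\tau^{(j)} \to C_1$ is d-separated from $\*Y \cap \De(X_i) = \{Y_i,\dots,Y_H\}$ given $(X_i, \*S_i)$ for every $i \geq 2$ (all forward routes are cut by the conditioned $L_i, B_i, C_i, X_i$, and collider detours through $X_1$ dead-end in $Y_1 \notin \De(X_i)$), so $\{C_1\}$ is editable w.r.t.\ the later actions; yet conditioning on $C_1 \in \*S_1$ opens the collider path $\tau^{(j)} \to C_1 \leftrightarrow Y_1$ with $Y_1 \in \*Y \cap \De(X_1)$, so it is \emph{not} editable w.r.t.\ the earlier action $X_1$.

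Even for the part of the claim that survives your reading, the mechanism you invoke does not carry the argument. Solubility (\Cref{def:soluble task}) screens \emph{earlier policy nodes} off from rewards downstream of a \emph{later} action given that action's inputs; it is not the statement that the earlier $(X_k, \*S_{X_k})$ is a Markovian summary of everything downstream of $X_{k+1}$, and the ``transport'' of the independence from the conditioning set $(X, \*S_X)$ to $(X', \*S_{X'})$ is precisely the non-trivial step, since d-separation is not monotone in the conditioning set and $\tau^{(j)}$ may point into variables anywhere in the graph. The paper's route is to reduce \Cref{thm:expanded action sets} to \Cref{thm:expanded criterion}: supposing an open path from $\tau^{(j)}$ to $\*Y \cap \De(X')$ given $(X', \*S_{X'})$ for $X' \in \De(X)$, it splits into cases according to where the edited variable $V$ sits relative to $X$ ($V \in \*S_X$; $V \in \An(X) \setminus \*S_X$; $V \notin \An(X)$) and in each case exhibits either an open path from a parent of $X$ to $\*Y \cap \De(X')$ that contradicts solubility, or an open path that contradicts the assumed independence $\Parens{\tau^{(j)} \ci \*Y \cap \De(X) \mid X, \*S_X}$. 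Your outline contains neither this case analysis nor a workable substitute for it, so the proof as planned cannot be completed.
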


\Cref{thm:expanded action sets} implies that when checking editability w.r.t a set of actions, we only need to check w.r.t the single action that ranked highest in the soluble ordering (e.g., for actions $X_3 \prec X_2 \prec X_1$, finding the editable states for $\{X_1\}$ is equivalent to finding editable states for $\{X_i\}_{i=1}^3$). 

Based on the above discussion and \Cref{alg:single stage edit search}, we propose a causal curriculum learning algorithm (\Cref{alg:causal curriculum learning}) to find the optimal target task policy given a curriculum generator \textsc{Gen} that generates source tasks by editing a given set of editable states $\Delta^{(j)}$.
    \begin{algorithm}[t]                
        \SetAlgoNoEnd
        \SetCustomAlgoRuledWidth{\textwidth}  
        \caption{\textsc{Causal Curriculum Learning}}
        \label{alg:causal curriculum learning}
        \KwInput{A target task $\1T$, a curriculum generator \textsc{Gen}.}
        \KwOutput{A policy trained in the curriculum $\mathcal{C}$, $\pi^*(\1C)$.}
 		Generate causal diagram $\1G_{\pi}$ of $\1T$ w.r.t policy space $\Pi$\;
		Sort the set of actions w.r.t the soluble ordering (ascending) and save the result into $\*X'$\;
        Randomly initialize the agent's policy $\pi^*(\1C)$\;
		$j \leftarrow 1$\;
        \For{$X_i$ in $\*X'$}{
            $\Delta \leftarrow$\textsc{FindMaxEdit}($\1G_{\pi}, \{X_i\}$)\;
            $\1C[X_i] \leftarrow \emptyset$\;
            \While{$\D(\*S_i;\pi^*) \nsubseteq \bigcup_{\1T^{(j)} \in \1C[X_i]} \D(\*S_i; \pi^*(\1C))$}{
                Generate source task $\1T^{(j)}$ with \textsc{Gen}($\1T$, $\Delta$)\;
                Train $\pi^*(\1C)$ in $\1T^{(j)}$ to converge\;
                $\1C[X_i] \leftarrow \1C[X_i] \bigcup \{\1T^{(j)}\}$\;
                $j \gets j+1$\;
            }
        }
        \Return $\pi^*(\1C)$\;
    \end{algorithm}
This algorithm resonates with the heuristics that the agent should be trained in a sequence of more challenging source tasks~\citep{rl_curriculum,curriculum_survey}. More specifically, after sorting the actions in the soluble ordering, the first generated source task only requires the agent to learn the optimal target policy of action $X_N$ (Say the target task $\1T$ is an $N$-step one). The next stage of learning will be to grasp the optimal target policy of both action $X_N$ and $X_{N-1}$, but since the agent has already learned the optimal policy of $X_N$ previously. This won't be too much harder for it. We will continue this procedure until the agent learns the optimal policy for all the actions in the target task. \Cref{alg:causal curriculum learning} is basically a combination of \textsc{Curriculum Learning}(\Cref{alg:curriculum learning}) and \textsc{FindCausalCurriculum}(\Cref{alg:construct curriculum}) except that for each $\Delta$, there will be multiple source tasks $\1T^{(j)}$ being generated until $\D(\*S_i;\pi^*) \subseteq \bigcup_{T^{(j)} \in \1C[X_i]} \D(\*S_i; \pi^*(\1C))$ where $\Omega(\*S_i;\pi^*)$ is the space of possible $\*S_i$ values in the target task under the optimal target task policy $\pi^*$, $\Omega(\*S_i;\pi^*(\1C))$ is the space of possible $\*S_i$ values in source tasks under policy $\pi^*(\1C)$ and $\1C[X_i]$ is the set of source tasks $\1T^{(j)}$ such that $X_i \in \*X^{(j)}$. This condition ensures that the input space of an action $X_i$ is thoroughly traversed during training in the curriculum. If this is satisfied, it means that the agent has learned the optimal decision rule for $X_i$ in every possible situation in the target task, as we show formally in the following theorem.

\begin{theorem}
	\label{thm:optimal curriculum learning}
	If the set of actions in target task satisfies $\*X \subseteq \*X^{(N)}$ and $\forall X_i \in \*X,\ \D(\*S_i;\pi^*) \subseteq \bigcup_{\1T^{(j)} \in \1C[X_i]} \D(\*S_i; \pi^*(\1C))$, the following solution $\pi^*(\mathcal{C})$ is an optimal policy $\pi^*$ in the target task $\1T$,     
	\begin{align}
    	\pi^*(\1C) = \argmax_{\pi \in \Pi} \InvE{\1R(\*Y)}{\pi}{\1M^{(N)}},
    \end{align}
	where $\1M^{(N)}$ is the SCM of source task $\1T^{(N)}$, and $\1T^{(N)}$ is aligned to $\1T$ w.r.t $\*X^{(N)}$.
\end{theorem}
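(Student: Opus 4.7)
My plan is to reduce the theorem to a state-by-state verification that $\pi^*(\1C)$ agrees with some fixed target-optimal policy $\pi^*$ at every input state reachable in $\1T$ under $\pi^*$. Since $\*X \subseteq \*X^{(N)}$ and (by the soluble curriculum construction of Theorem~2) $\*X^{(j)} \subseteq \*X^{(j+1)}$, every action $X_i \in \*X$ belongs to $\*X^{(j)}$ for every source task in $\1C[X_i]$, and hence Theorem~1 (Causally Aligned Source Task) is applicable action-by-action throughout the curriculum. The overall strategy is therefore: (i) use Theorem~1 to identify a common target-optimal reference policy $\pi^*$ that agrees with each source-task optimum on the relevant reachable states, (ii) use the causally aligned curriculum property (Definition~3) to argue that training through $\1C$ accumulates, rather than overwrites, these optimal pieces, and (iii) use the coverage hypothesis $\D(\*S_i;\pi^*) \subseteq \bigcup_{\1T^{(j)} \in \1C[X_i]} \D(\*S_i; \pi^*(\1C))$ to close the gap between ``optimal on visited states'' and ``globally optimal in $\1T$.''

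First I would fix any optimal target policy $\pi^* \in \argmax_{\pi \in \Pi} \InvE{\1R(\*Y)}{\pi}{\1M}$ and, using Theorem~1, deduce that for each $X_i \in \*X$ and each $\1T^{(j)} \in \1C[X_i]$, the source-task optimum $\pi^{(j)}$ coincides with $\pi^*$ on $X_i$ at every $\*s_i \in \D^{(j)}(\*S_i; \pi^{(j)}) \cap \D(\*S_i;\pi^*)$. Next, I would chain these agreements through the curriculum: Definition~3 gives $(\pi^{(j)} \cap \pi^*) \subseteq (\pi^{(j+1)} \cap \pi^*)$, so the set of input states where $\pi^{(j)}_i = \pi^*_i$ is monotone non-decreasing in $j$. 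Combined with the soluble-ordering inclusion $\*X^{(j)} \subseteq \*X^{(j+1)}$, this shows that by $j=N$ the policy $\pi^*(\1C) = \argmax_{\pi} \InvE{\1R(\*Y)}{\pi}{\1M^{(N)}}$ agrees with $\pi^*$ on every $X_i \in \*X$ and every $\*s_i$ that is reachable in any task of $\1C[X_i]$ under $\pi^*(\1C)$.

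Then I would invoke the coverage hypothesis: if every $\*s_i \in \D(\*S_i;\pi^*)$ lies in $\D(\*S_i; \pi^*(\1C))$ for some source task in $\1C[X_i]$, then the agreement just established extends to the whole of $\D(\*S_i;\pi^*)$. Concretely, $\pi^*(\1C)$ and $\pi^*$ prescribe the same decision rule for $X_i$ on every input state that the target task can actually exhibit under $\pi^*$. Finally, a standard ``policy is determined by its values on reachable states'' argument (formalized via the interventional distribution factorization in Definition~1 together with solubility, ensuring each $\*S_i$ is a sufficient statistic of the history at step $i$) implies $\InvE{\1R(\*Y)}{\pi^*(\1C)}{\1M} = \InvE{\1R(\*Y)}{\pi^*}{\1M}$, so $\pi^*(\1C)$ is optimal in $\1T$.

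The main obstacle I anticipate is the second step: carefully justifying that the argmax definition of $\pi^*(\1C)$ in the final source task $\1M^{(N)}$ really does inherit the accumulated alignment from earlier stages of the curriculum. The subtlety is that Theorem~1 only guarantees agreement on $\D^{(N)}(\*S_i;\pi^*(\1C)) \cap \D(\*S_i;\pi^*)$, which in general need not cover all target-reachable states for actions in $\*X^{(N)} \setminus \*X$-dominant branches; bridging this requires the monotone coverage union over $\1C[X_i]$ rather than a single source task, and using the causally aligned curriculum property to transfer agreements to the final argmax solution. Once this transfer is made rigorous (likely by an induction on $j$ using the soluble structure so that decision rules fixed at step $i$ do not shift the reachable sets of steps $i' > i$ in an uncontrolled way), the rest of the argument is essentially a matching-on-reachable-states conclusion.
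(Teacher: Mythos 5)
Your proposal follows essentially the same route as the paper's own proof: the paper likewise combines per-source-task alignment (the Causally Aligned Source Task theorem), the expanding action sets $\*X^{(j)} \subseteq \*X^{(j+1)}$ induced by the soluble ordering to rule out overwriting, and the coverage condition $\D(\*S_i;\pi^*) \subseteq \bigcup_{\1T^{(j)} \in \1C[X_i]} \D(\*S_i; \pi^*(\1C))$ to conclude that the final policy carries optimal target decision rules for every action at every target-reachable input. The paper's argument is in fact no more detailed than your sketch — it, too, handles the step you flag as the main obstacle only by asserting that decision rules learned in $\1T^{(j-1)}$ remain optimal in $\1T^{(j)}$ (together with implicit non-forgetting on states unreached in later tasks) — so your plan is correct at the same level of rigor and takes essentially the same approach.
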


Nonetheless, there are other practical roadblocks before realizing optimal curriculum learning. For example, we don't have an exact measurement of when the input space of an action is traversed thoroughly by the curriculum, and the agent may not converge in every source task $T^{(j)}$ to learn all the optimal decision rules of $\*X^{(j)}$. But still, as we have shown in the experiments, simply augmenting the existing curriculum generators already gives us satisfying performance.

\section{Proof for theorems and algorithm correctness}
\label{sec:proof}

In this section, we provide proof sketches for all the theorems proposed in the previous sections.

\begin{lemma}
\label{thm:expanded criterion}
If the set of edit indicators satisfy $(\tau^{(j)} \independent \*Y_X \big| X, \*S_{X})$ in a soluble source task, for any $X' \prec X$ in the soluble ordering, it satisfies $(\tau^{(j)} \independent \*Y_{X'} \big| X', \*S_{X'})$.
\end{lemma}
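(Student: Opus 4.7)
The plan is to reduce the statement to a one-step adjacency in the soluble ordering and then perform a careful d-separation analysis on the intervened diagram $\1G_{\pi}$. Since the soluble ordering is linear, any $X' \prec X$ can be connected to $X$ by a finite chain of immediate predecessors, so by induction it suffices to treat the case where $X'$ is the immediate predecessor of $X$; the multi-step claim then follows by composing the single-step conclusion along the chain.

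For the one-step reduction, fix any reward $Y \in \*Y \cap \De(X')$ and any path $p$ from $\tau^{(j)}$ to $Y$ in $\1G_{\pi}$; the goal is to show $p$ is blocked by $\{X'\} \cup \*S_{X'}$. I would split into two cases. \textbf{Case (a):} $Y \in \De(X)$. The hypothesis already gives that $p$ is blocked by $\{X\} \cup \*S_X$. To swap the blocking set, I would invoke the solubility clause $(\*Y \cap \De(X) \independent \pi_{X'} \mid X, \*S_X)$ from Definition 6: every variable in $\*S_X \setminus (\*S_{X'} \cup \{X'\})$ sits in the past of $X$, and if any such variable were a non-collider blocking $p$, solubility plus the path structure would expose an active path from $\pi_{X'}$ to $Y$ given $\{X\} \cup \*S_X$, a contradiction. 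This lets us replace the conditioning set by the smaller one $\{X'\} \cup \*S_{X'}$. \textbf{Case (b):} $Y \in \De(X') \setminus \De(X)$. Here $Y$ is ``pre-$X$'' so the segment of $p$ near $Y$ does not pass through $X$ constructively; I would argue that any active path from $\tau^{(j)}$ to $Y$ given $\{X'\} \cup \*S_{X'}$ can be extended through deterministic descendants into an active path reaching some $Y'' \in \*Y \cap \De(X)$ given $\{X\} \cup \*S_X$, again contradicting the hypothesis.

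The main obstacle will be the bookkeeping in Case (a): we must track which conditioning nodes act as colliders versus non-colliders when shrinking the blocking set from $\{X\} \cup \*S_X$ to $\{X'\} \cup \*S_{X'}$, and ensure that no collider newly opened by dropping the difference set $\*S_X \setminus (\*S_{X'} \cup \{X'\})$ activates a path to $\*Y \cap \De(X)$. The key invariant to formalize is that solubility provides an ``anticipation'' property: $\{X\} \cup \*S_X$ contains all the information about downstream rewards that $\{X'\} \cup \*S_{X'}$ could reveal, so any path active under the smaller blocking set has a canonical counterpart active under the larger one, which the hypothesis rules out. Once this invariant is set up, both cases should collapse to routine applications of the graphoid axioms (weak union and contraction) on the d-separation statements supplied by the soluble structure.
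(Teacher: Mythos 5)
Your proposal has the soluble ordering backwards, and this is fatal to the argument. In the paper's convention (made explicit in its own proof of this lemma and in how the lemma feeds into \Cref{thm:aligned source task}), $X' \prec X$ means $X'$ is optimized \emph{before} $X$ in the backward-induction sense, i.e., $X'$ is temporally downstream of $X$: $X' \in \De(X)$, hence $\*Y_{X'} = \*Y \cap \De(X') \subseteq \*Y_X$. Your write-up assumes the opposite ($X'$ in the ``past'' of $X$): your Case (b), $Y \in \De(X')\setminus\De(X)$, is empty under the correct reading, and the solubility clause you invoke, $(\*Y \cap \De(X) \independent \pi_{X'} \mid X, \*S_X)$, instantiates \Cref{def:soluble task} with the two actions' roles swapped --- \Cref{def:soluble task} only grants $(\*Y\cap\De(X_i) \independent \pi_j \mid \*S_i, X_i)$ for $j<i$, which for this pair reads $(\*Y\cap\De(X') \independent \pi_X \mid \*S_{X'}, X')$, i.e., regime node on the temporally earlier action and conditioning at the later one, not what your Case (a) needs. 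Read your way, the statement you are actually proving is the converse of the lemma, and that converse is false: take an edit on a variable whose only influence is on a reward $Y \in \De(X')\setminus\De(X)$; the hypothesis at $X$ then holds trivially while the conclusion at $X'$ fails, so the ``extend through deterministic descendants to some $Y''\in\De(X)$'' step of your Case (b) cannot be made to work.

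Beyond the direction issue, the one genuinely nontrivial step --- replacing the blocking set $\{X\}\cup\*S_X$ by $\{X'\}\cup\*S_{X'}$ --- rests on an ``anticipation invariant'' that is never established and is not a routine graphoid manipulation, since $\*S_{X'}$ need neither contain nor be contained in $\*S_X$. The paper's proof avoids this entirely: it argues by contradiction, supposing an active path from $\tau^{(j)}$ to $\*Y_{X'}$ given $X',\*S_{X'}$, and splits on where the edited node $V$ sits relative to $X$ --- $V \in \*S_X$, $V \in \An(X)\setminus\*S_X$, or $V \notin \An(X)$. In the first two cases it constructs an open path from a pseudo-parent of $X$ to $\*Y_{X'}$ given $X',\*S_{X'}$, contradicting solubility together with $X'\prec X$; in the third it extends the assumed path through the leftmost collider (whose descendant must be $X'$) into a path to $\*Y_{X'}\subseteq\*Y_X$ that is open given $X,\*S_X$, contradicting the hypothesis. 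Your reduction to immediate predecessors by induction is harmless but unnecessary; the real work is this case analysis on the edited variable, which your sketch does not contain. I recommend restarting from the correct reading of $\prec$ and structuring the argument around the location of $V$.
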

\begin{proof}[Proof of \Cref{thm:expanded criterion}]
We will prove this by contradiction. Suppose $(\tau^{(j)} \notindependent \*Y_{X'} \big| X', \*S_{X'})$. There are three possibilities based on the type of node $V$ to which $\tau^{(j)}$ corresponds.
\begin{enumerate}[label=\roman*), leftmargin=20pt, topsep=0pt, parsep=0pt]
	\item If $V \in \*S_{X}$, we can show that the source task cannot be soluble by finding an open path from a pseudo parent of $X$ to $\*Y_{X'}$. Firstly, $X' \prec X$ in a soluble source task means $X' \in \De(X)$. When adding a pseudo parent $V'$ to $X$, the path $V' \rightarrow X \leftarrow V \leftarrow \tau^{(j)}$ is open under $X', \*S_{X'}$. By the assumption, there also exists an open path from $\tau^{(j)}$ to $\*Y_{X'}$ under $X', \*S_{X'}$, which means that path $V' \rightarrow X \leftarrow V \leftarrow \tau^{(j)}$ is also open under $X', \*S_{X'}$. Thus, this contradicts the definition of a soluble task and $X' \prec X$.
	\item If $V \notin \*S_{X}$ and $V \in \An(X)$, we can find a similar open path as in the previous case. By the assumption, there exists an open path $p$ from $\tau^{(j)}$ to $\*Y_{X'}$ under $X', \*S_{X'}$. But we have $X' \in \De(X)$ and thus $\*Y_{X'} \subseteq \*Y_X$. This open path must be blocked under $X, \*S_{X}$, which means that $X$ must observe a non-collider variable $Z$ on $p$. Without observing $Z$, $p$ will be open under $X, \*S_{X}$. Note that any colliders on $p$ will have $X'$ as their descendant since $p$ is open under $X', \*S_{X'}$, which creates new open paths. So, $p$ won't be blocked because of colliders. Now, a pseudo parent of $X$, say $V'$, will again have an open path towards $\*Y_{X'}$ under $X', \*S_{X'}$, $V' \rightarrow X \leftarrow Z \cdots \rightarrow \*Y_{X'}$, where $Z \cdots \rightarrow \*Y_{X'}$ is part of the $p$ path. Thus, this contradicts the definition of a soluble task and $X' \prec X$.
	\item If $V \notin \An(X)$, we prove contradictions by checking all the possible path types between $V$ and $\*Y_{X'}$. If the open path $p$ from $V$ to $\*Y_{X'}$ under $X', \*S_{X'}$ is causal, $p$ must be blocked under $X, \*S_{X}$ by having $X$ observe variables on $p$, which contradicts with the condition that $V \notin \An(X)$. If $p$ is not causal and colliders exist, say $Z$, $X'$ must be a descendant of such colliders. Let $Z$ be the leftmost collider on $p$. This leads to another causal path $p'$, $\tau^{(j)}\rightarrow V \rightarrow \cdots \rightarrow Z \rightarrow \cdots \rightarrow X' \rightarrow \cdots \rightarrow \*Y_{X'}$. Clearly, $X$ cannot block $p'$ by observing any variables on it. Thus, $p'$ is open under $X, \*S_X$ which contradicts with the condition that $(\tau^{(j)} \independent \*Y_X \big| X, \*S_{X})$.
\end{enumerate}
Thus, we have proved that if $(\tau^{(j)} \independent \*Y_X \big| X, \*S_{X})$ in a soluble source task, for any $X' \prec X$, it also satisfies $(\tau^{(j)} \independent \*Y_{X'} \big| X', \*S_{X'})$.
\end{proof}

\begin{proof}[Proof of \Cref{thm:aligned source task}]
By definition, the optimal policy of the target task satisfies,
\begin{align}
    \pi^* = \argmax_{\pi \in \Pi}\InvE{\1R(\*Y)}{\pi}{\1M}.
\end{align}
Similar definitions hold for the optimal policy $\pi^{(j)}$ of a source task $\1T^{(j)}$.
Our goal is to show that once the graphical criterion holds for $\tau^{(j)}$ and action $X_i$, the optimal decision rule of $X_i$ on those shared input states of the source task and the target task is the same in these two tasks. 
We first simplify the calculation of the optimal decision rule at action $X_i$ using the concept of ``relevance graph''\citep{koller2003multi}.
\begin{definition}[Relevance Graph of Tasks]
\label{def:relevance graph}
The relevance graph, $\1G_{r}$, of a target task $\1T = \tuple{\1M, \Pi, \1R}$ is a directed graph whose nodes representing action variables $\*X \subseteq \*V$ are connected by directed edges $X' \rightarrow X$ if and only if $( \pi' \notindependent \*Y\cap\De(\*X) | \*S_X, X)$ where $\pi'$ is an added regime node pointing to $X'$.
\end{definition}

This relevance graph specifies the order in which those actions should be optimized. Intuitively, $X'$ is optimized before $X$ because it can affect $X$'s reward signal while $X$'s inputs, $\*S_X$, and $X$ itself cannot block such causal effects. We denote the topological order over actions in $\1G_r$ by $\prec$ where $X'\prec X$ if and only if $X'$ should be optimized before $X$.
It is also possible that the relevance graph contains Strongly Connected Components (SCCs) where each pair of actions in the same SCC is connected by a directed path. Semantically, actions in the same SCC affect the rewards of each other. Thus, one must consider all actions in the same SCC to find the optimal policy for these actions~\citep{koller2003multi}. Formally speaking, let $\1L$ be an equivalence relationship over $\*X$ such that action $X, X'\in\*X$ belong to the same partition, $X \sim_\1L X'$, (equivalently, $X\in [X']_\1L$ or $X'\in [X]_\1L$) if and only if they are in the same maximal SCC of a target task $\1T$'s relevance graph, $\1G_r$.\footnote{for simplicity, we will use $[X]$ to denote $[X]_\1L$ in the following discussion.} When there is no SCC in the relevance graph, we say this target task is soluble~\Cref{def:soluble task}, which can be solved by a series of dynamic programs~\citep{koller2003multi,lauritzen2001representing}. When there is an SCC, we can treat each SCC as a single high-level action taking value as the combination of all the actions in that SCC, i.e., $\Omega([X_i]) = \times_{X \in [X_i]} \Omega(X)$. After eliminating all such SCCs in the relevance graph, we transform the task back to a soluble one. So, the same solver can be used. Now we show that given an optimal target task policy, the decision rule at $[X_i]$ is optimal if and only if for every input $\*s_{[X_i]} \in \Omega(\*S_{[X_i]};\pi^*)$, 
\begin{align}
    \pi^*_{[X_i]}(\cdot|\*s_{[X_i]}) = \argmax_{\pi_{[X_i]}(\cdot|\*s_{[X_i]})}\InvE{\1R(\*Y_{[X_i]})\mid\*s_{[X_i]}}{\pi_{[X_i]}\cup \pi^*_{\prec [X_i]})}{\1M},
    \label{eq:optimality equation}
\end{align}
where $\*Y_{[X_i]} = \De([X_i]) \cap \*Y$ is the set of rewards that are descendants of actions in the SCC of $X_i$ and $\pi^*_{\preceq X_i}$ is the set of optimal decision rules of actions that  precede $[X_i]$ w.r.t the relevance graph.
By definition, the decision rule at $[X_i]$ is optimal if it maximizes the expected reward function when other parts of the optimal policy are given,
\begin{align}
    \pi^*_{[X_i]} = \argmax_{\pi_{[X_i]}}\InvE{\1R(\*Y)}{\pi_{[X_i]}\cup (\pi^*\setminus\pi^*_{[X_i]})}{\1M}.
    \label{eq:optimality by def}
\end{align}
We can expand the expected reward as follows,
\begin{align}
    \InvE{\1R(\*Y)}{\pi_{[X_i]}\cup (\pi^*\setminus\pi^*_{[X_i]})}{\1M} = \sum_{\*s_{[X_i]}}P(\*s_{[X_i]})\InvE{\1R(\*Y)\mid\*s_{[X_i]}}{\pi_{[X_i]}\cup (\pi^*\setminus\pi^*_{[X_i]})}{\1M}.
\end{align}
Clearly, the input distribution of $\*S_{[X_i]}$ is fixed given other parts of the optimal policy so \Cref{eq:optimality by def} is equivalent to for every input $\*s_{[X_i]} \in \Omega(\*S_{[X_i]};\pi^*)$, 
\begin{align}
    \pi^*_{[X_i]}(\cdot|\*s_{[X_i]}) &= \argmax_{\pi_{[X_i]}(\cdot|\*s_{[X_i]})}\InvE{\1R(\*Y)\mid\*s_{[X_i]}}{\pi_{[X_i]}\cup (\pi^*\setminus\pi^*_{[X_i]})}{\1M}\\
    &= \argmax_{\pi_{[X_i]}(\cdot|\*s_{[X_i]})} \sum_{\*y, [x_i]}\1R(\*y)P(\*y|\*s_{[X_i]},[x_i];\pi^*\setminus\pi^*_{[X_i]})\pi([x_i]|\*s_{[X_i]}).
\end{align}
Since non-descendants of $[X_i]$ are independent of $[X_i]$ given $\*S_{[X_i]}$ and actions are not confounded, we can further reduce the reward function to focus on only $\*Y_{[X_i]}$, rewards that are descendants of $[X_i]$, assuming that $\1R(\cdot)$ is cumulative,
\begin{align}
    \pi^*_{[X_i]}(\cdot|\*s_{[X_i]}) &= \argmax_{\pi_{[X_i]}(\cdot|\*s_{[X_i]})} \sum_{\*y_{[X_i]}, [x_i]}\1R(\*y_{[X_i]})P(\*y_{[X_i]}|\*s_{[X_i]},[x_i];\pi^*\setminus\pi^*_{[X_i]})\pi([x_i]|\*s_{[X_i]}).
\end{align}
From the relevance graph definition, we know that only actions that precede $[X_i]$ in the relevance graph will affect $\*Y_{[X_i]}$ given $\*S_{[X_i]}, [X_i]$. Thus, we can simplify the conditioning further,
\begin{align}
    \pi^*_{[X_i]}(\cdot|\*s_{[X_i]}) &= \argmax_{\pi_{[X_i]}(\cdot|\*s_{[X_i]})} \sum_{\*y_{[X_i]}, [x_i]}\1R(\*y_{[X_i]})P(\*y_{[X_i]}|\*s_{[X_i]},[x_i];\pi^*_{\prec [X_i]})\pi([x_i]|\*s_{[X_i]})\\
    &= \argmax_{\pi_{[X_i]}(\cdot|\*s_{[X_i]})}\InvE{\1R(\*Y_{[X_i]})\mid\*s_{[X_i]}}{\pi_{[X_i]}\cup \pi^*_{\prec [X_i]}}{\1M},
\end{align}
which is exactly \Cref{eq:optimality equation}. We can do a similar derivation for the optimal decision rule at $[X_i]$ in source task $\1T^{(j)}$, for every input $\*s_{[X_i]} \in \Omega^{(j)}(\*S_{[X_i]};\pi^{(j)})$, 
\begin{align}
    \pi^{(j)}_{[X_i]}(\cdot|\*s_{[X_i]}) &= \argmax_{\pi_{[X_i]}(\cdot|\*s_{[X_i]})} \sum_{\*y_{[X_i]}, [x_i]}\1R(\*y_{[X_i]})P(\*y_{[X_i]}|\*s_{[X_i]},[x_i], \tau^{(j)};\pi^{(j)}_{\prec [X_i]})\pi([x_i]|\*s_{[X_i]})\\
    &= \argmax_{\pi_{[X_i]}(\cdot|\*s_{[X_i]})}\InvE{\1R(\*Y_{[X_i]})\mid\*s_{[X_i]}}{\pi_{[X_i]}\cup \pi^{(j)}_{\prec [X_i]}}{\1M^{(j)}},
\end{align}
where $\tau^{(j)}$ is the edit indicator $\pi^{(j)}$ is the optimal policy of source task $\1T^{(j)}$. In practice, we can let actions in each SCC $X_i$ have an edge into every reward node associated with the SCC and still the graph is compatible with the original task. Then, we only need to show that once the graphical criterion is satisfied in such a graph, for any input $\*s_i \in \D^{(j)}(\*S_i; \pi^{(j)}) \cap \D(\*S_i;\pi^*)$, the following holds,
\begin{align}
    \label{eq:final optimality equation}
    P(\*y_{X_i}|\*s_{[X_i]},[x_i];\pi^*_{\prec [X_i]}) = P(\*y_{X_i}|\*s_{[X_i]},[x_i], \tau^{(j)};\pi^{(j)}_{\prec [X_i]}).
\end{align}
If this is true, the optimal decision rule at $X_i$ will be invariant across both the target task and the source task. 
We first consider the simpler case when there is no SCC in the relevance graph. So, $[X_i] = \{X_i\}$. In this case, we can apply the result of \Cref{thm:expanded criterion} directly and know that the graphical criterion is also satisfied by any action $X \prec X_i$. Then we prove \Cref{eq:final optimality equation} holds by induction on action $X_i$. The base case is there is no action preceding $X_i$ in the relevance graph. So, there will be no policy dependencies in \Cref{eq:final optimality equation}, and it will be trivially true given the graphical criterion. Now we assume when there are $k$ actions preceding $X_i$ in the relevance graph, and the graphical criterion holds, \Cref{eq:final optimality equation} will hold. When there are $k+1$ actions preceding $X_i$ in the relevance graph, by the inductive hypothesis and the fact that graphical criterion also holds for $X \prec X_i$ when it holds for $X_i$, we know the optimal decision rules of $X \prec X_i$ stay the same across the source task and the target task. Thus, \Cref{eq:final optimality equation} is reduced to $P(\*y_{X_i}|\*s_{X_i},x_i) = P(\*y_{X_i}|\*s_{X_i},x_i, \tau^{(j)})$, which is true when the graphical criterion holds, $(\tau^{(j)} \independent \*Y_{X_i}\mid X_i, \*S_{i})$ in $\1G_\pi$. 

When there are SCCs in the relevance graph, the graphical criterion only shows us that $(\tau^{(j)} \independent \*Y_{X_i}\mid \*S_{i}, X_i)$ in $\1G_{\pi}$. But we can show that $(\tau^{(j)} \independent \*Y_{X_i} \mid [X_i], \*S_{[X_i]})$ in $\1G_{\pi}$ also holds. Notice that the difference between these two criteria is that the latter includes more variables in the conditioning set. If the latter one doesn't hold, that means adding these variables opens up at least a collider path $p$ that is blocked under the original criterion. Say this collider on $p$ is $M$, and it's ancestral to an action $X_j \in [X_i]$. Clearly, there is an active path from $\tau^{(j)}$ to $M$ and there is also an active causal path from $M$ to $\*Y_{X_j}$ under $\*S_i, X_i$. But we also know that by the condition that $X_j \in [X_i]$, $\*Y_{X_j} \cap \*Y_{X_i}$. This means that there is also an active path from $\tau^{(j)}$ to $\*Y_{X_i}$ under $\*S_i, X_i$ which clearly contradicts with the fact that $(\tau^{(j)} \independent \*Y_{X_i}\mid X_i, \*S_{i})$ in $\1G_\pi$. Thus, $(\tau^{(j)} \independent \*Y_{X_i} \mid [X_i], \*S_{[X_i]})$ in $\1G_{\pi}$ holds. Now, we can view the whole $[X_i]$ as one high-level action and follow a similar induction procedure as in the simpler case, which completes the proof.

\end{proof}

\begin{proof}[Proof of \Cref{thm:ase correctness}]
We first show that the maximal editable set w.r.t. a set of actions is unique. Let $K = \max_{\*V_I^{(j)}} |\*V_I^{(j)}|$. If there are two maximal admissible sets $\*V_1, \*V_2$ w.r.t $\*X{(j)}$, they satisfy $|\*V_1| = |\*V_2| = K$ but $\*V_1 \neq \*V_2$. Interchangeably, we can assume a state variable $V\in \*V_1$ but $V\notin \*V_2$ exists. Since $V$ satisfies our criterion, so does every variable in $\*V_2$. Then by \Cref{def:admissible states}, we know the set $\*V' = \*V_2 \cup \{V\}$ is admissible w.r.t $\*X^{(j)}$, which contradicts with the fact that $\*V_2$ is the maximal set since $|\*V' | = K+1$. Thus, this is impossible to happen.
 
By the uniqueness, we only need to search for one maximal editable set w.r.t a given set of actions $X^{(j)}$. For each variable, by \Cref{def:admissible states}, it has to be admissible w.r.t $\*X^{(j)}$ before it can be added to the admissible set $\Delta^{(j)}$. Thus, we loop through all state variables and check their editability. If a single state variable is not admissible w.r.t an action, $X \in \*X^{(j)}$, we don't need to check its editability w.r.t other actions further. So, we can break the loop there. The editability check is done on the augmented graph $\1G_{\pi}$ where a pseudo edit indicator $\tau$ is added, pointing to $V$, the state variable being checked. The correctness of this step is guaranteed by the correctness of \textsc{TestSep}~\citep{findsepsets}.
\end{proof}

\begin{proof}[Proof of \Cref{thm:causally aligned curriculum}]
Since every source task we use is causally aligned, the optimal decision rules of actions in $\*X^{(j)}$ will be invariant across the target task and the source task $\1T^{(j)}$. The same is true for the set of actions $\*X^{(j+1)}$. By our construction, we have $\*X^{(j)} \subseteq \*X^{(j+1)}$, and each action corresponds to exactly one element in $\pi^{(j)}\cap \pi^*, \pi^{(j+1)} \cap \pi^*$, respectively. Thus, the set of invariant optimal decision rules is also expanding.
\end{proof}

\begin{proof}[Proof of \Cref{corol:construct curriculum}]
    By the correctness of \textsc{FindMaxEdit}, we know that every source task generated by \textsc{Gen}($\1T$, $\Delta^{(j)}$) will be causally aligned w.r.t $\*X^{(j)}$. Then by the way we construct $\*X^{(j)}$, it is guaranteed that $\*X^{(j)} \subseteq \*X^{(j+1)}$. Thus, the returned curriculum of \textsc{FindCausalCurriculum} will be causally aligned.  
\end{proof}


\begin{proof}[Proof of \Cref{thm:expanded action sets}]
This is a direct result of applying \Cref{thm:expanded criterion} to soluble target tasks.
\end{proof}

\begin{proof}[Proof of \Cref{thm:optimal curriculum learning}] 
\label{pf:correctness of causal curriculum alg}
The algorithm works by creating source tasks with an expanding set of actions. The set of actions expands in the direction that follows the soluble ordering, $\*X' = \{X_N, X_{N-1}, ..., X_1\}$. Then, by \Cref{thm:expanded action sets}, the editable set can be calculated w.r.t only the newly added action in this round ($X_i$). The while loop ensures we generate enough source tasks to cover all the possible state inputs to $X_i$. Note that we don't require the final output $\pi^*(\1C)$ to be optimal in all source tasks. It is still the optimal target task policy. Because given the expanding action sets when constructing source tasks, optimal decision rules of $\*X^{(j-1)}$ learned in $T^{(j-1)}$ are still optimal in $T^{(j)}$ and the final output policy $\pi^*(\1C)$ will contain optimal decision rules for all actions $\*X$ in the target task.
\end{proof}

\section{Experiment and Implementation Details}
\label{sec:exp details}
This section introduces the details of our experiments, including environment specifications, agent hyper-parameters, training/testing protocols, and more experimental results. For the Colored Sokoban and the Button Maze, we implemented a Proximal Policy Optimization (PPO) agent with independent actor and critic networks in PyTorch~\citep{schulmanProximalPolicyOptimization2017}. Both networks have three convolutional layers and two linear layers with rectified linear unit (ReLU) activation functions. The number of output channels of the convolutional layers is $[32, 64, 64]$ with each layer. For convolutional kernels in those three convolutional layers, we use $8\times 8$ with a stride of 4, $4\times4$ with a stride of $2$, and $3\times 3$ with a stride of $1$, respectively.
We flatten the output of the convolutional layers and feed it into the two linear layers with the intermediate feature dimension set to $512$. The input for the network is an image observation of size $84\times 84 \times 3$ for both environments. For hyper-parameters, we follow the default hyper-parameters from \cite{shengyi2022the37implementation} on which the implementation is also heavily based. For the Continuous Button Maze environment, we use an Soft Actor Critic (SAC) agent with low-dimensional state vector inputs following the implementations by \citeauthor{huang2022cleanrl} and adopting the hyper parameters setting from \citeauthor{currot}.
For the four curriculum generators used, we adopted the implementation from \cite{currot}'s official implementations (https://github.com/psclklnk/currot). 
Note that even though all three environments are confounded, we still use MDP-based policy learning algorithms. Because for those environments, in both target task and aligned source tasks, the confounder is revealed by other variables without intervention. Thus, the agent has perfect information to decide which state it is in exactly. So, we can still use PPO and SAC to find the optimal policy.

\subsection{Environment Specifications}

For Colored Sokoban, the target task definition is already specified in \Cref{exp:harmful}. 

For Button Maze, at each time step, let $C_i$ be the target location's color, $B_i$ be the button status of whether it has been pushed or not, $L_i$ be the agent's current location, $Y_i$ be the reward for this step and $X_i$ be the agent's action in this step. $B_i = \neg B_{i-1}$ when the agent pushes the button. The goal location's color $C_i = U_i$ before the button is pushed but $C_i = U_C$ after the button is pushed, where $P(U_i = 1) =1/2, P(U_C = 1)= 1/5$. The reward function is specified as follows,
\begin{align}
Y_i = \begin{cases}  
    1  &\mbox{if } B_i = \text{``next to goal''}\wedge X_i = \text{``move forward''} \wedge (U_C=1) \\
    -1 &\mbox{if } B_i = \text{``next to goal''}\wedge X_i = \text{``move forward''} \wedge (U_C=0)  \\
    -0.1 &\mbox{if } L_i = L_{i-1}\\
    -0.01 &\mbox{otherwise}\\
\end{cases}.
\end{align}

\begin{figure}
    \centering
    \includegraphics[width=.2\linewidth]{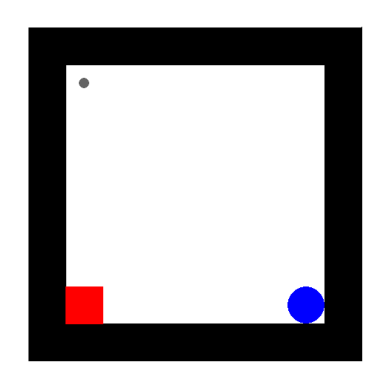}
    \caption{Continuous Button Maze. The agent (the grey dot) needs to navigate the room and step into the goal region (the region in the bottom left) at the right time. The goal region flashes between red and green. And after pushing the button, it will be always green.}
    \label{fig:conbutton-maze}
\end{figure} 
In the Continuous Button Maze environment~\Cref{fig:conbutton-maze}, similar to the grid-world button maze, the agent also must navigate to the target region at the right time. The only difference is that this time the environment is an open area and all states and actions are in the continuous domain, which is exponentially large. The optimal strategy for the agent is still to push the button first then step onto the green goal region. Note that the goal region flashes between red and green color before pushing the button. Even if the agent step onto it when it shows green, it is still possible that the agent gets a negative reward. The environment is defined the same as the Button Maze with the only difference that we remove the not-moving penalty from the reward function,
\begin{align}
Y_i = \begin{cases}  
    1  &\mbox{if } B_i = \text{``next to goal''}\wedge X_i = \text{``move to the goal''} \wedge (U_C=1) \\
    -1 &\mbox{if } B_i = \text{``next to goal''}\wedge X_i = \text{``move to the goal''} \wedge (U_C=0)  \\
    -0.01 &\mbox{otherwise}\\
\end{cases}.
\end{align}
The curriculum generators are allowed to pick the initial location of the agents, whether to toggle the button at the beginning, and whether to interven the goal region color.

\subsection{Additional Experiment Results}
When reporting results, instead of using the cumulative rewards directly, as promoted in \cite{agarwal2021deep}, we report the Interquartile Mean (IQM) normalized by the maximum and minimum rewards in the corresponding environment. Specifically, for each experiment, we run five random seeds. Then, the normalized IQM is calculated as,
\begin{align}
    \textsc{NormalizedIQM} = {\frac{2}{n}} \sum_{i=\frac{n}{4}+1}^{\frac{3n}{4}}{\frac{(x_i - l)}{(u -l)}}
\end{align}
where $x_i$ are the original data points we collected and sorted, e.g., cumulative rewards, and $\forall x_i, x_i \in [b, u], b,u\in\mathbb{R}$ are the bounds for the data points. In our experiments, the bounds for the rewards are easily obtainable as they are both artificially designed game environments. In the implementation, we edit the environment by editing a set of predefined parameters. Each vector of parameters corresponds to a unique instance of the environment. 

In~\Cref{fig:continuous experiment}, we see that agent trained by causal agnostic curriculum generators fail to avoid misaligned source tasks and unable to converge to the optimal policy. Those agents trained by the curriculum generators perform even worse than those directly trained in the target task which verifies the necessity of avoiding misaligned source tasks empirically. After augmenting the same curriculum generators with our algorithm, agents trained by them all successfully converge to the optimal surpassing those trained directly in the target task. The result demonstrates that our method is indeed widely applicable to both high-dimensional and continuous domains and that utilizing qualitative causal knowledge properly is crucial to the successful application of curriculum learning in the confounded environments.

\begin{figure}[t]
    \centering
    \hspace{-0.2in}
    \hfill
     \begin{subfigure}[b]{.24\textwidth}
        \centering
        \includegraphics[width=\textwidth]{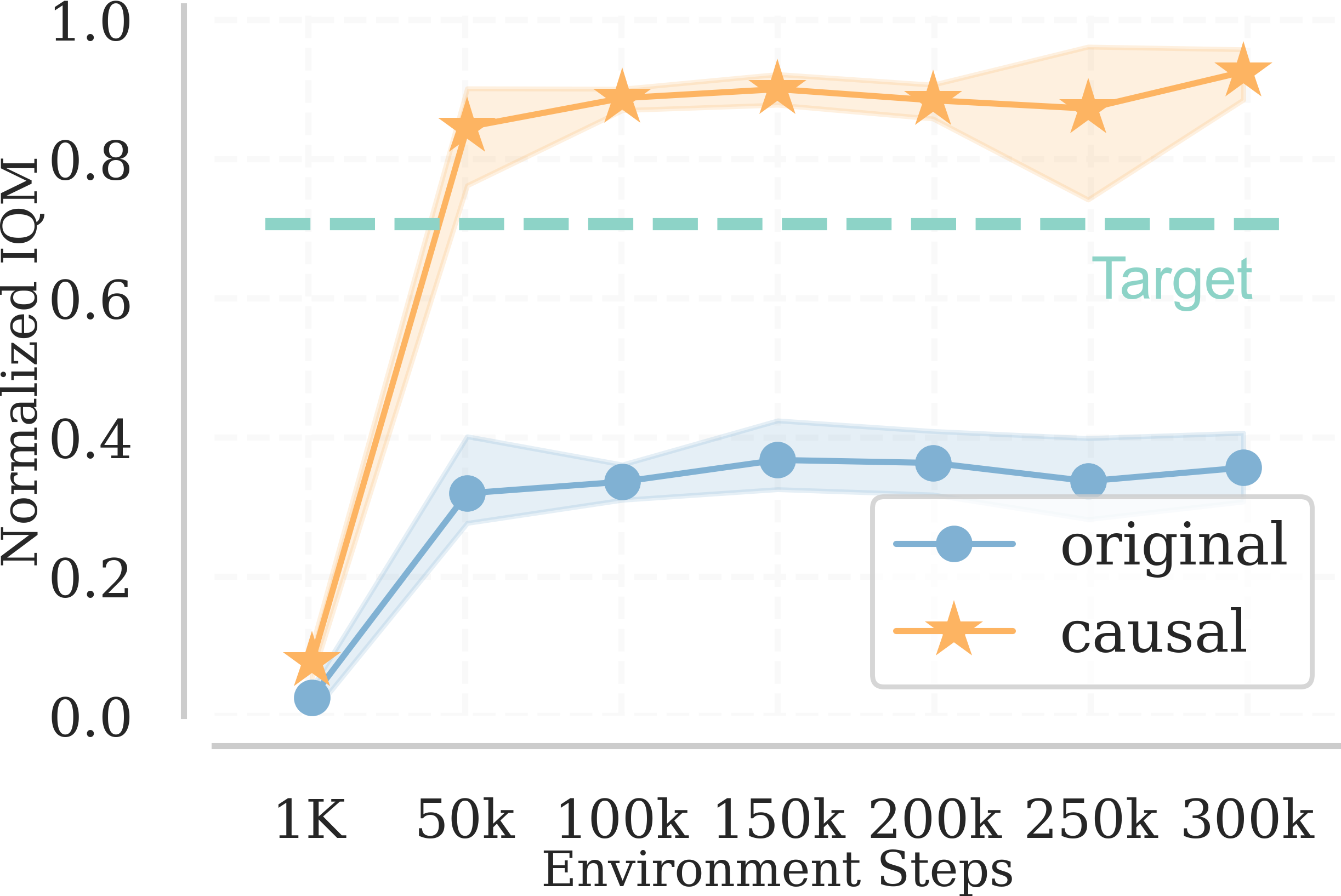}
        \caption{ALP-GMM}
    \end{subfigure}
    \hfill
    \begin{subfigure}[b]{.24\textwidth}
        \centering
        \includegraphics[width=\textwidth]{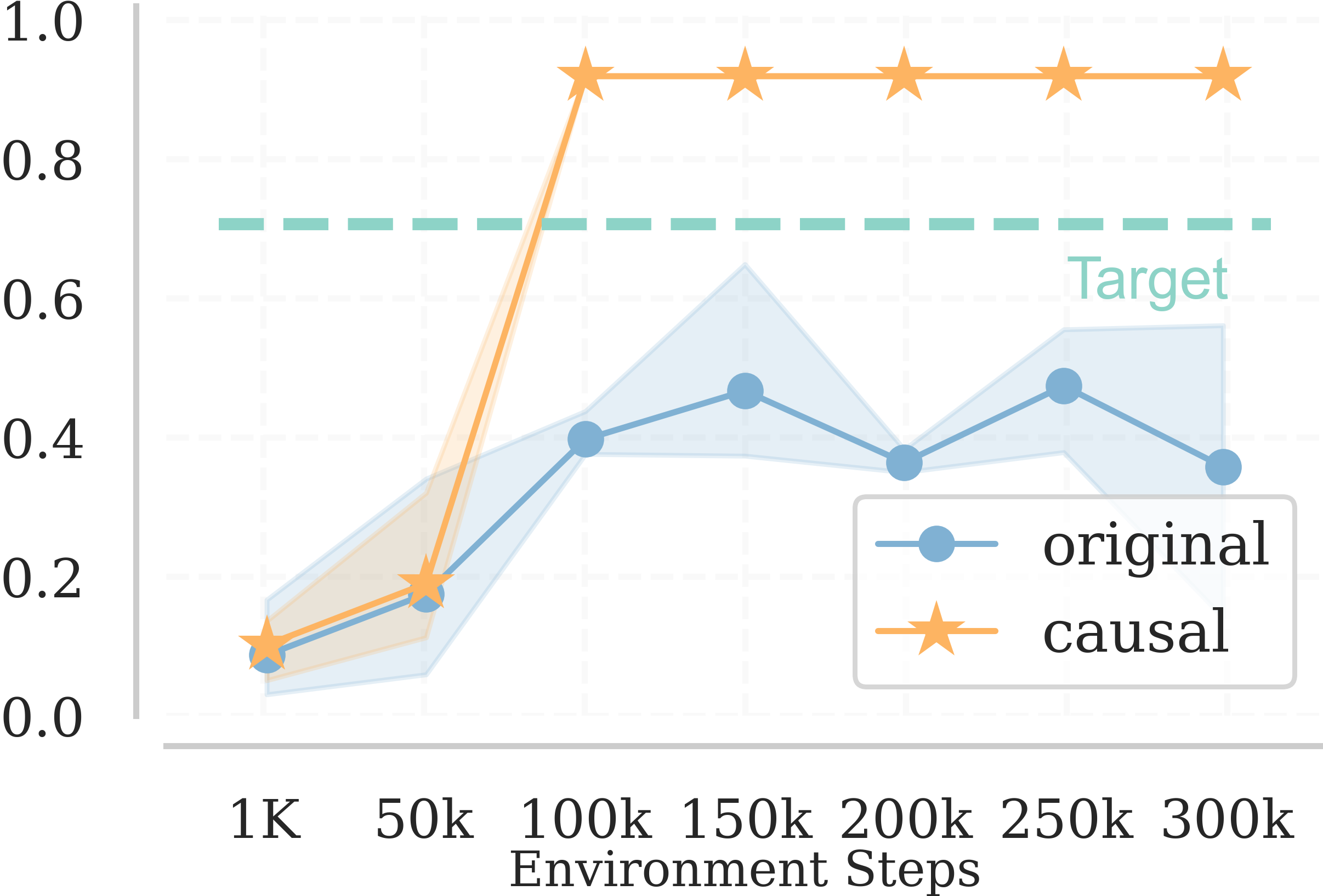}
        \caption{PLR}
    \end{subfigure}\hfill\null
    \hfill
    \begin{subfigure}[b]{.24\textwidth}
        \centering
        \includegraphics[width=\textwidth]{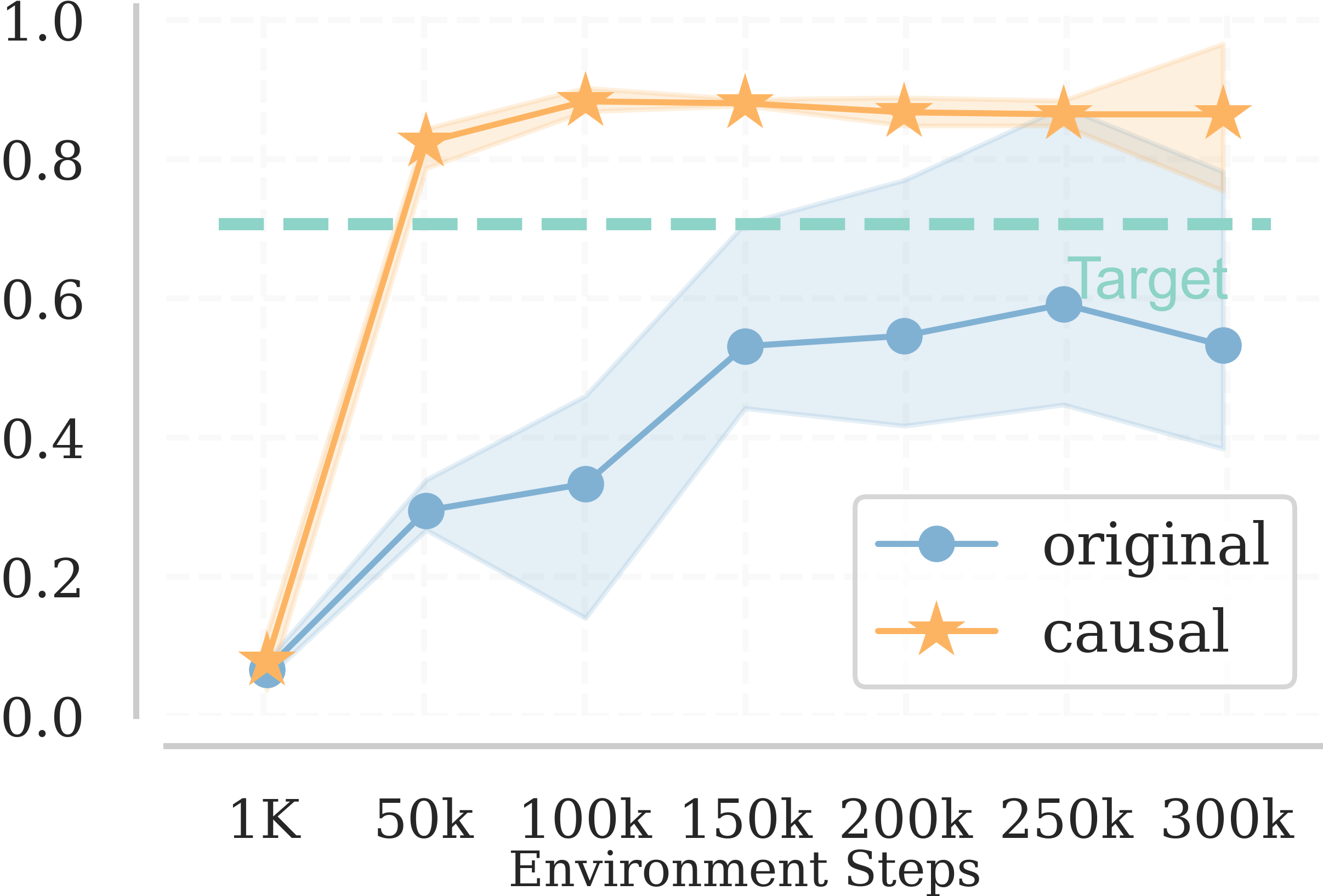}
        \caption{Goal-GAN}
    \end{subfigure}\hfill\null
    \hfill
    \begin{subfigure}[b]{.24\textwidth}
        \centering
        \includegraphics[width=\textwidth]{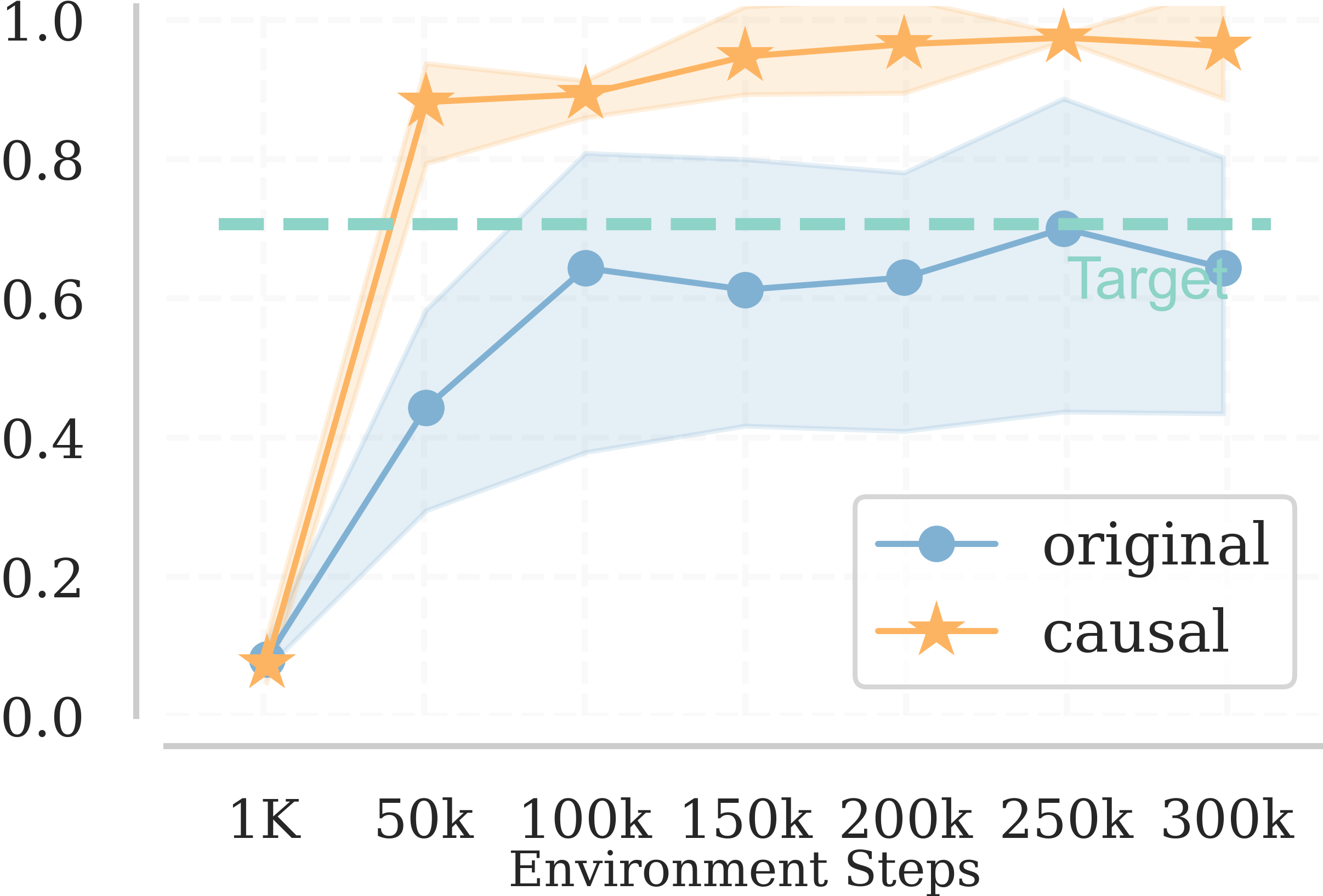}
        \caption{Currot}
    \end{subfigure}\hfill\null

    \caption{Target task performance of the agents at different training stages in the Continuous Button Maze using different curriculum generators (Columns). The horizontal green line shows the performance of the agent trained directly in the target. ``original'' refers to the unaugmented curriculum generator and ``causal'' refers to its causally augmented version.}
    \vspace{-0.2in}
    \label{fig:continuous experiment}
\end{figure}

\xadd{During the training process, we also counted the portions of misaligned source tasks generated by those non-causal curriculum generators. From~\Cref{tab:misaligned portion}, we see clearly that in all three environments, those curriculum generators fail to avoid misaligned source tasks and most of the source tasks proposed by them are actually misaligned. We report those portions with $95\%$ confidence intervals.}

\begin{table}[t]
\caption{Misaligned Source Task Portion.}
\label{tab:misaligned portion}
\begin{center}
\begin{tabular}{lcccc}
\multicolumn{1}{c}{\bf Env./Alg.}  &\multicolumn{1}{c}{\bf ALP-GMM}  &\multicolumn{1}{c}{\bf PLR} &\multicolumn{1}{c}{\bf Goal-GAN} &\multicolumn{1}{c}{\bf Currot}
\\ \hline \\
\textbf{Colored Sokoban}        & $68.10\pm0.71\%$ & $69.10\pm2.77\%$ & $66.41\pm1.84\%$ & $68.36\pm1.06\%$\\
\textbf{Button Maze}            & $88.41\pm0.43\%$ & $87.33\pm3.77\%$ & $90.30\pm0.27\%$ & $88.62\pm1.00\%$\\
\textbf{Con. Button Maze}       & $85.77\pm0.83\%$ & $83.57\pm14.5\%$ & $80.55\pm19.6\%$ & $47.70\pm32\%$\\
\end{tabular}
\end{center}
\end{table}

We also show examples of the curricula generated by those augmented generators in \Cref{fig:curriculum of colorbox,fig:curriculum of traffic,fig:curriculum of contraffic}.

\begin{figure}[t]
    \centering
    \hfill
    \begin{subfigure}[b]{.24\textwidth}
        \centering
        \includegraphics[width=\textwidth]{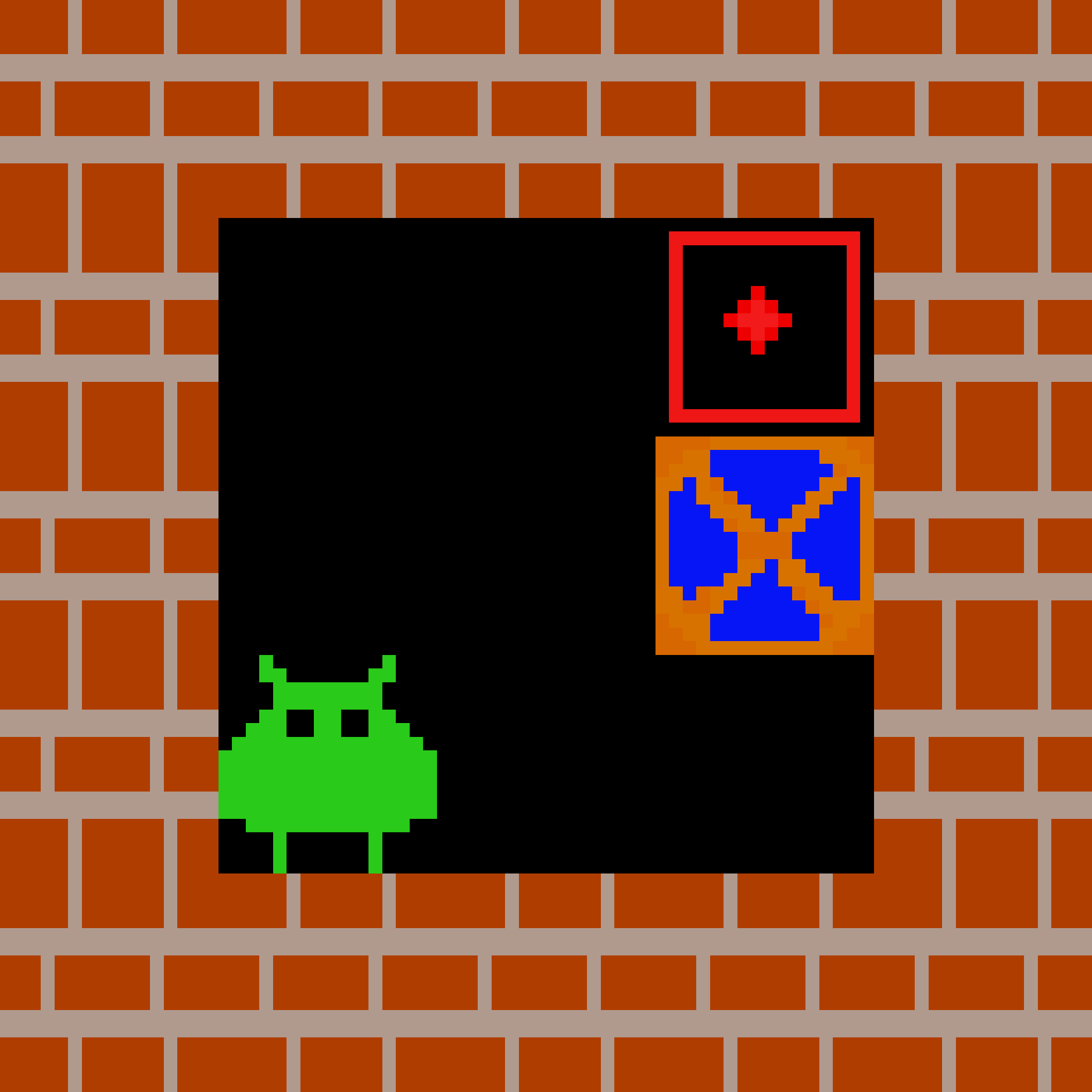}
        \caption{ALP-GMM@1K}
    \end{subfigure}
    \hfill
    \begin{subfigure}[b]{.24\textwidth}
        \centering
        \includegraphics[width=\textwidth]{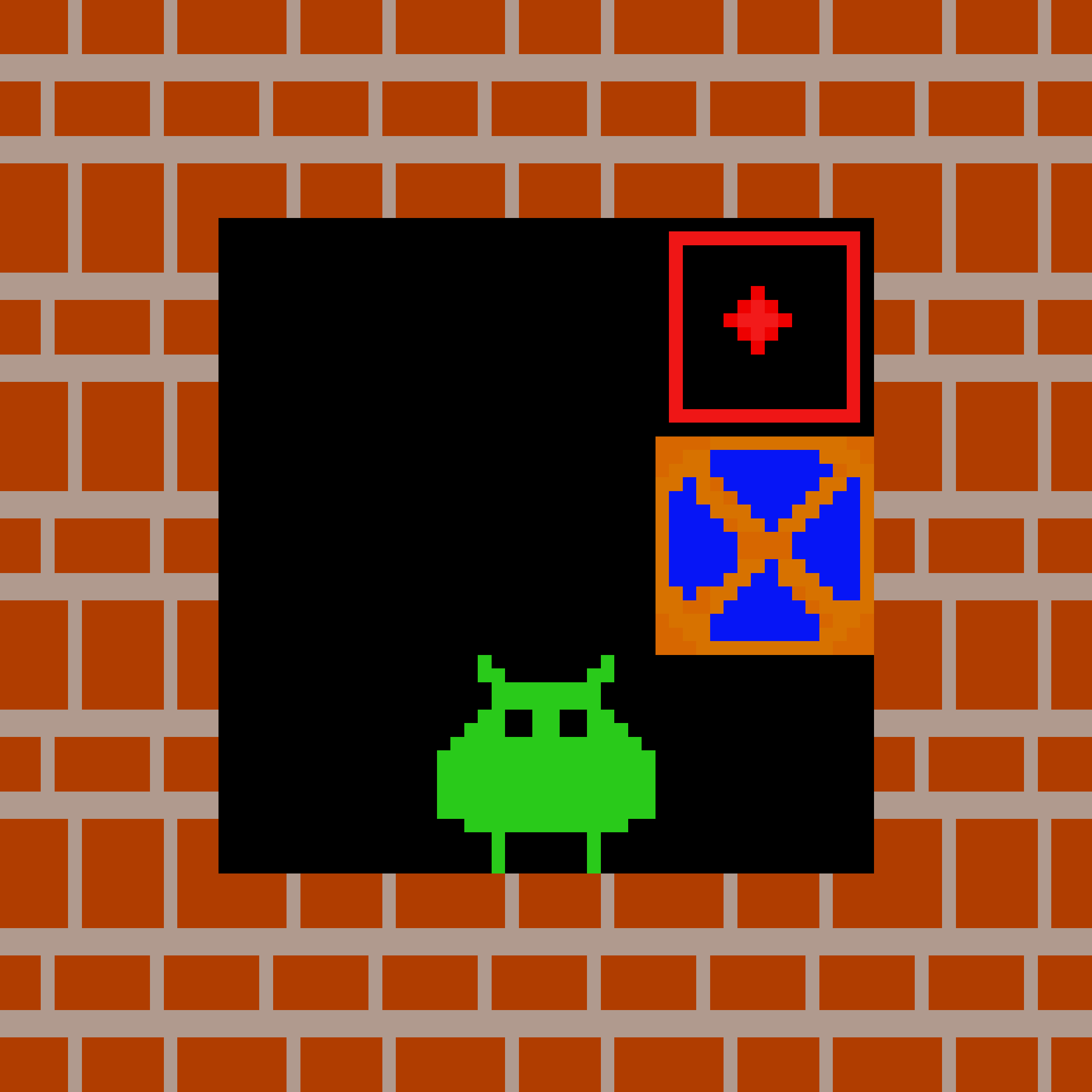}
        \caption{PLR@1K}
    \end{subfigure}\hfill\null
    \hfill
    \begin{subfigure}[b]{.24\textwidth}
        \centering
        \includegraphics[width=\textwidth]{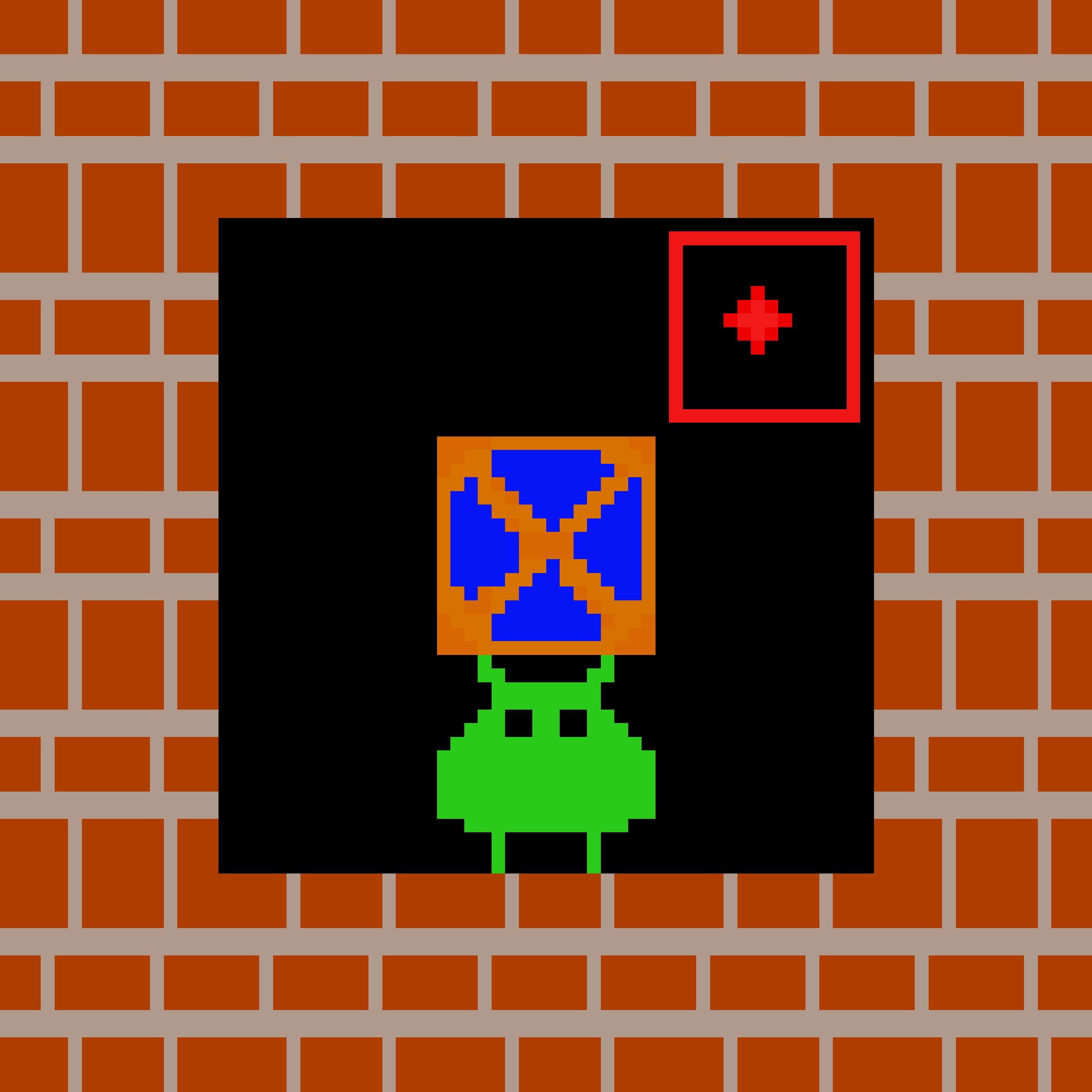}
        \caption{Goal-GAN@1K}
    \end{subfigure}\hfill\null
    \hfill
    \begin{subfigure}[b]{.24\textwidth}
        \centering
        \includegraphics[width=\textwidth]{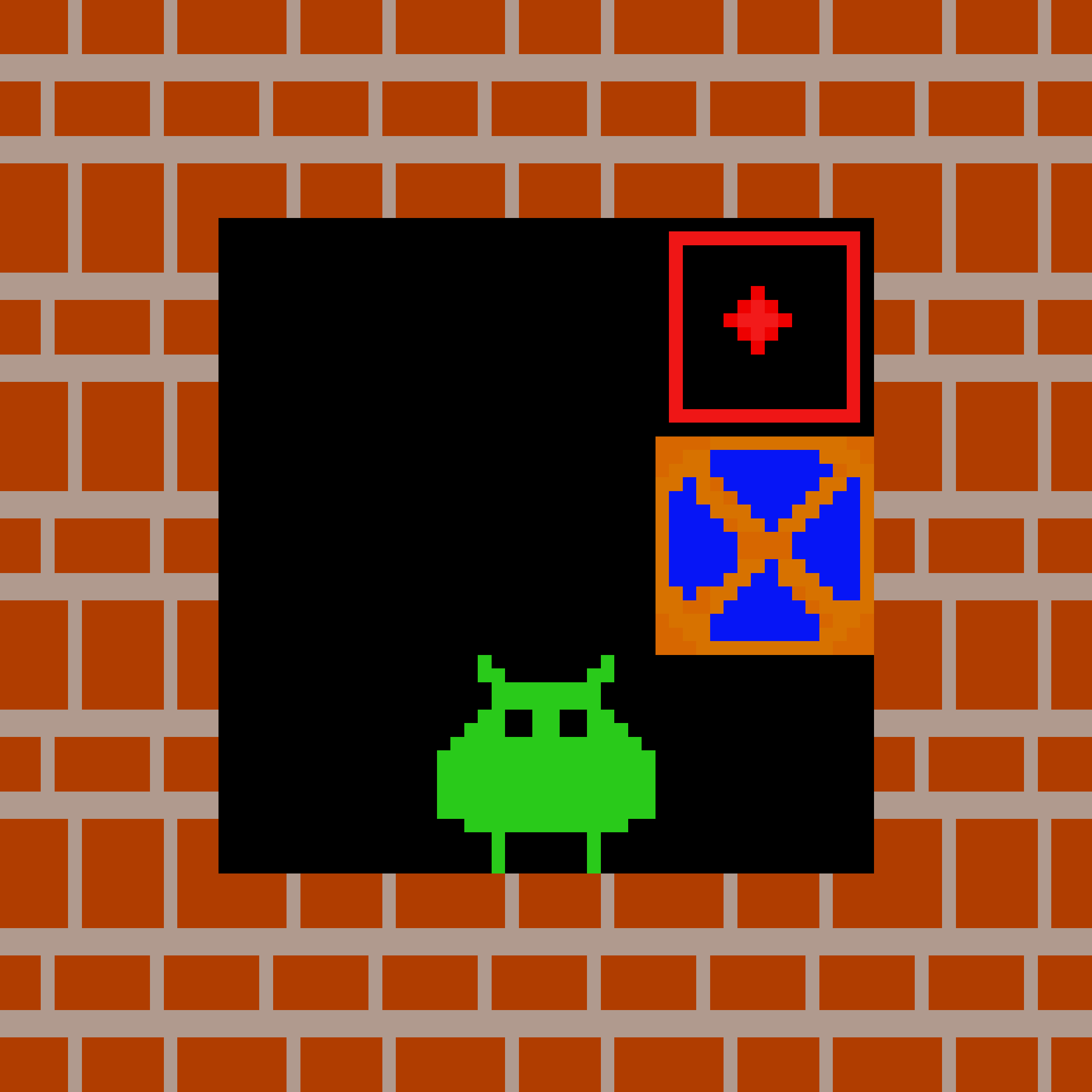}
        \caption{Currot@1K}
    \end{subfigure}\hfill\null
    \hfill
    \begin{subfigure}[b]{.24\textwidth}
        \centering
        \includegraphics[width=\textwidth]{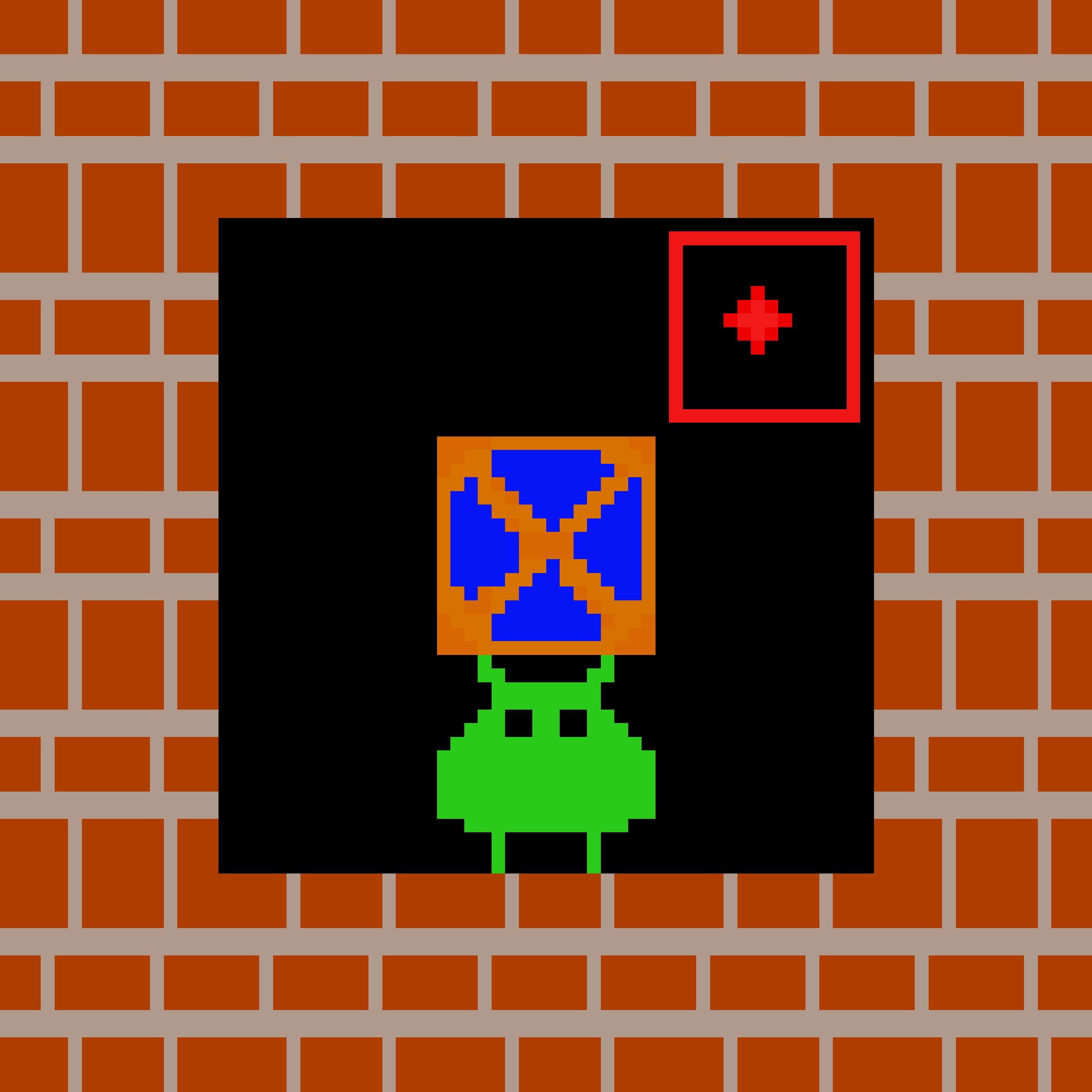}
        \caption{ALP-GMM@100K}
    \end{subfigure}
    \hfill
    \begin{subfigure}[b]{.24\textwidth}
        \centering
        \includegraphics[width=\textwidth]{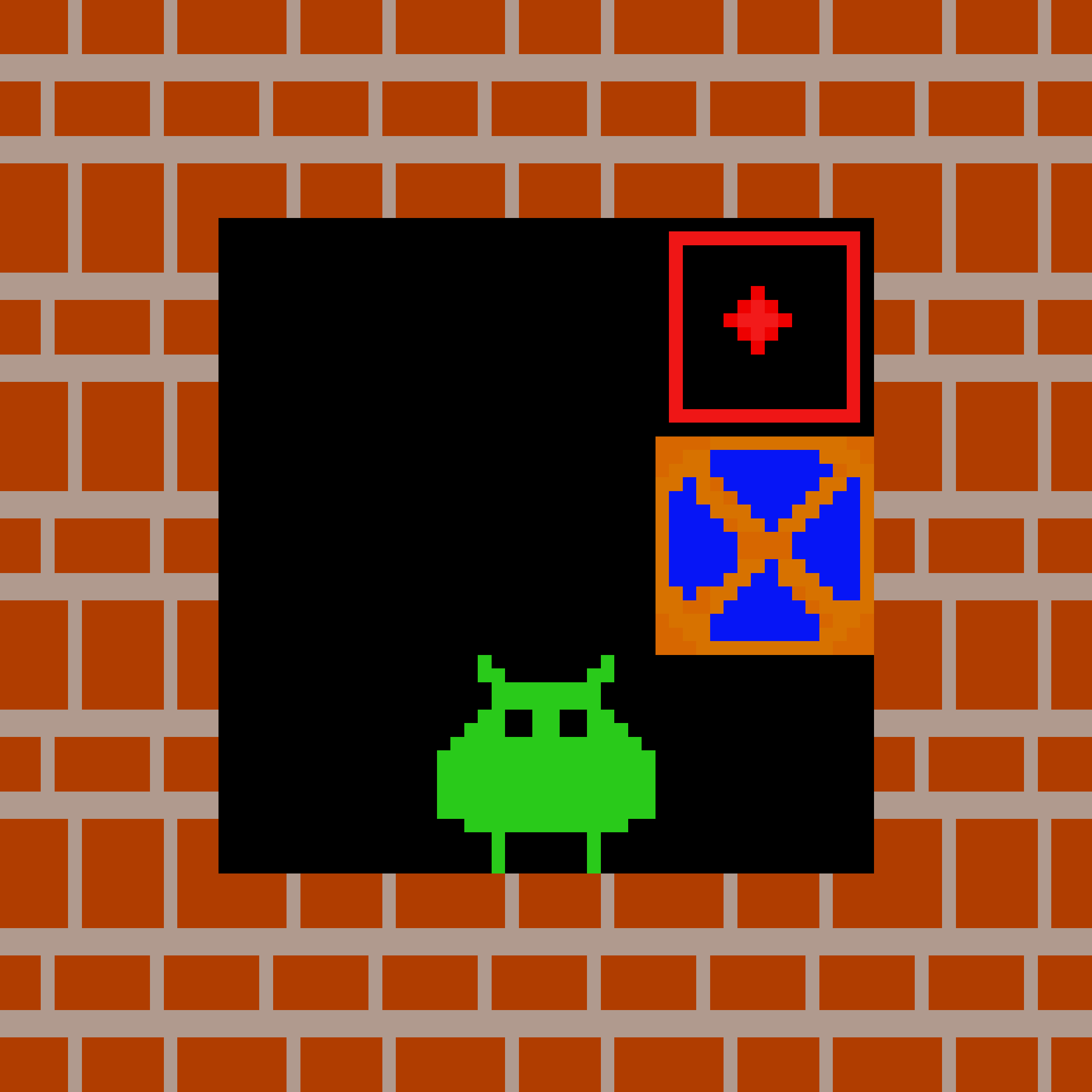}
        \caption{PLR@100K}
    \end{subfigure}\hfill\null
    \hfill
    \begin{subfigure}[b]{.24\textwidth}
        \centering
        \includegraphics[width=\textwidth]{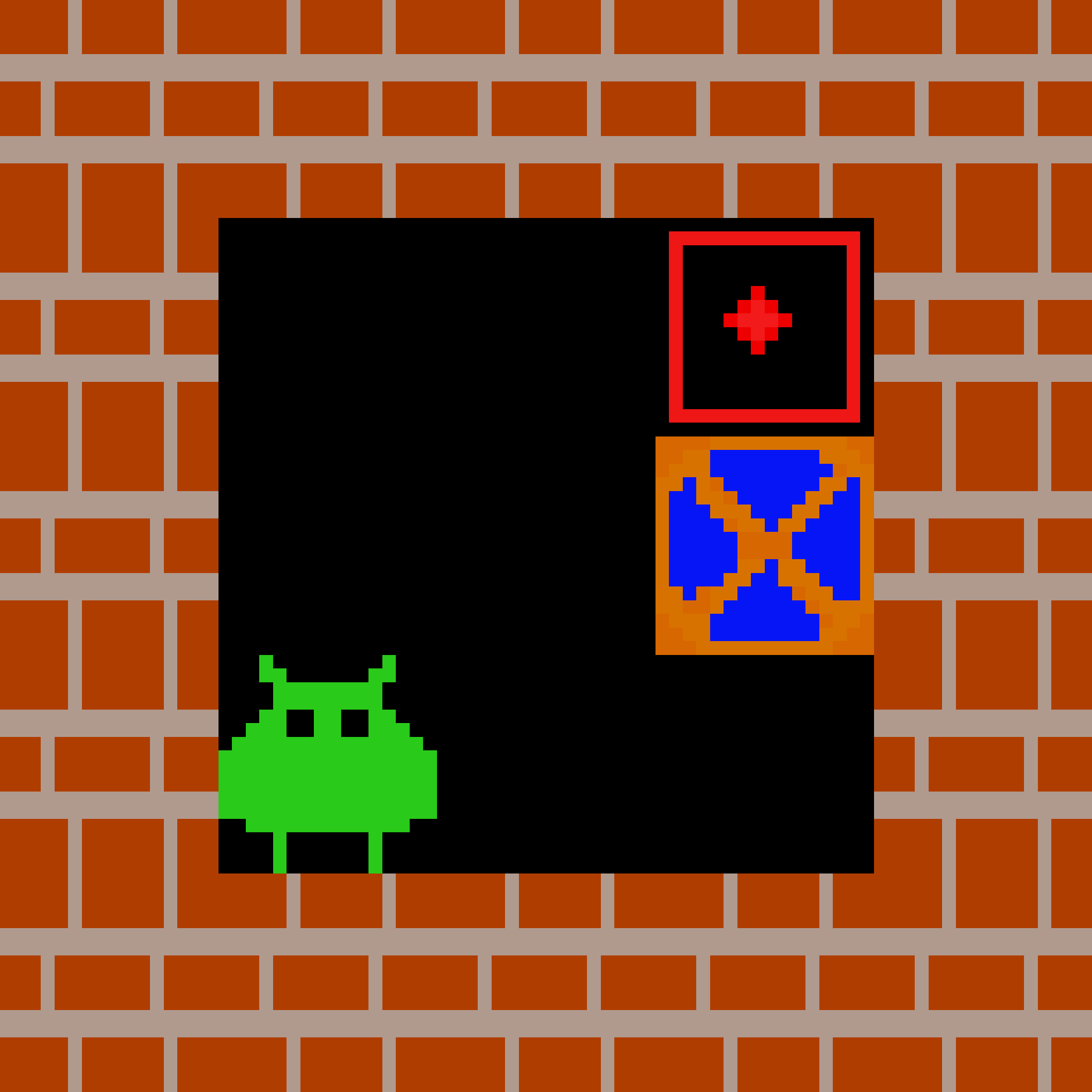}
        \caption{Goal-GAN@100K}
    \end{subfigure}\hfill\null
    \hfill
    \begin{subfigure}[b]{.24\textwidth}
        \centering
        \includegraphics[width=\textwidth]{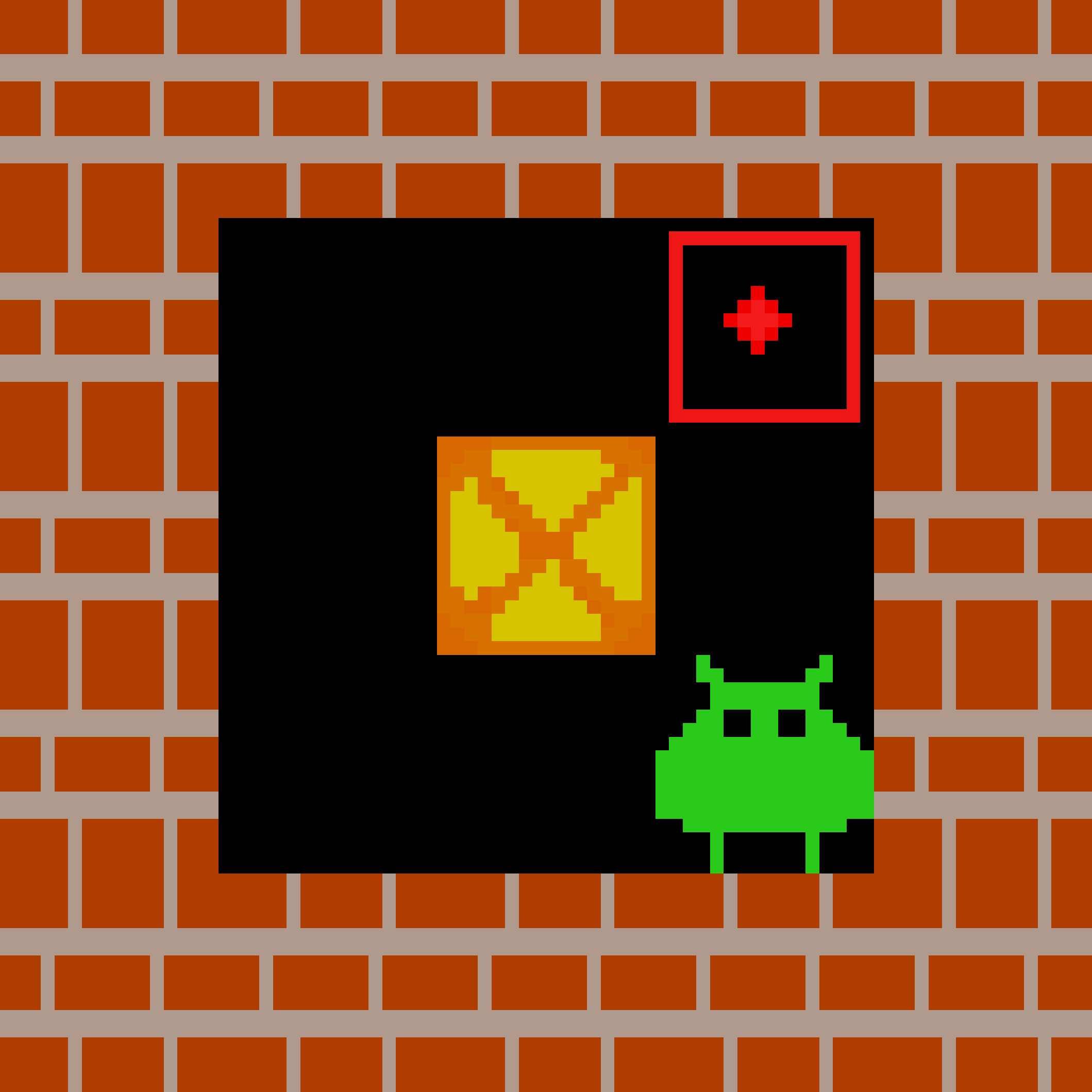}
        \caption{Currot@100K}
    \end{subfigure}\hfill\null
        \hfill
     \begin{subfigure}[b]{.24\textwidth}
        \centering
        \includegraphics[width=\textwidth]{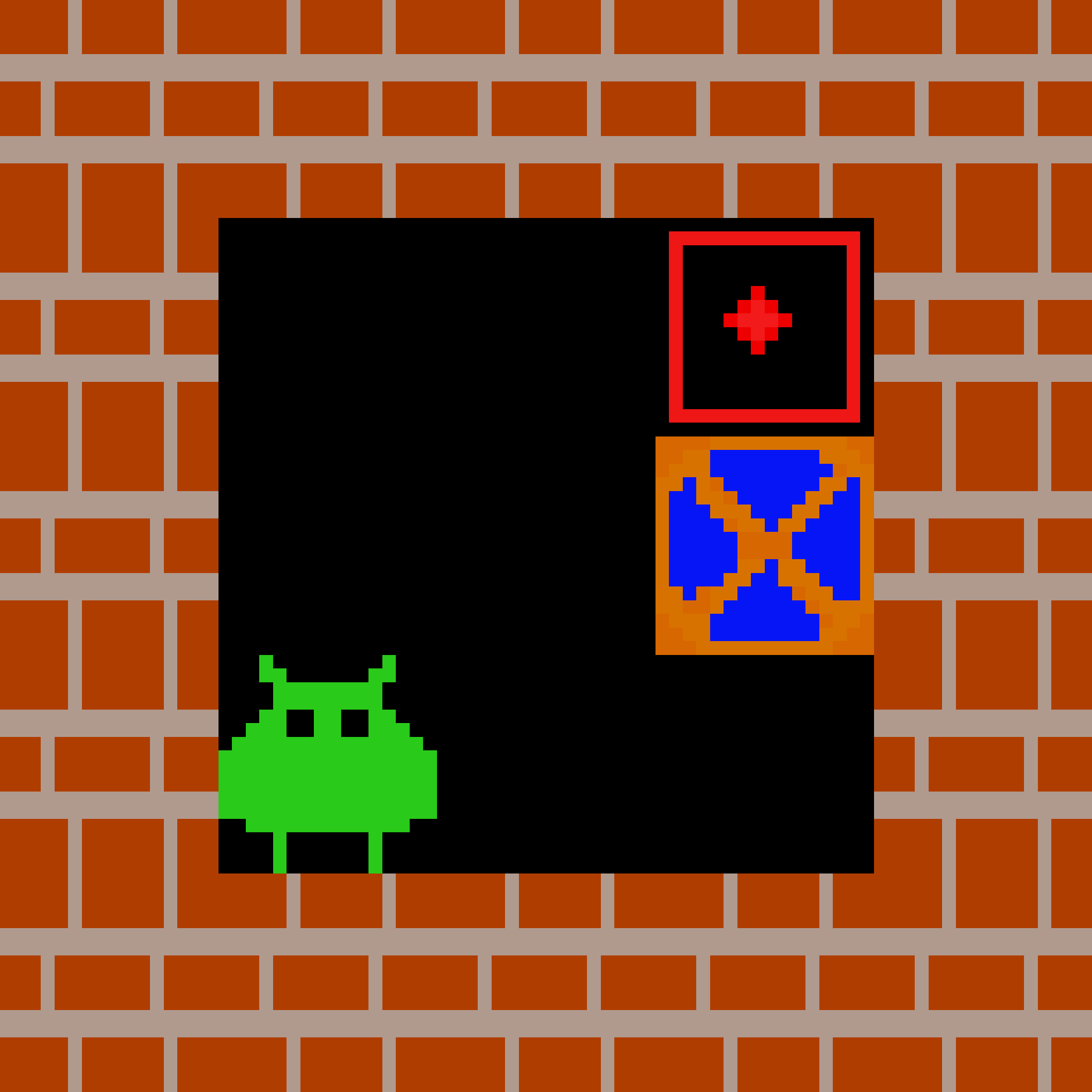}
        \caption{ALP-GMM@200K}
    \end{subfigure}
    \hfill
    \begin{subfigure}[b]{.24\textwidth}
        \centering
        \includegraphics[width=\textwidth]{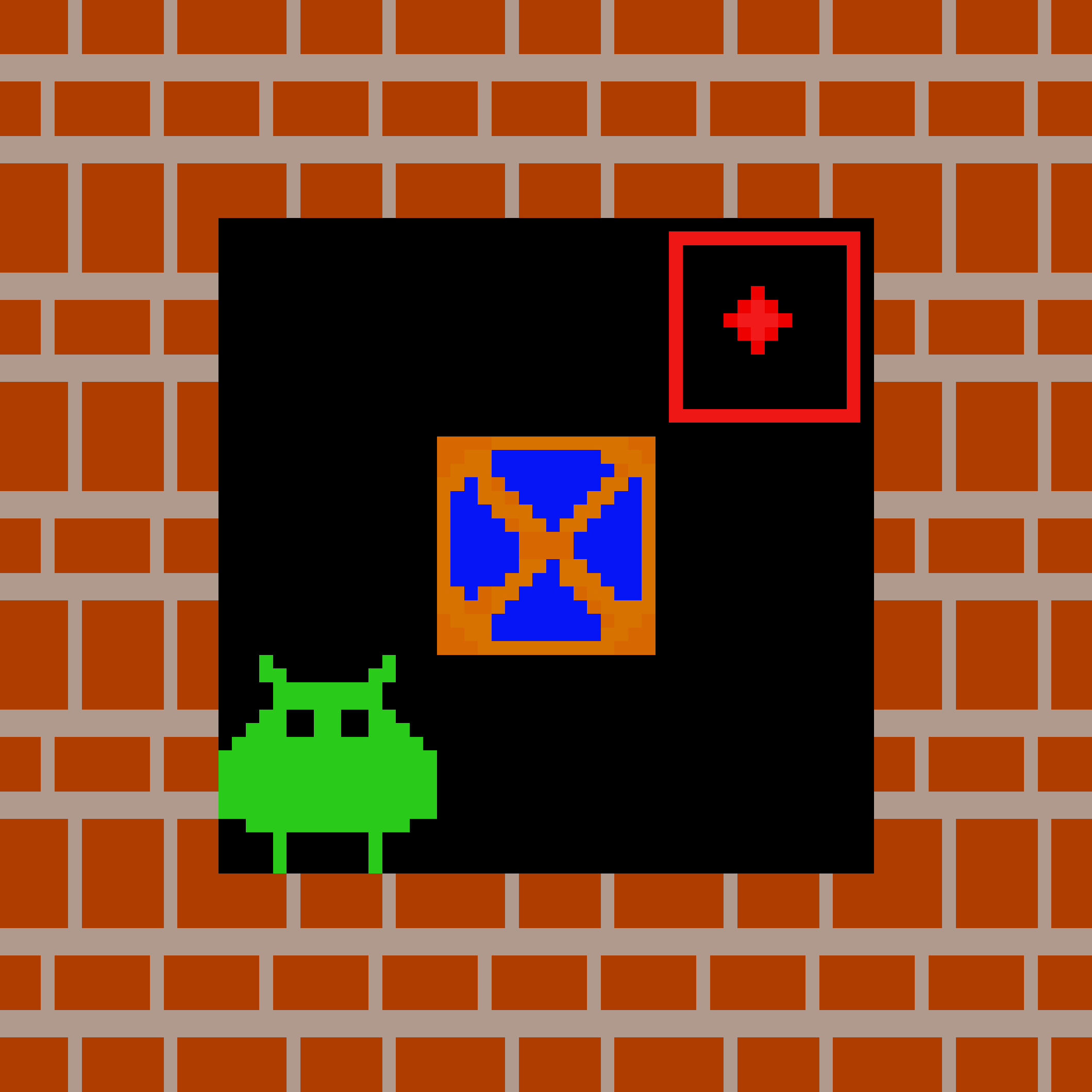}
        \caption{PLR@200K}
    \end{subfigure}\hfill\null
    \hfill
    \begin{subfigure}[b]{.24\textwidth}
        \centering
        \includegraphics[width=\textwidth]{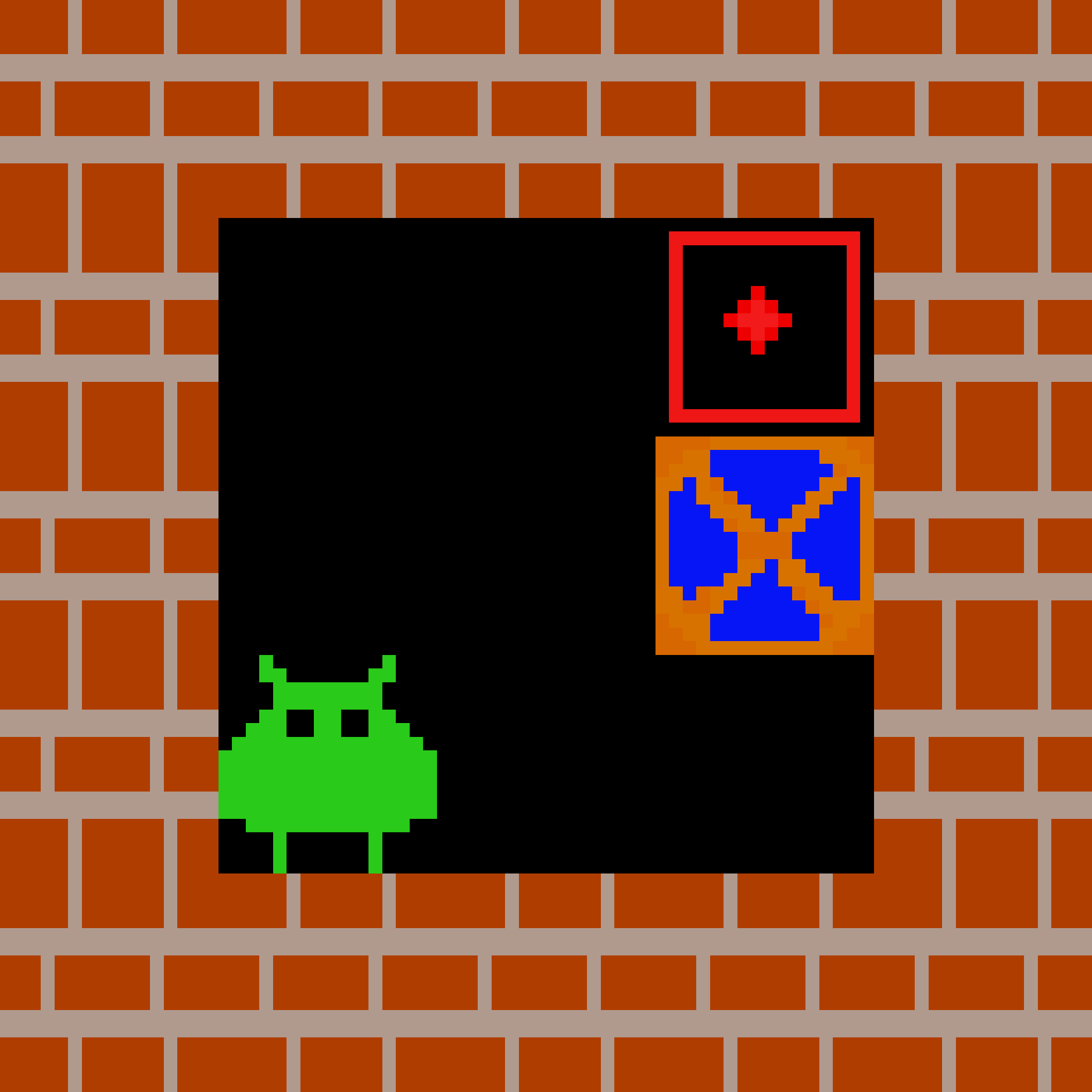}
        \caption{Goal-GAN@200K}
    \end{subfigure}\hfill\null
    \hfill
    \begin{subfigure}[b]{.24\textwidth}
        \centering
        \includegraphics[width=\textwidth]{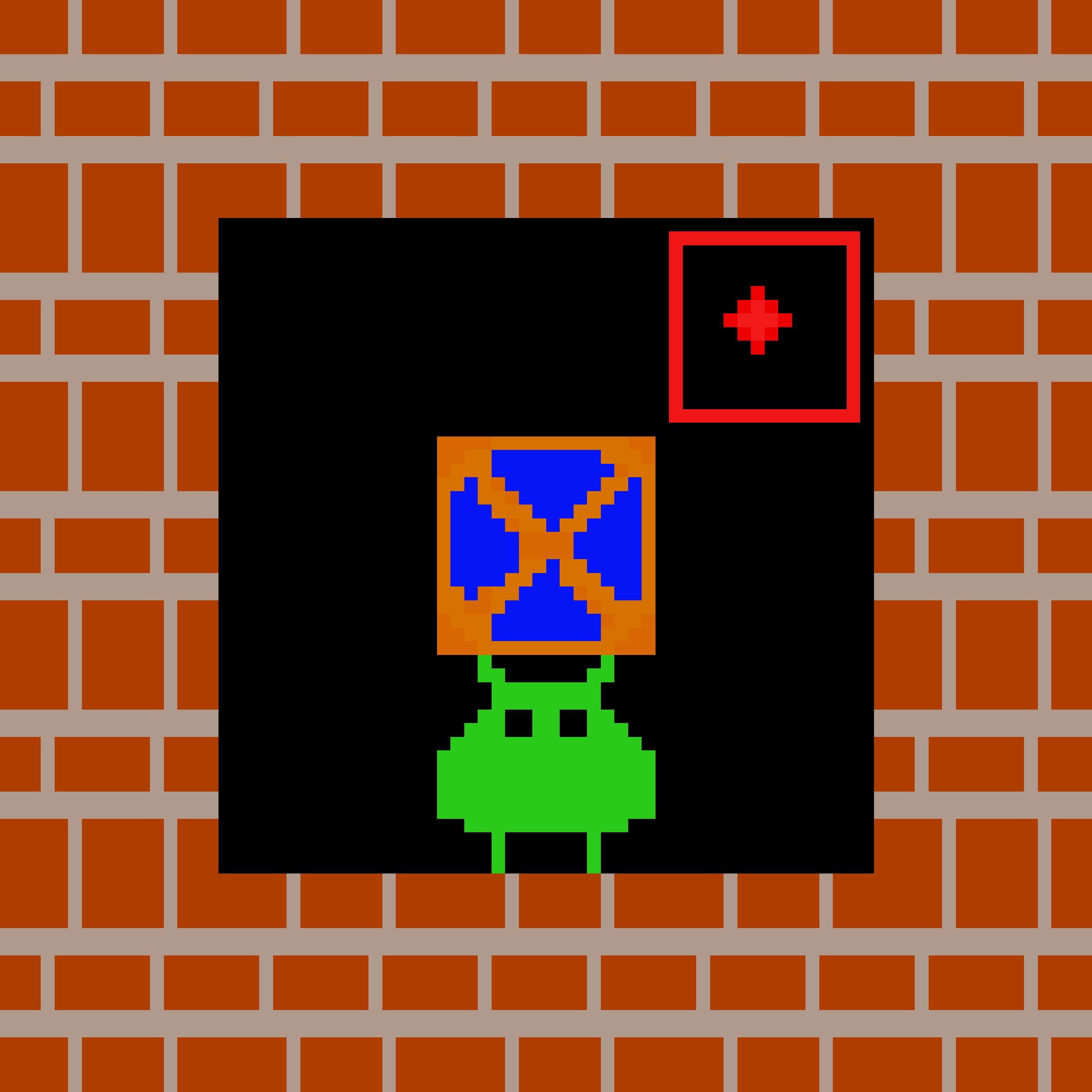}
        \caption{Currot@200K}
    \end{subfigure}\hfill\null
        \hfill
     \begin{subfigure}[b]{.24\textwidth}
        \centering
        \includegraphics[width=\textwidth]{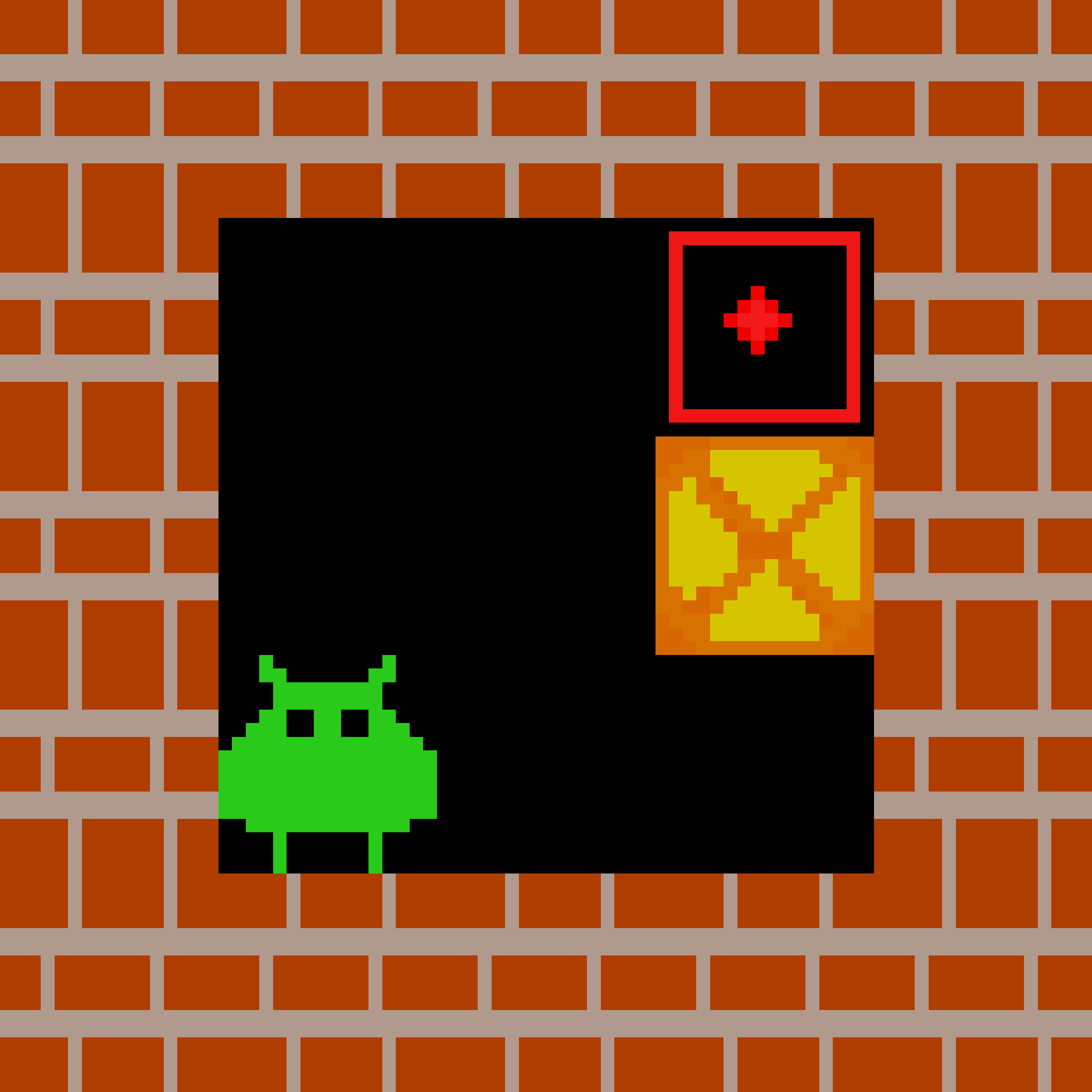}
        \caption{ALP-GMM@300K}
    \end{subfigure}
    \hfill
    \begin{subfigure}[b]{.24\textwidth}
        \centering
        \includegraphics[width=\textwidth]{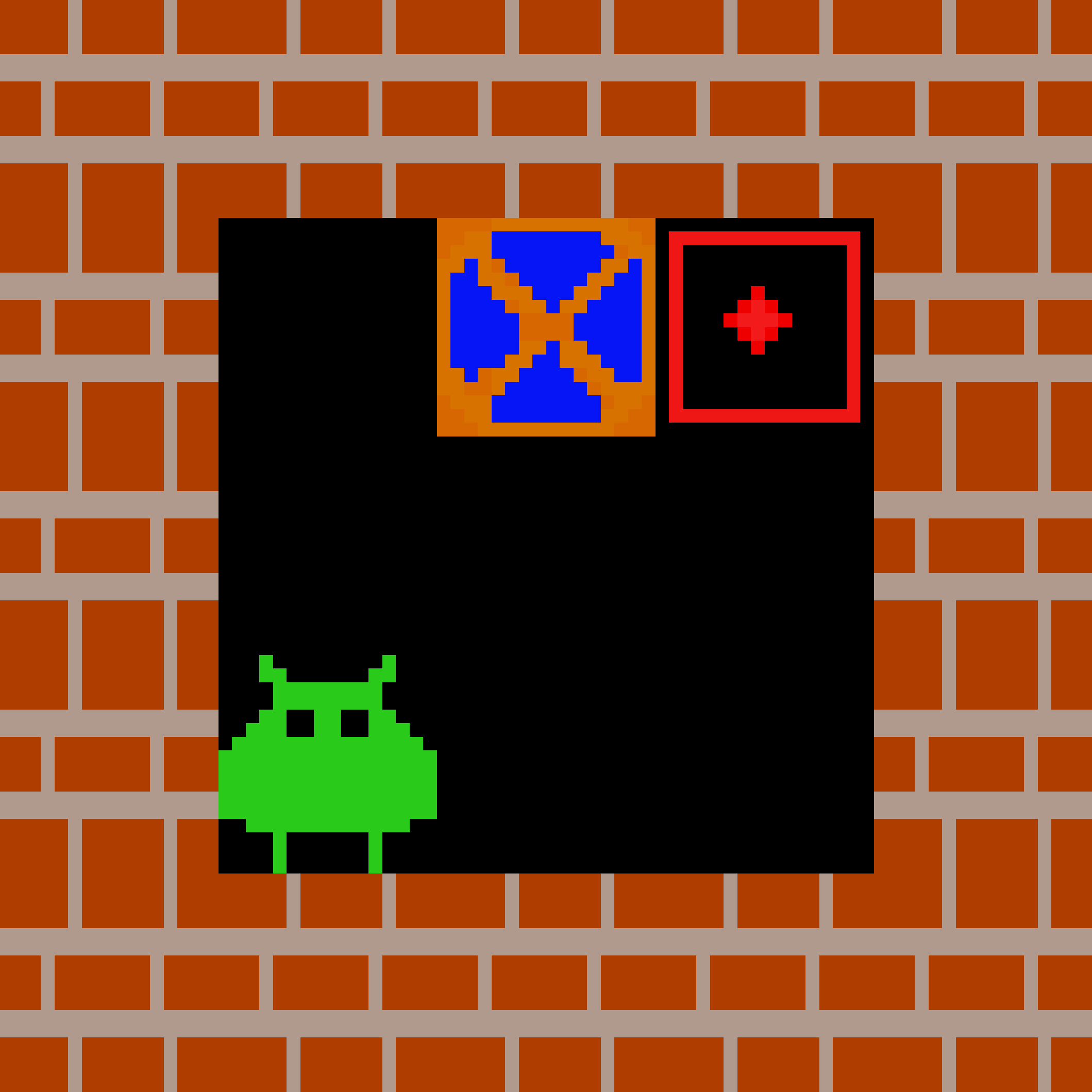}
        \caption{PLR@300K}
    \end{subfigure}\hfill\null
    \hfill
    \begin{subfigure}[b]{.24\textwidth}
        \centering
        \includegraphics[width=\textwidth]{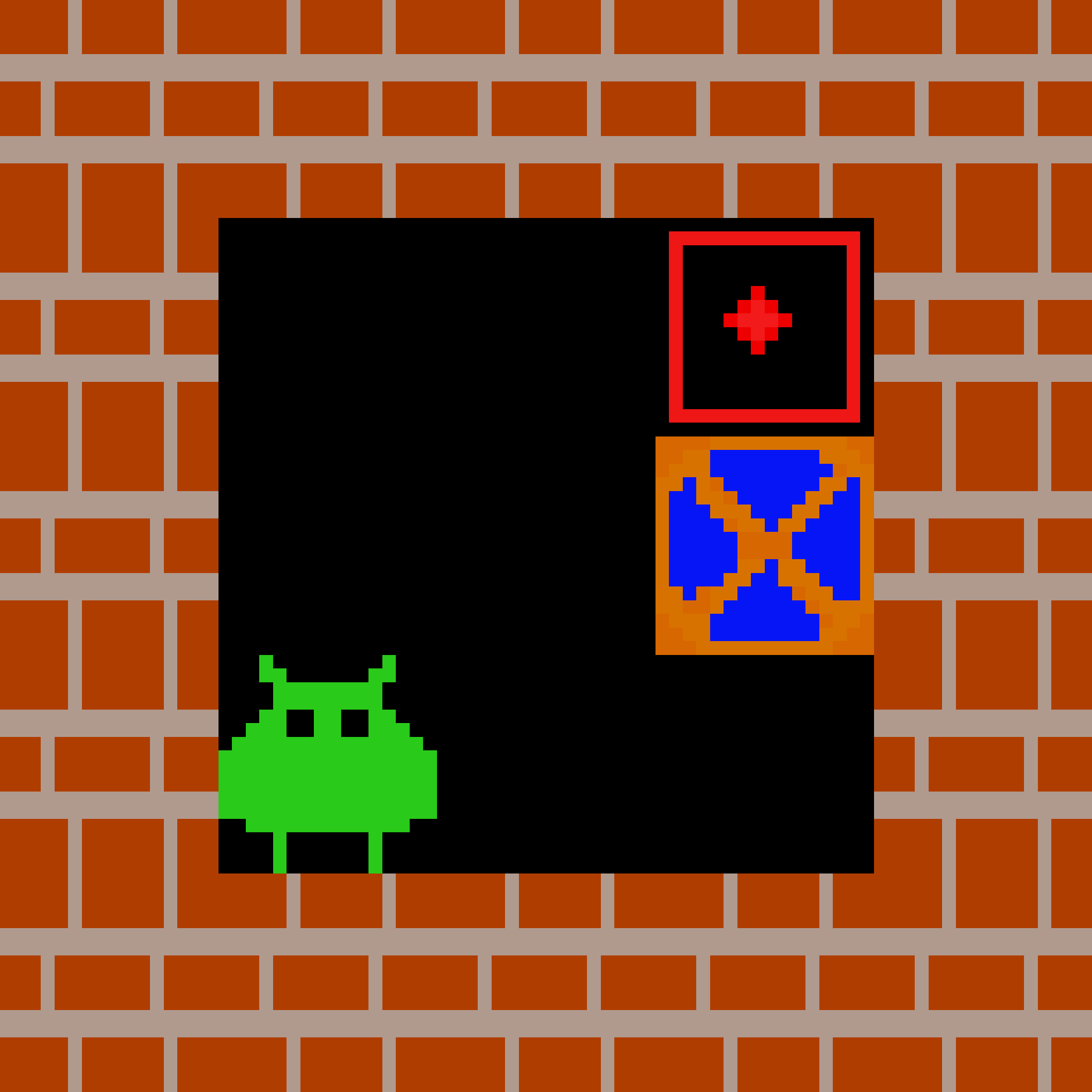}
        \caption{Goal-GAN@300K}
    \end{subfigure}\hfill\null
    \hfill
    \begin{subfigure}[b]{.24\textwidth}
        \centering
        \includegraphics[width=\textwidth]{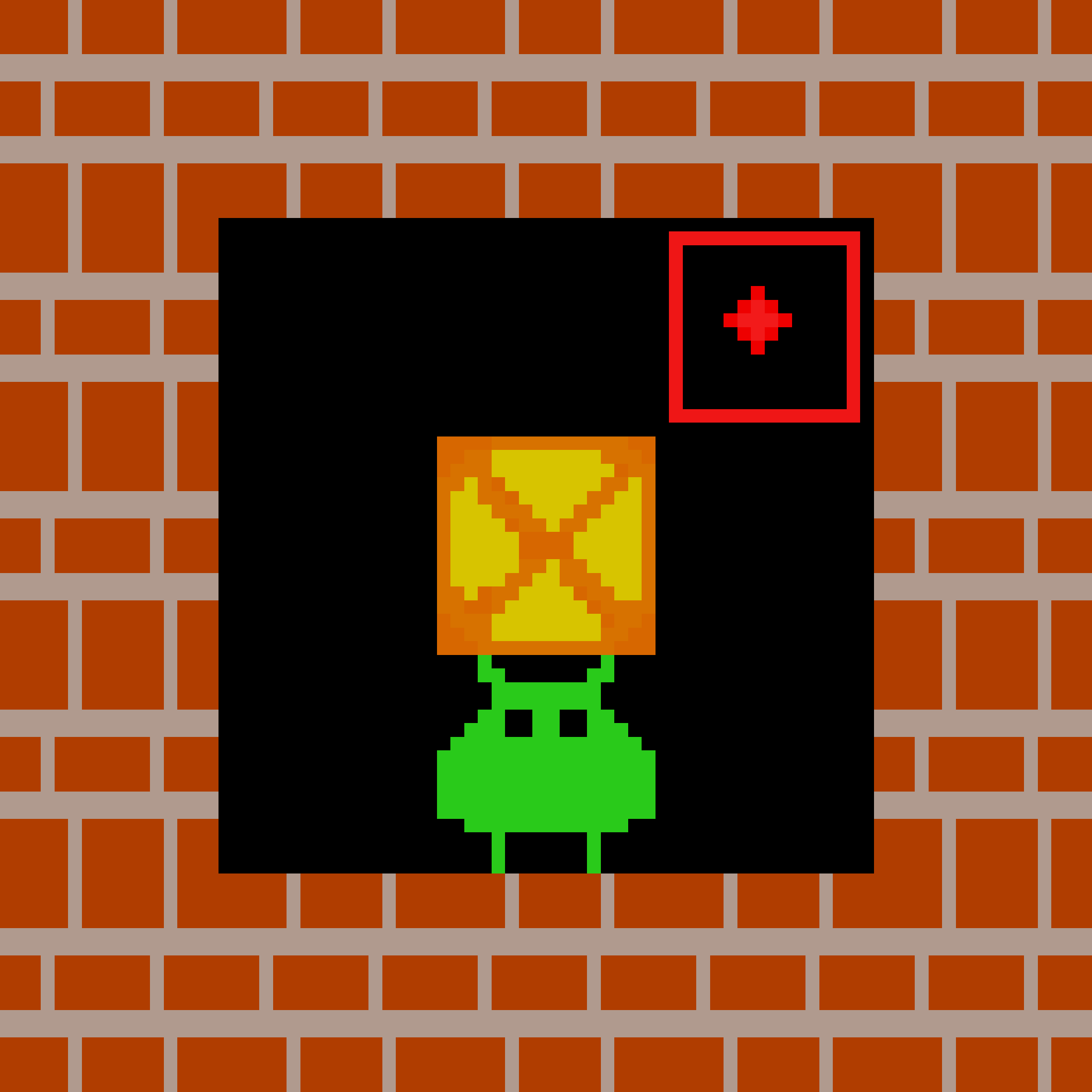}
        \caption{Currot@300K}
    \end{subfigure}\hfill\null
    \caption{Curricula generated by the (causal) augmented curriculum generators for Colored Sokoban. Each column shows a curriculum from one generator at different training steps.}
    \label{fig:curriculum of colorbox}
\end{figure}

\begin{figure}[t]
    \centering
    \hfill
     \begin{subfigure}[b]{.24\textwidth}
        \centering
        \includegraphics[width=\textwidth]{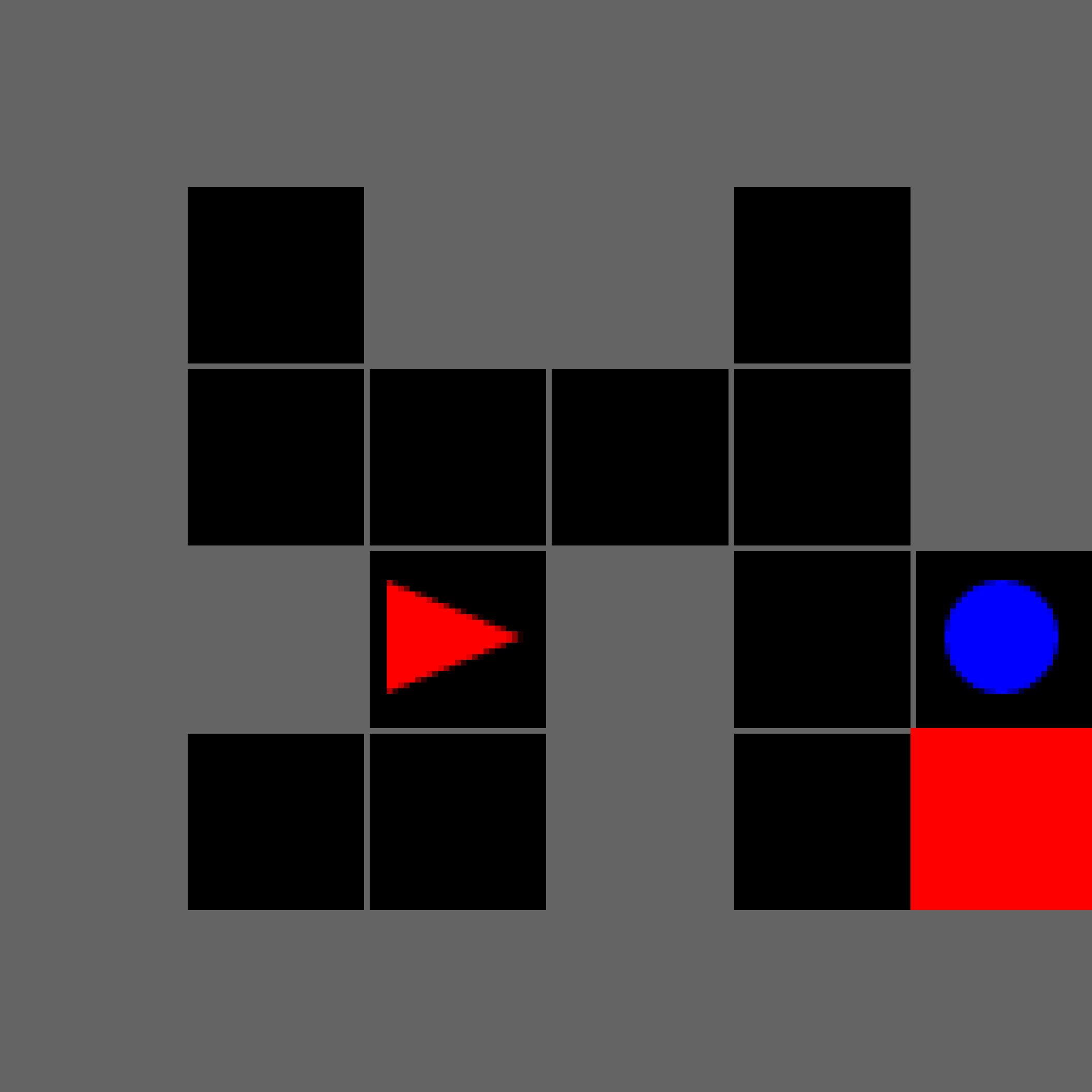}
        \caption{ALP-GMM@1K}
    \end{subfigure}
    \hfill
    \begin{subfigure}[b]{.24\textwidth}
        \centering
        \includegraphics[width=\textwidth]{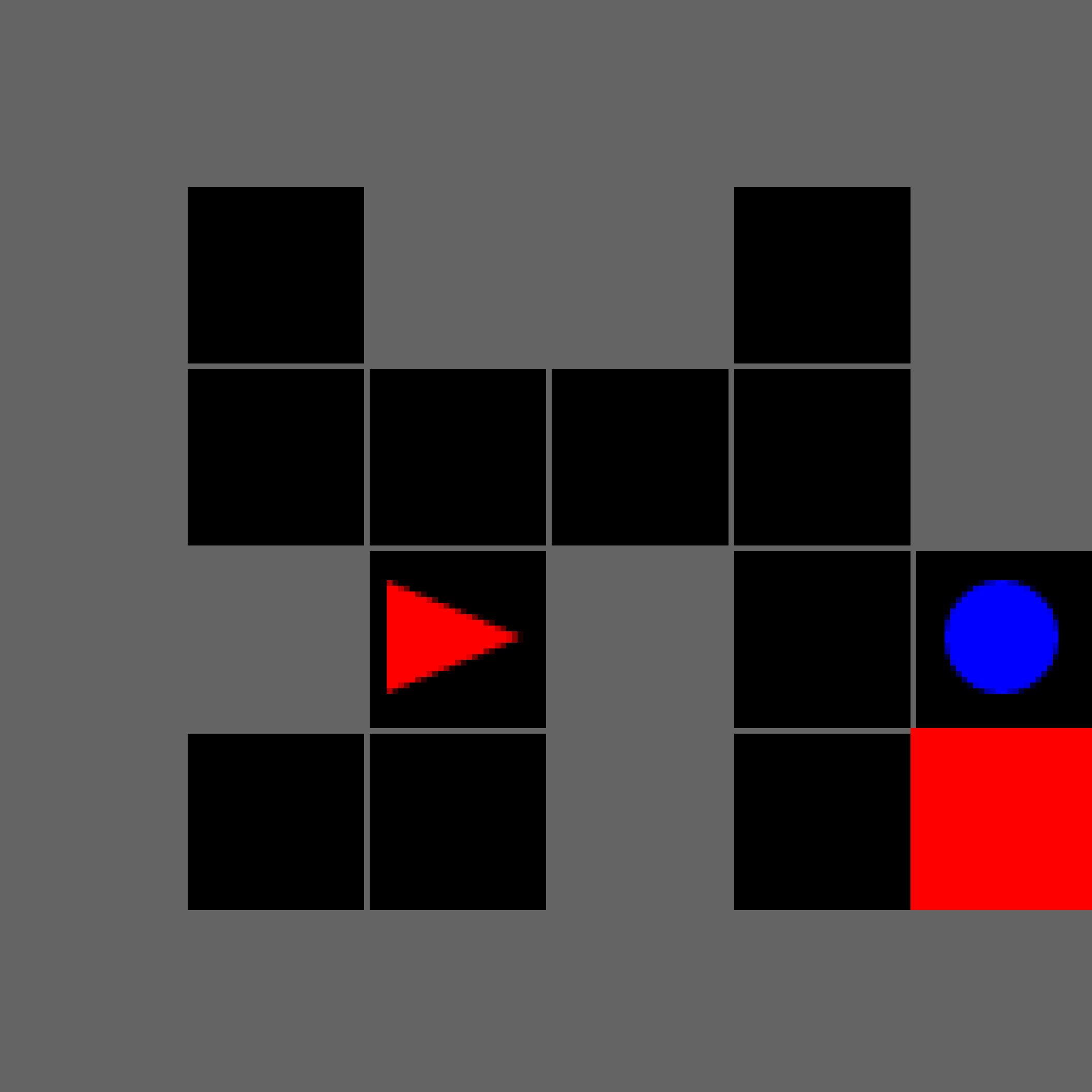}
        \caption{PLR@1K}
    \end{subfigure}\hfill\null
    \hfill
    \begin{subfigure}[b]{.24\textwidth}
        \centering
        \includegraphics[width=\textwidth]{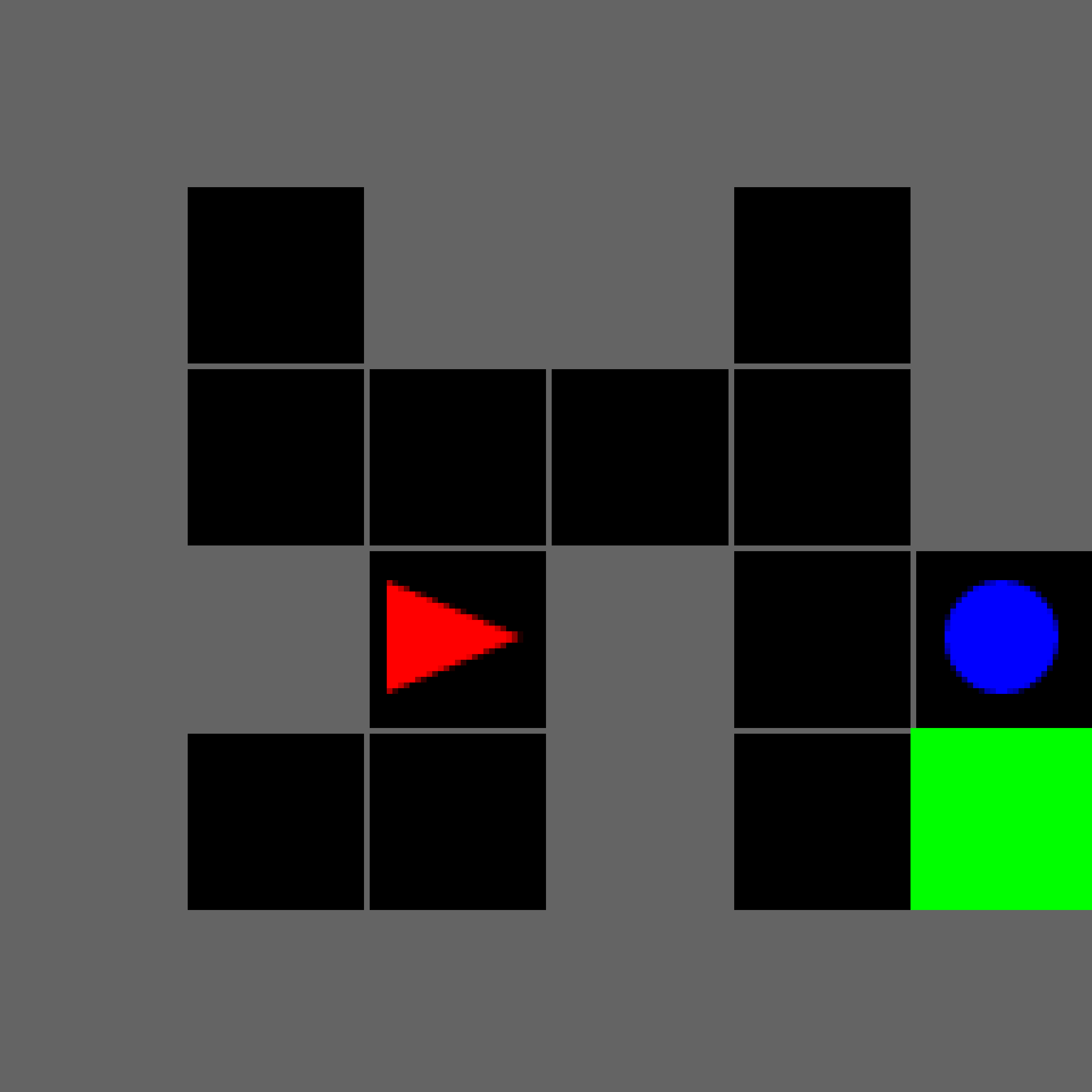}
        \caption{Goal-GAN@1K}
    \end{subfigure}\hfill\null
    \hfill
    \begin{subfigure}[b]{.24\textwidth}
        \centering
        \includegraphics[width=\textwidth]{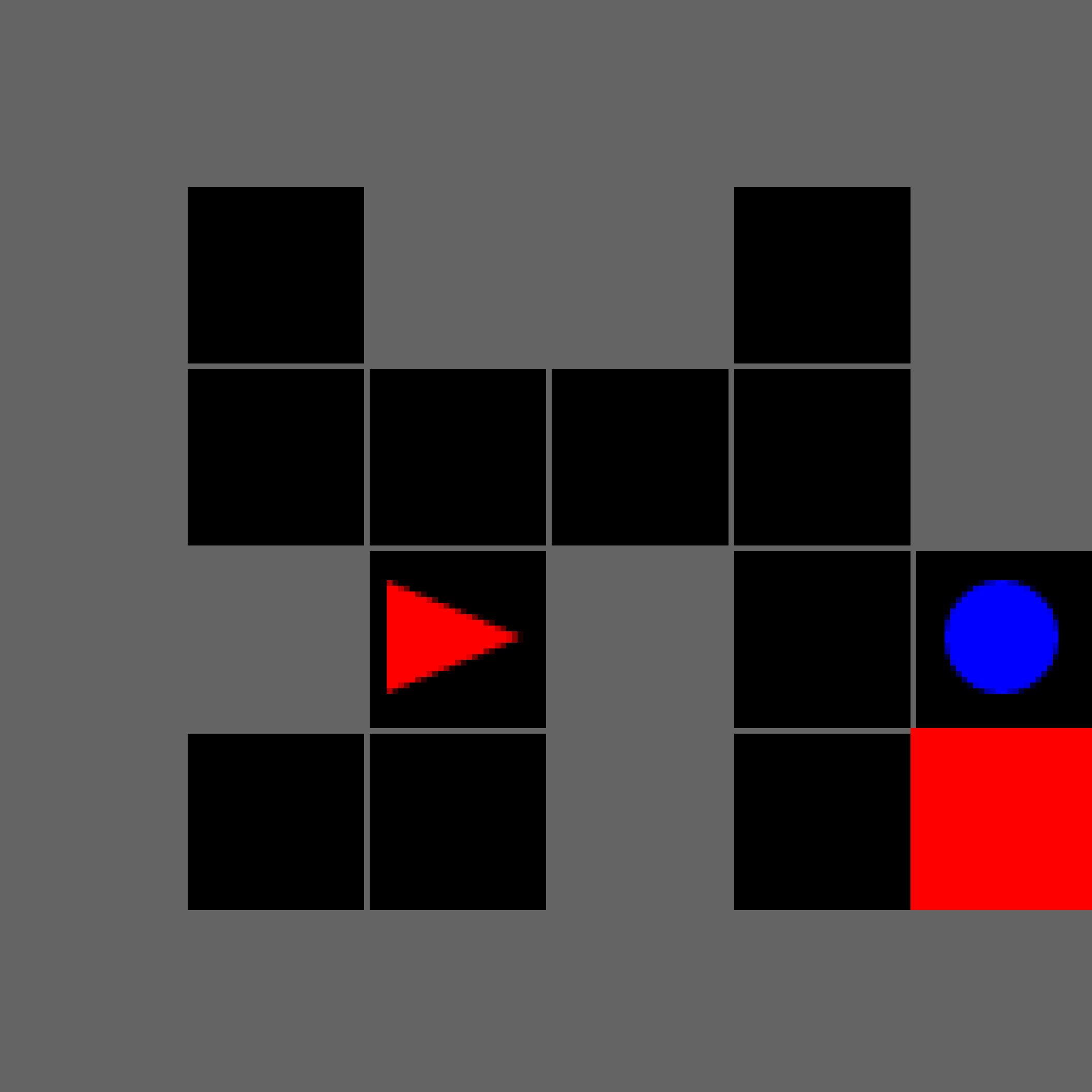}
        \caption{Currot@1K}
    \end{subfigure}\hfill\null
    \hfill
    \begin{subfigure}[b]{.24\textwidth}
        \centering
        \includegraphics[width=\textwidth]{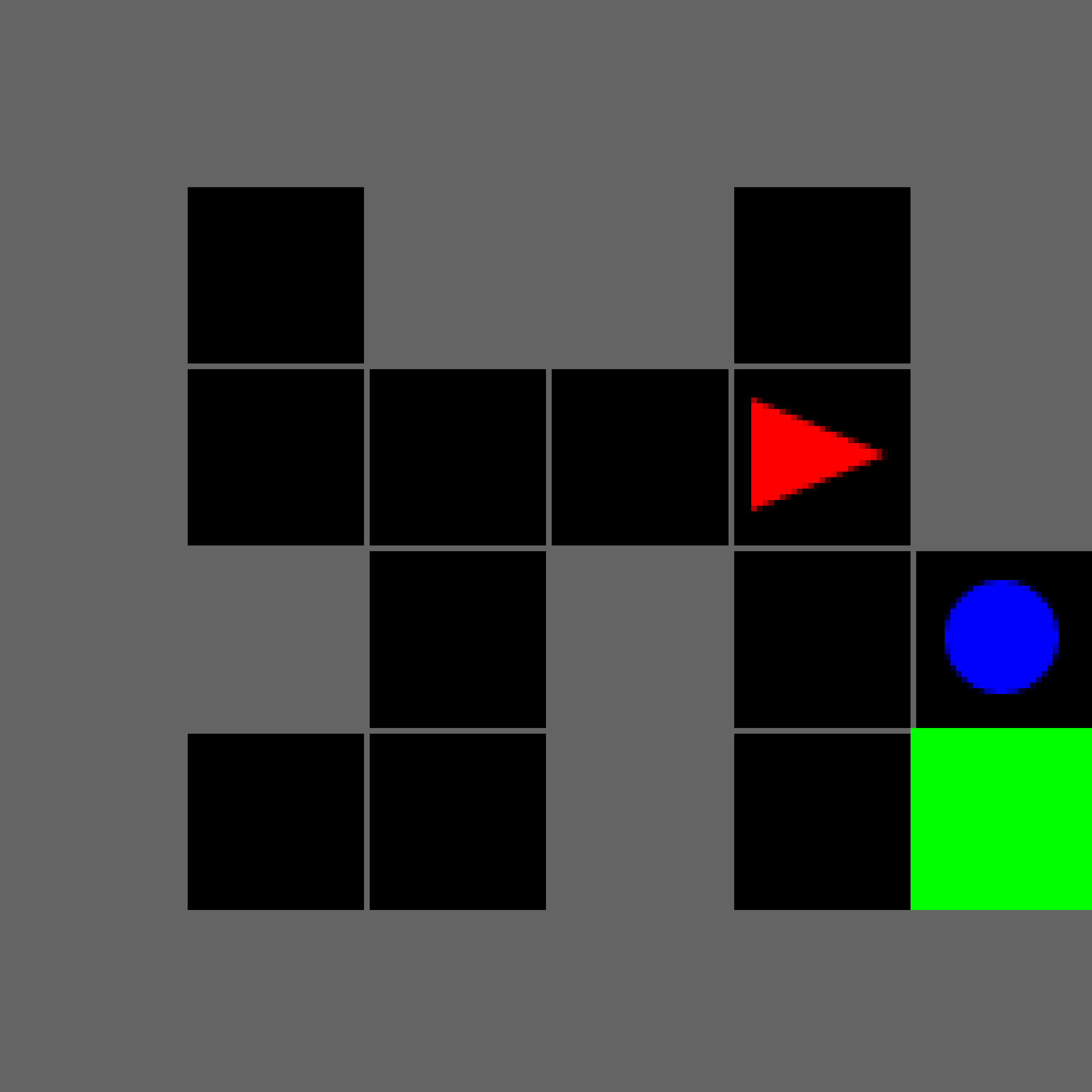}
        \caption{ALP-GMM@100K}
    \end{subfigure}
    \hfill
    \begin{subfigure}[b]{.24\textwidth}
        \centering
        \includegraphics[width=\textwidth]{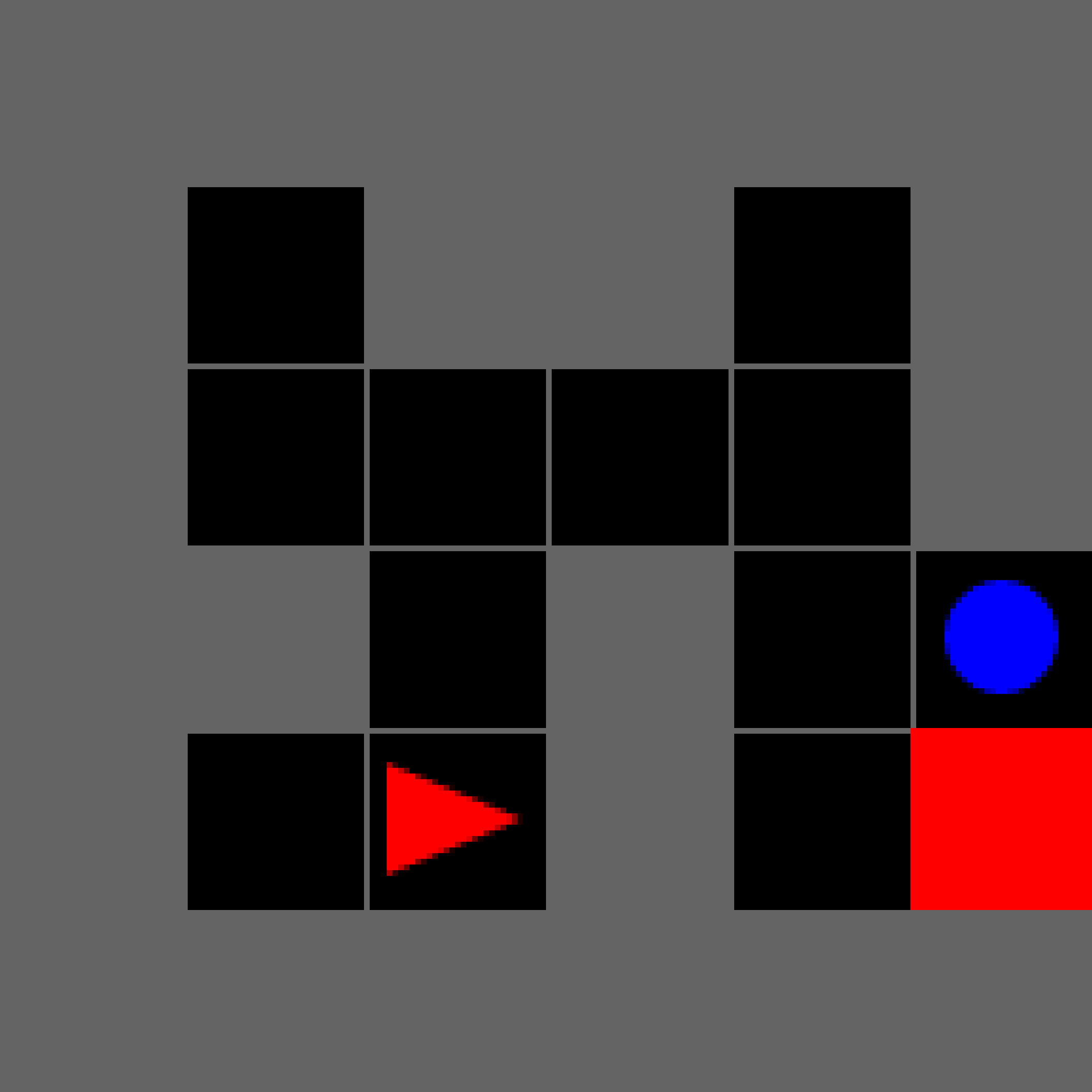}
        \caption{PLR@100K}
    \end{subfigure}\hfill\null
    \hfill
    \begin{subfigure}[b]{.24\textwidth}
        \centering
        \includegraphics[width=\textwidth]{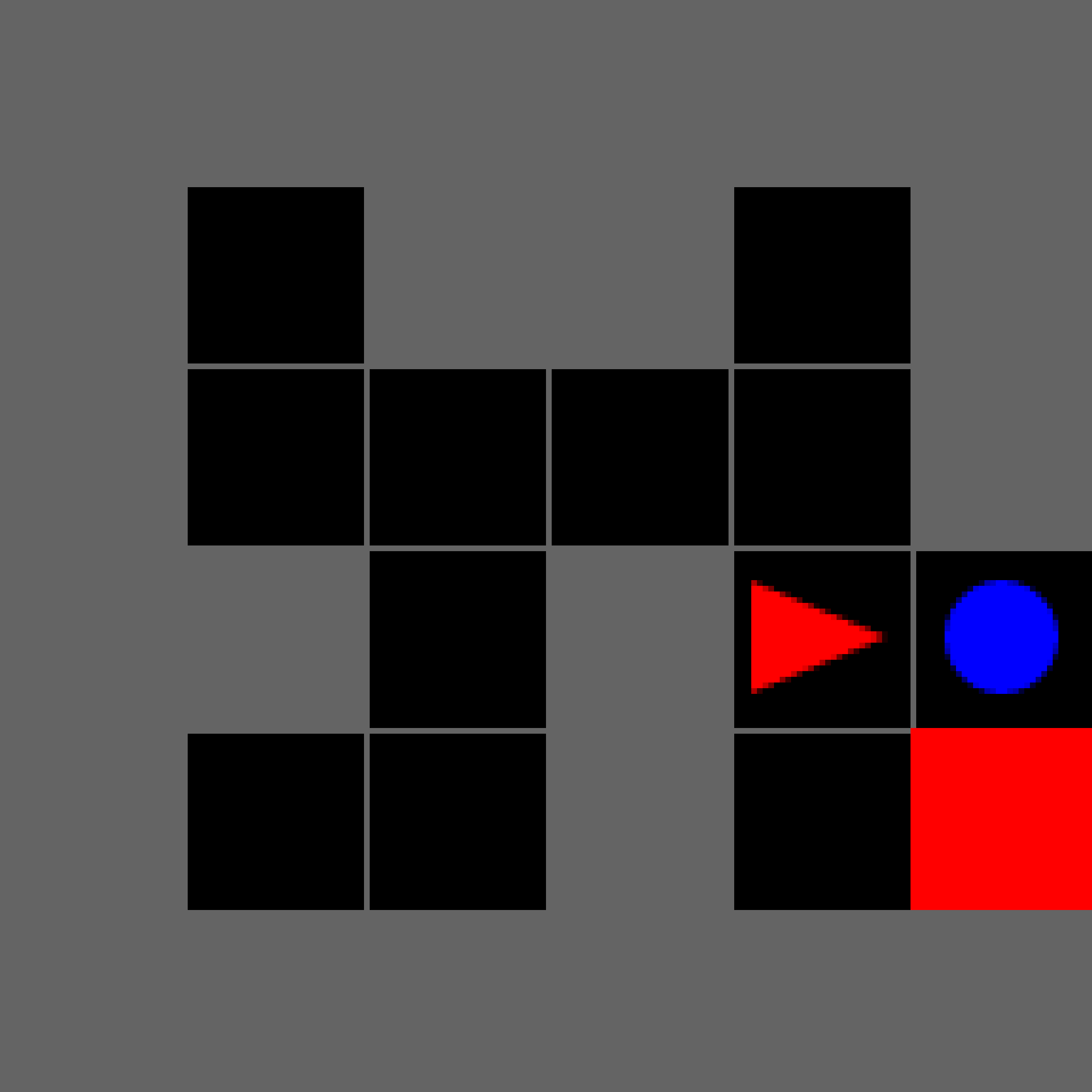}
        \caption{Goal-GAN@100K}
    \end{subfigure}\hfill\null
    \hfill
    \begin{subfigure}[b]{.24\textwidth}
        \centering
        \includegraphics[width=\textwidth]{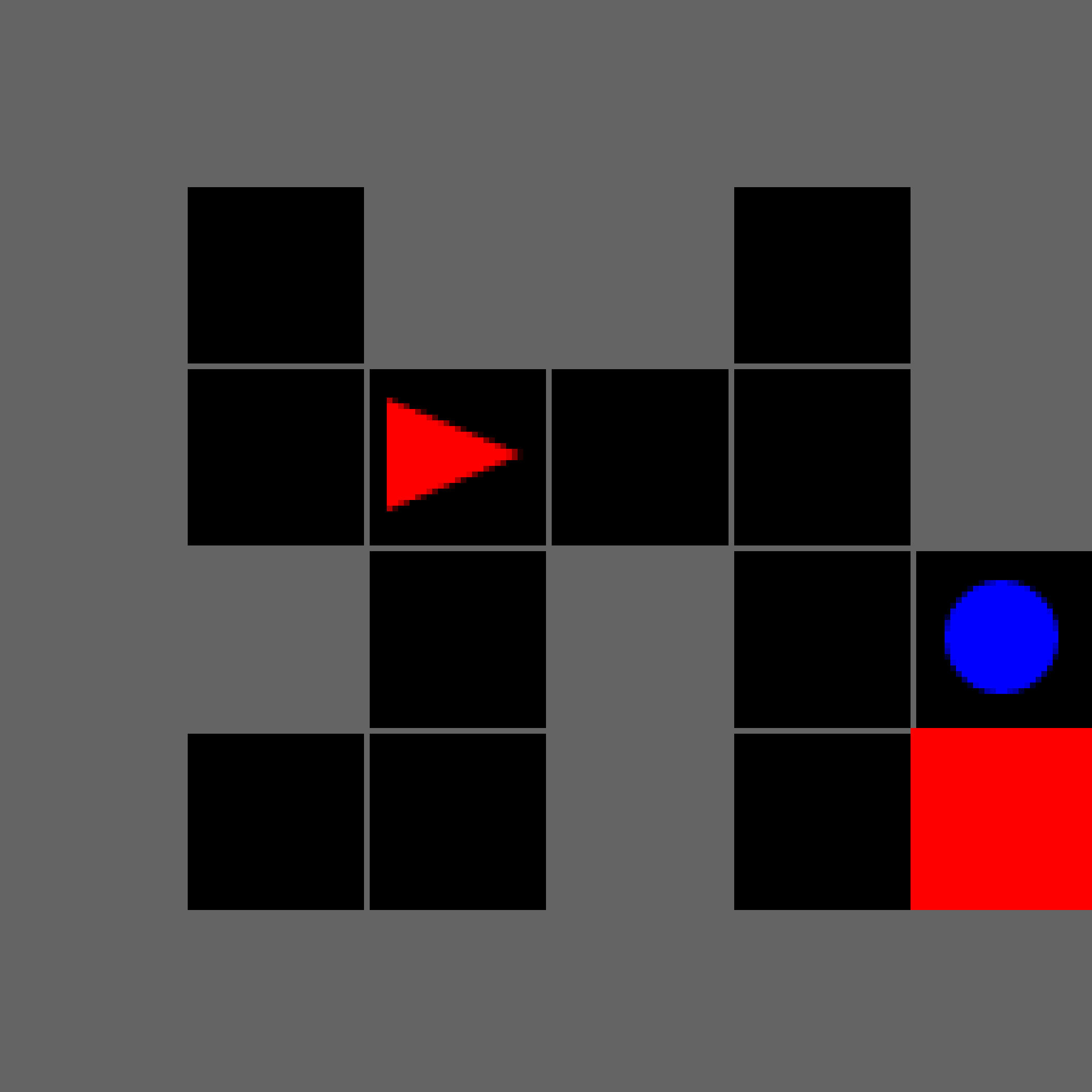}
        \caption{Currot@100K}
    \end{subfigure}\hfill\null
        \hfill
     \begin{subfigure}[b]{.24\textwidth}
        \centering
        \includegraphics[width=\textwidth]{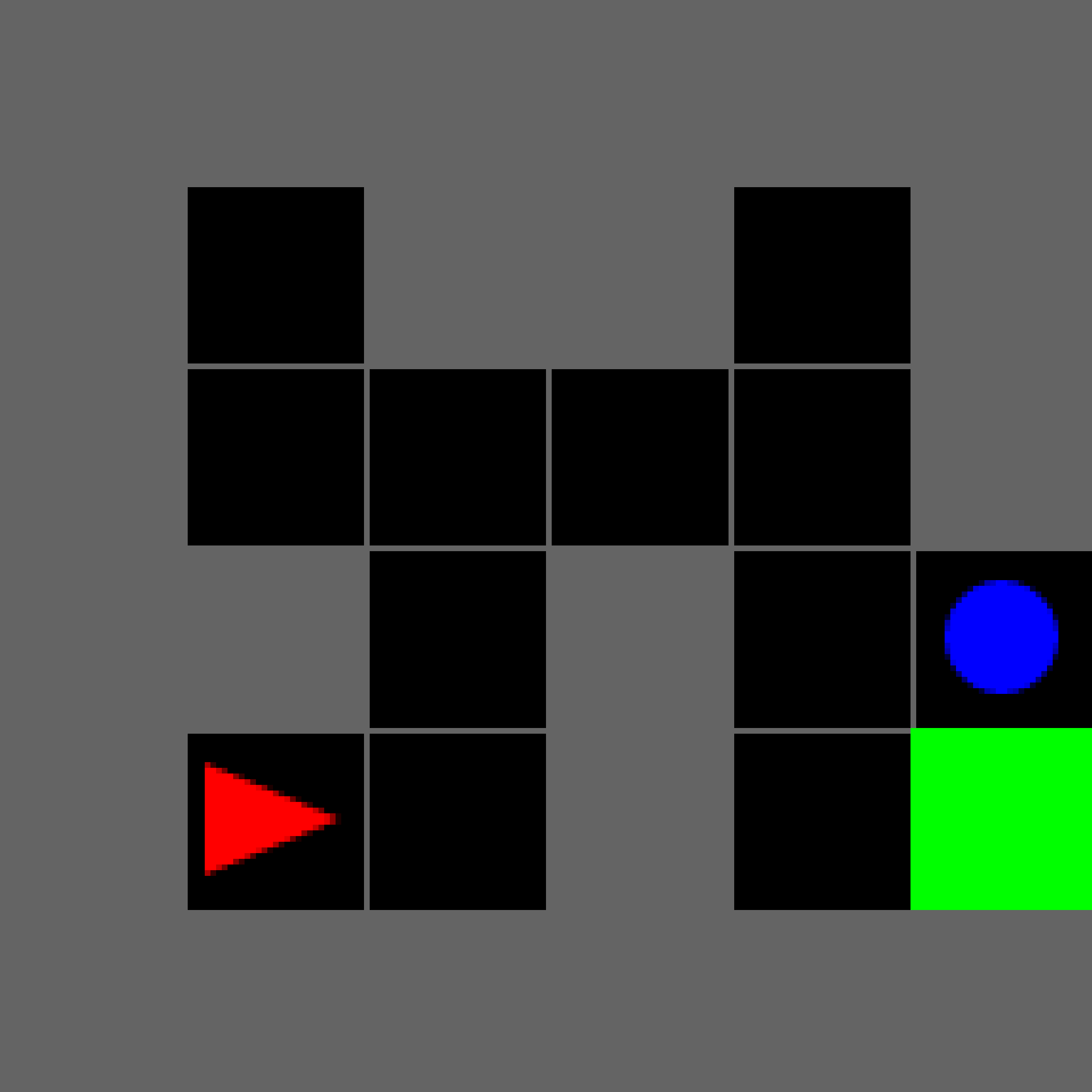}
        \caption{ALP-GMM@200K}
    \end{subfigure}
    \hfill
    \begin{subfigure}[b]{.24\textwidth}
        \centering
        \includegraphics[width=\textwidth]{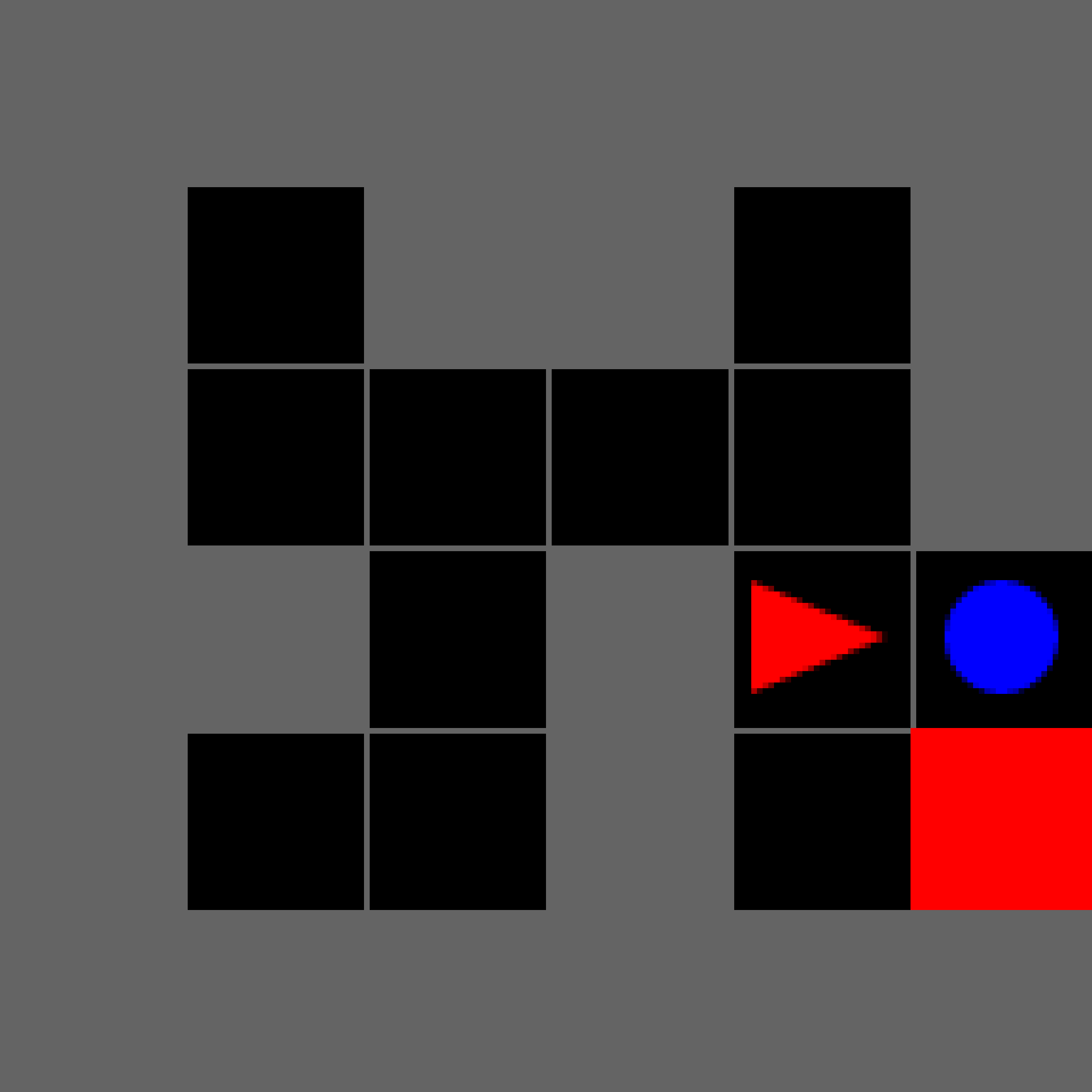}
        \caption{PLR@200K}
    \end{subfigure}\hfill\null
    \hfill
    \begin{subfigure}[b]{.24\textwidth}
        \centering
        \includegraphics[width=\textwidth]{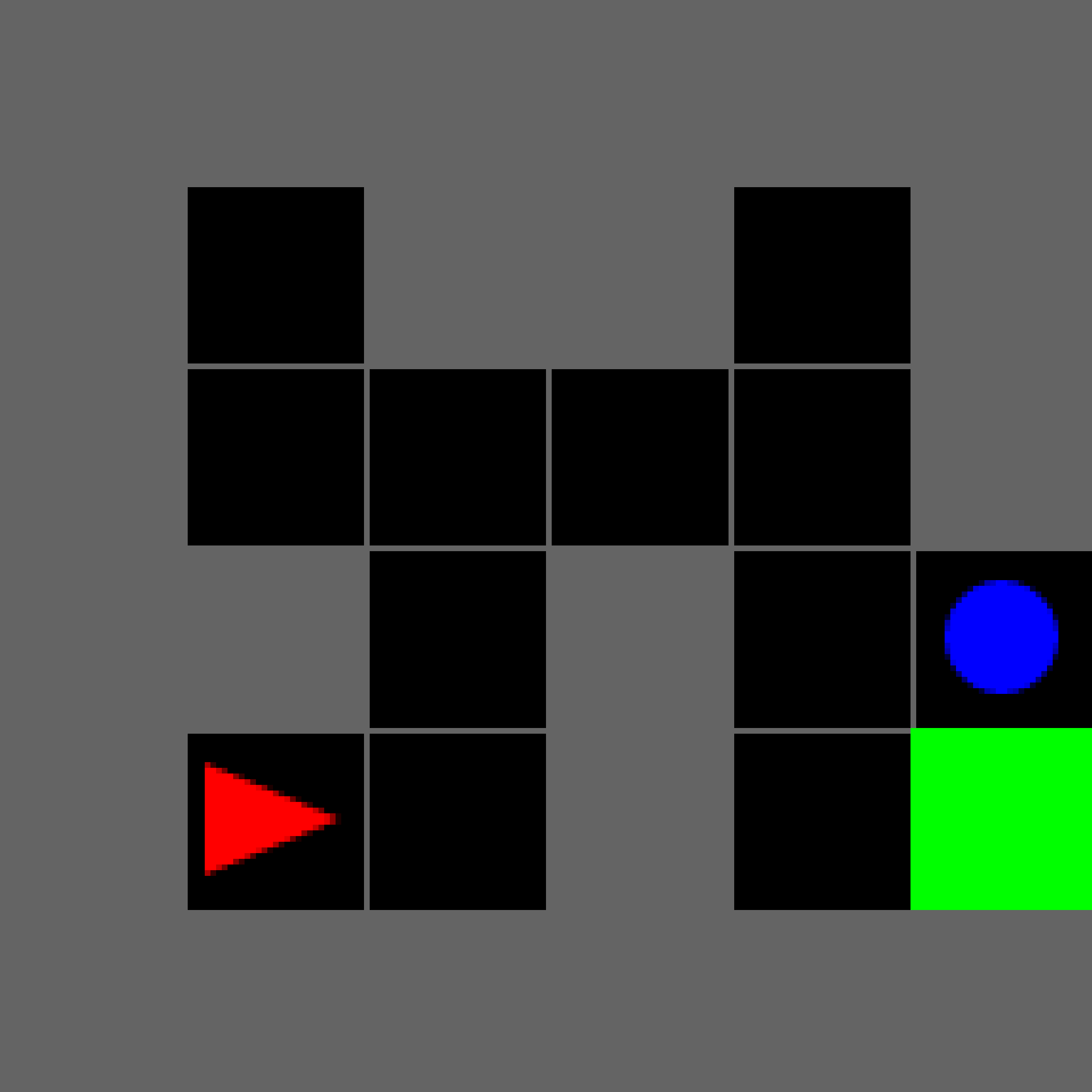}
        \caption{Goal-GAN@200K}
    \end{subfigure}\hfill\null
    \hfill
    \begin{subfigure}[b]{.24\textwidth}
        \centering
        \includegraphics[width=\textwidth]{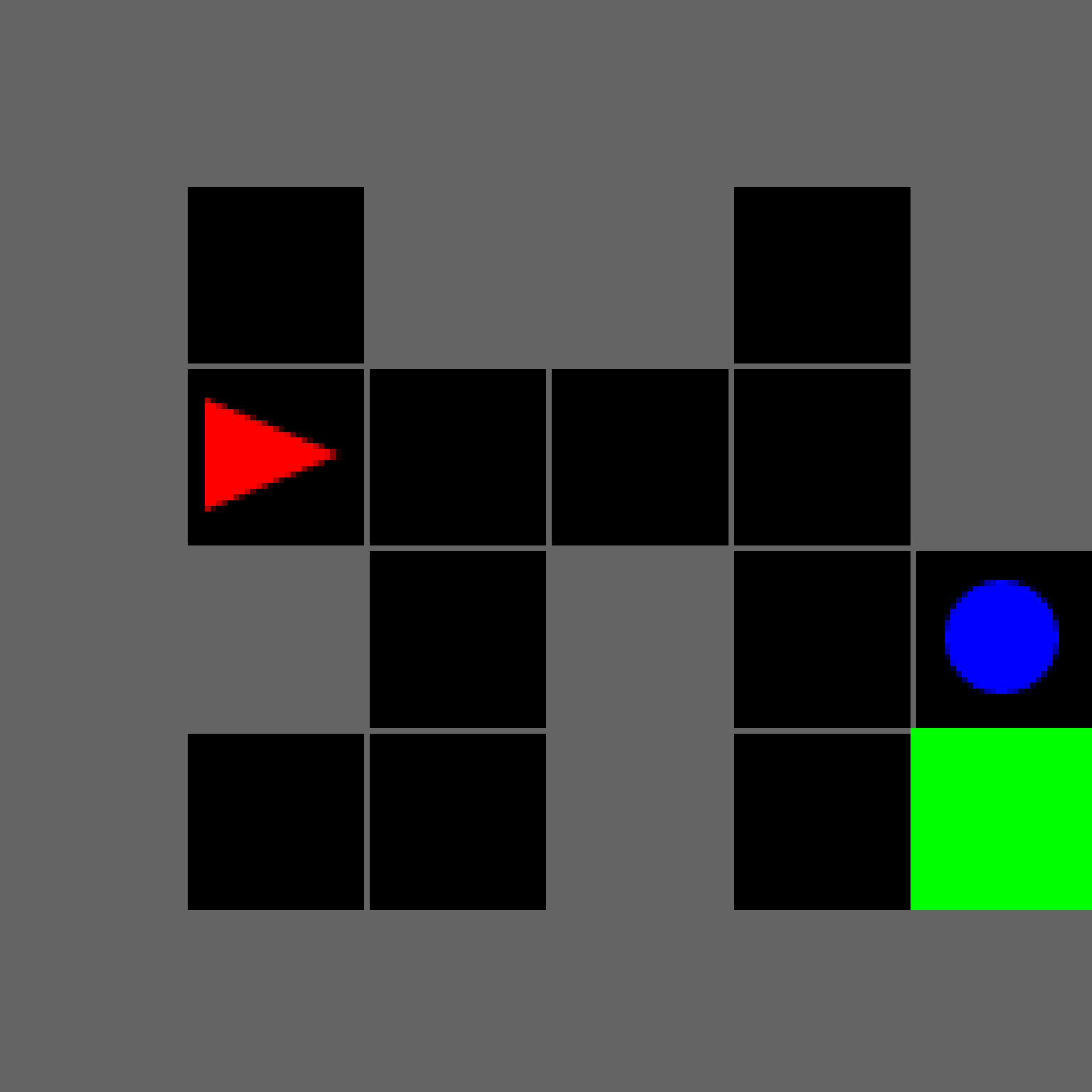}
        \caption{Currot@200K}
    \end{subfigure}\hfill\null
        \hfill
     \begin{subfigure}[b]{.24\textwidth}
        \centering
        \includegraphics[width=\textwidth]{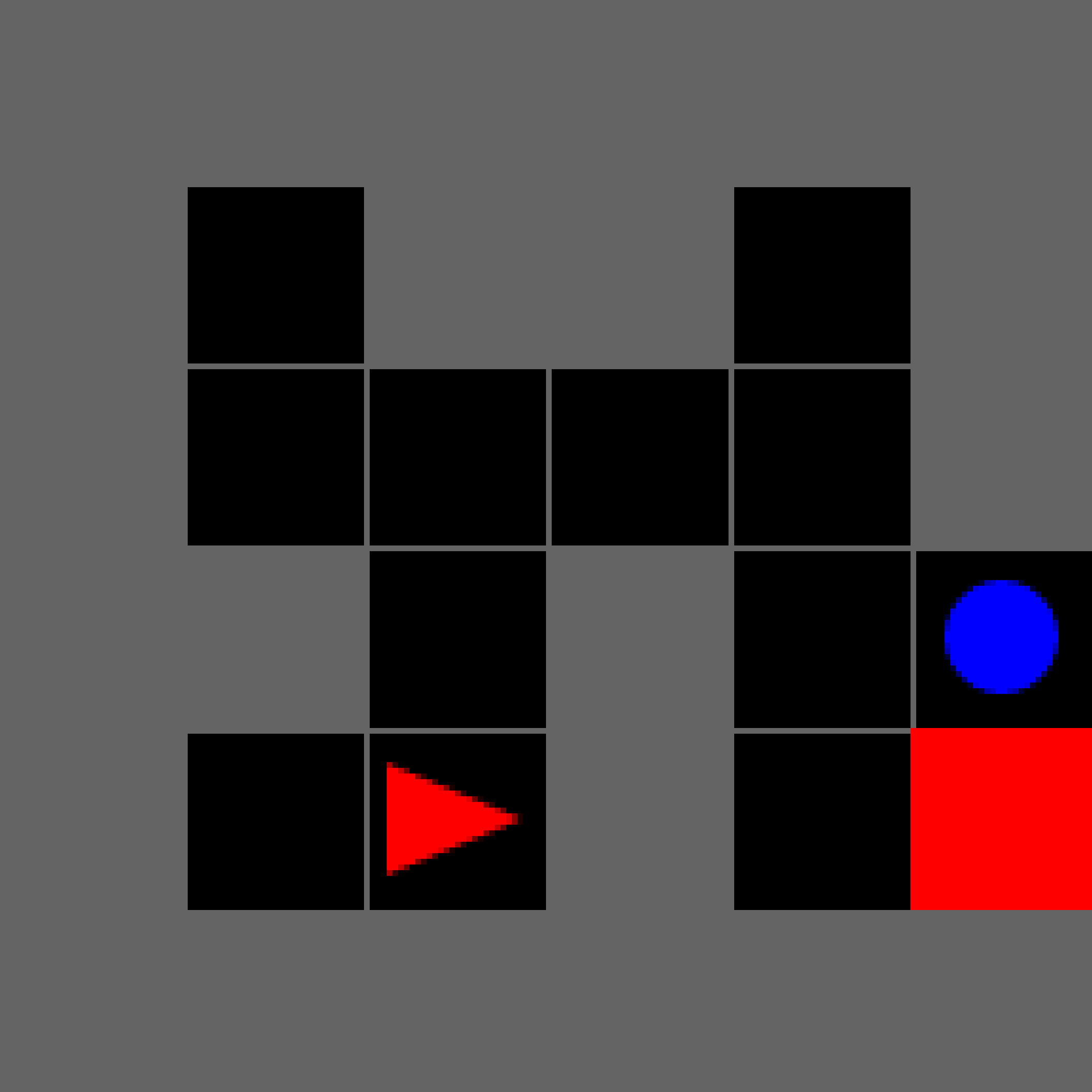}
        \caption{ALP-GMM@300K}
    \end{subfigure}
    \hfill
    \begin{subfigure}[b]{.24\textwidth}
        \centering
        \includegraphics[width=\textwidth]{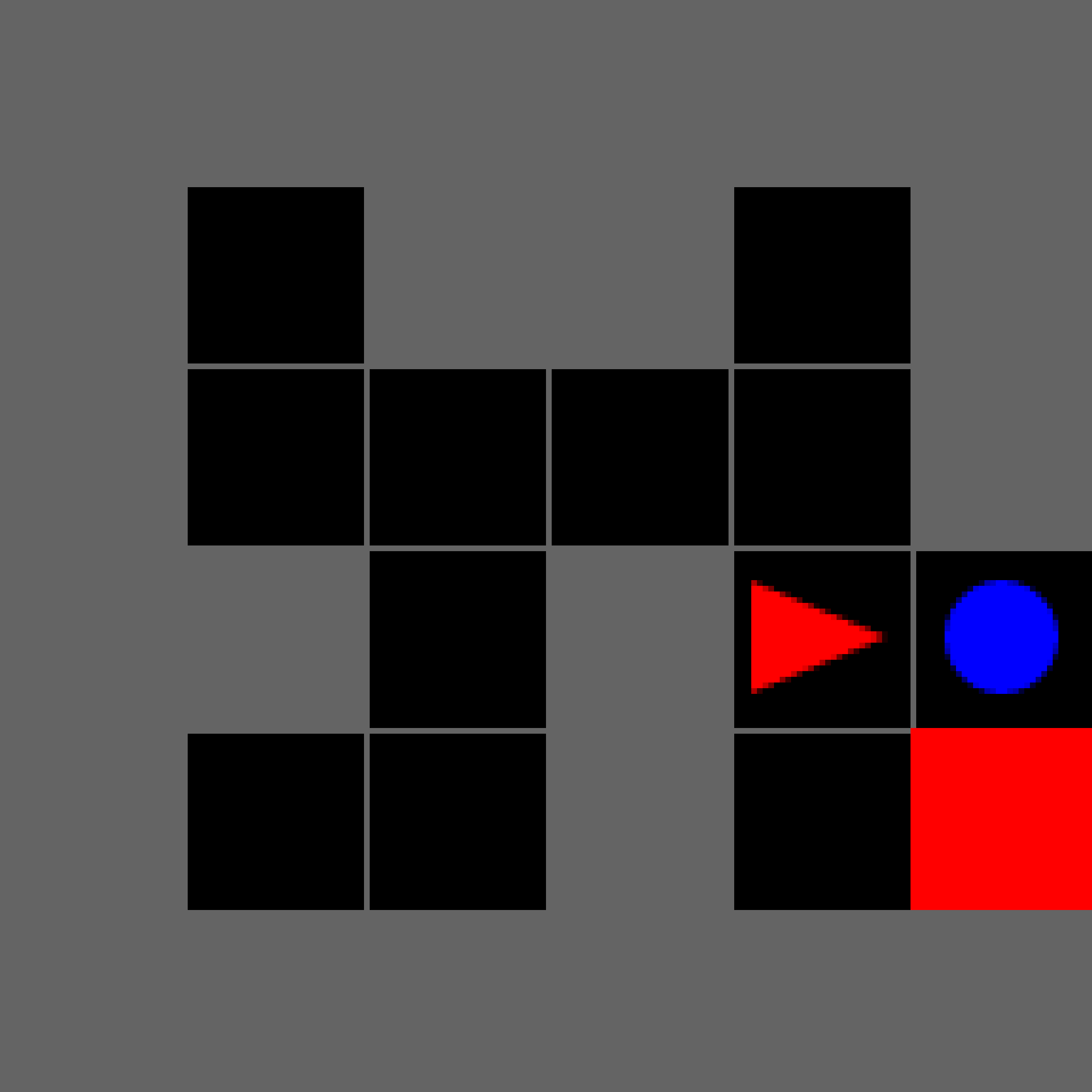}
        \caption{PLR@300K}
    \end{subfigure}\hfill\null
    \hfill
    \begin{subfigure}[b]{.24\textwidth}
        \centering
        \includegraphics[width=\textwidth]{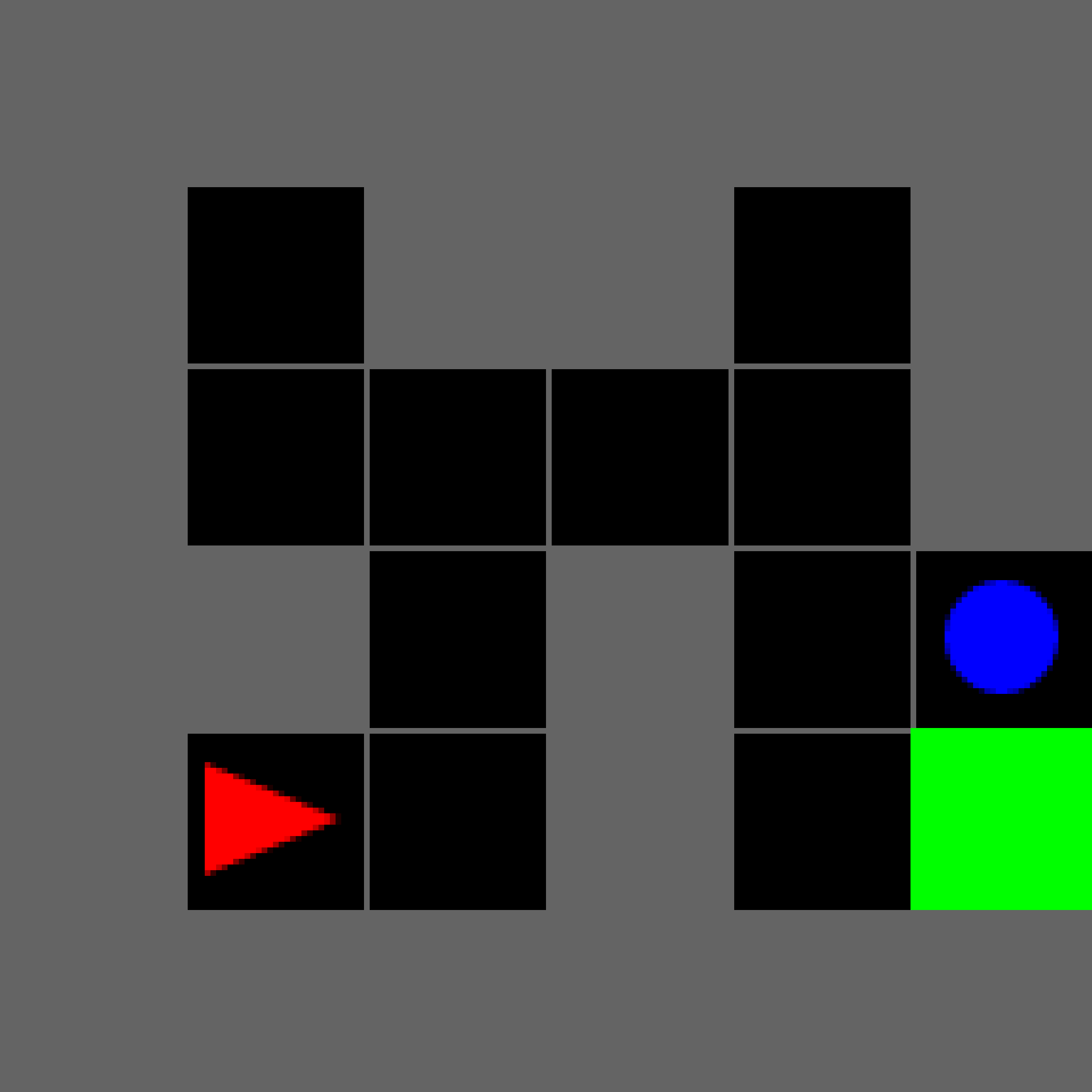}
        \caption{Goal-GAN@300K}
    \end{subfigure}\hfill\null
    \hfill
    \begin{subfigure}[b]{.24\textwidth}
        \centering
        \includegraphics[width=\textwidth]{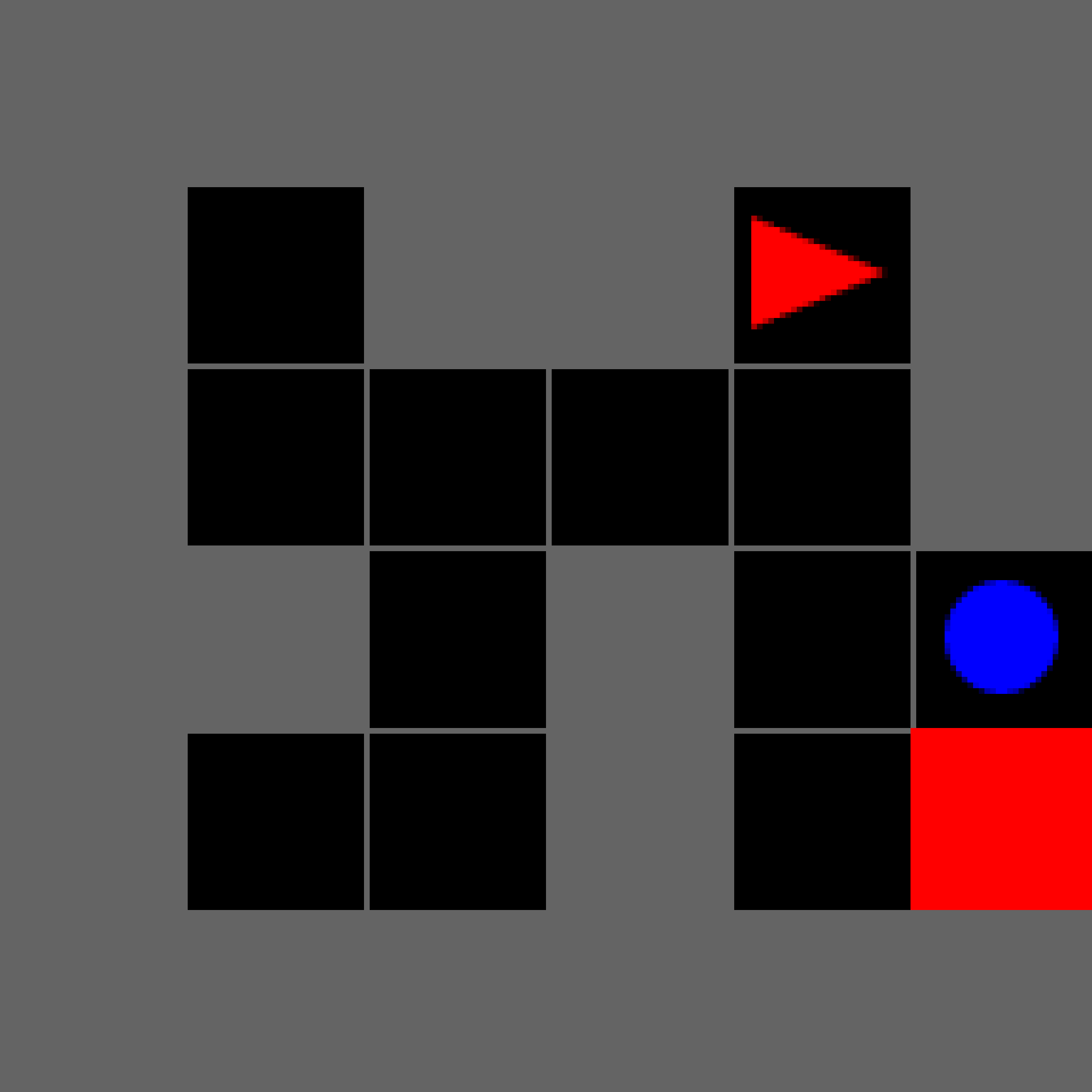}
        \caption{Currot@300K}
    \end{subfigure}\hfill\null
    \caption{Curricula generated by the (causal) augmented curriculum generators for Button Maze. Each column shows a curriculum from one generator at different training steps.}
    \label{fig:curriculum of traffic}
\end{figure}

\begin{figure}[t]
    \centering
    \hfill
     \begin{subfigure}[b]{.24\textwidth}
        \centering
        \includegraphics[width=\textwidth]{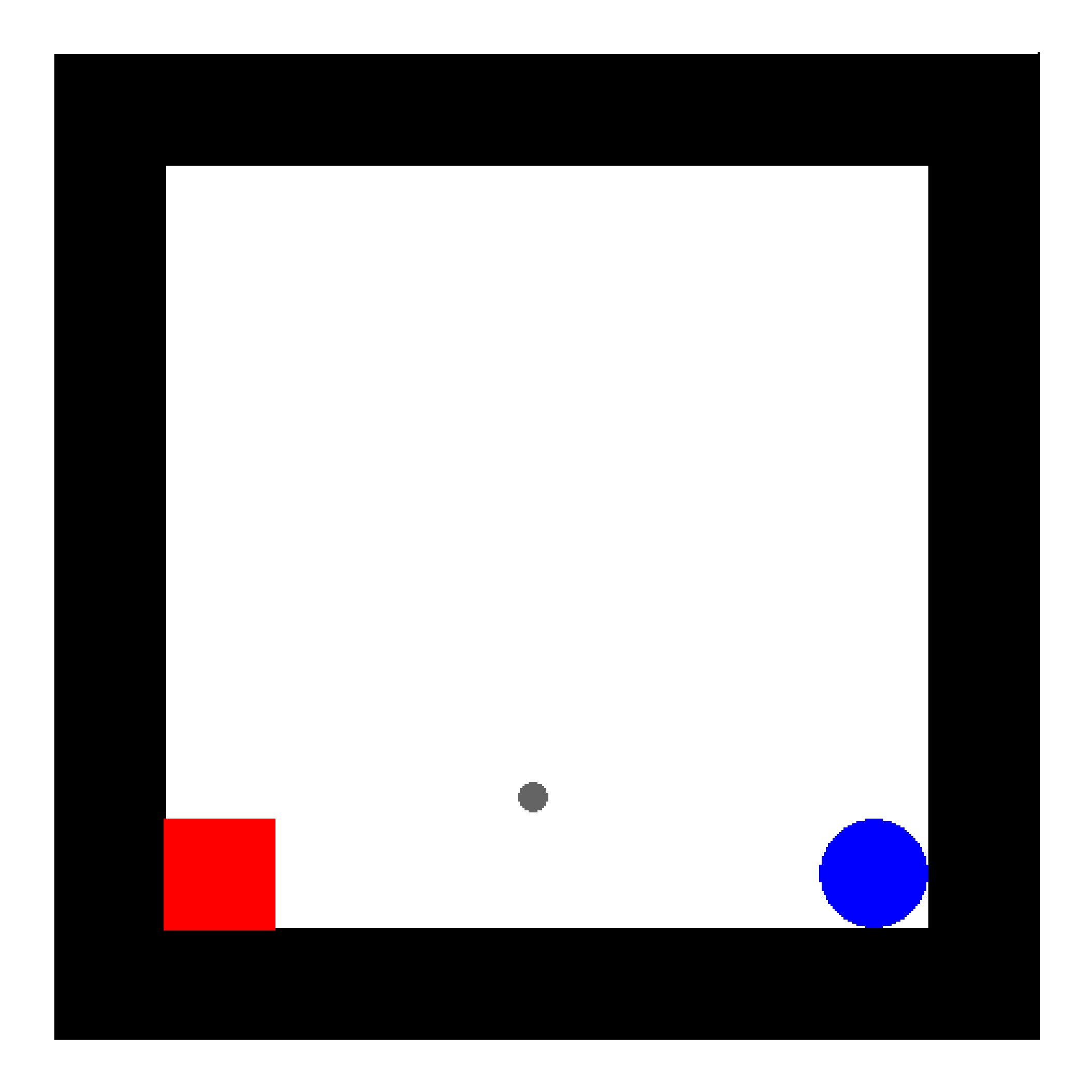}
        \caption{ALP-GMM@1K}
    \end{subfigure}
    \hfill
    \begin{subfigure}[b]{.24\textwidth}
        \centering
        \includegraphics[width=\textwidth]{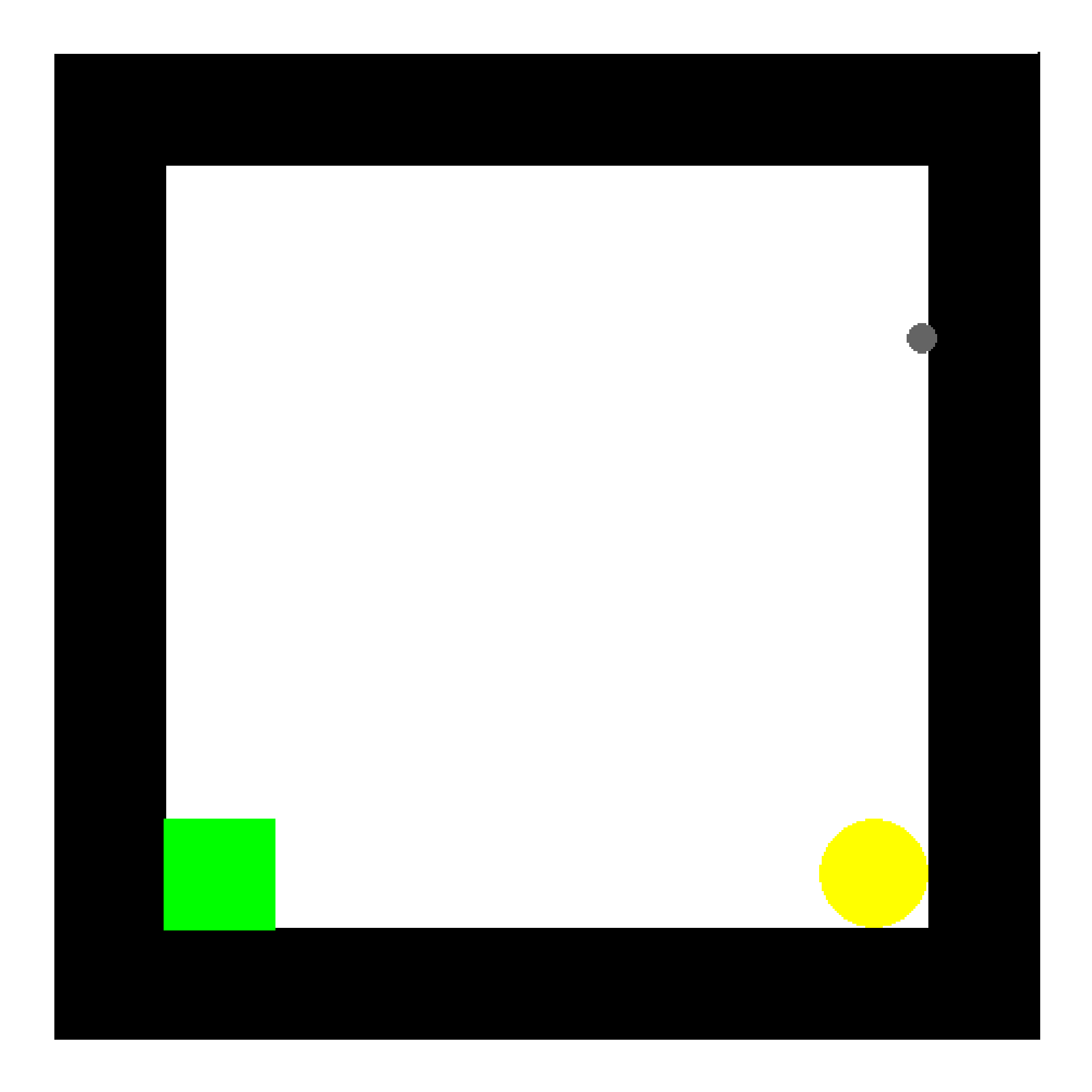}
        \caption{PLR@1K}
    \end{subfigure}\hfill\null
    \hfill
    \begin{subfigure}[b]{.24\textwidth}
        \centering
        \includegraphics[width=\textwidth]{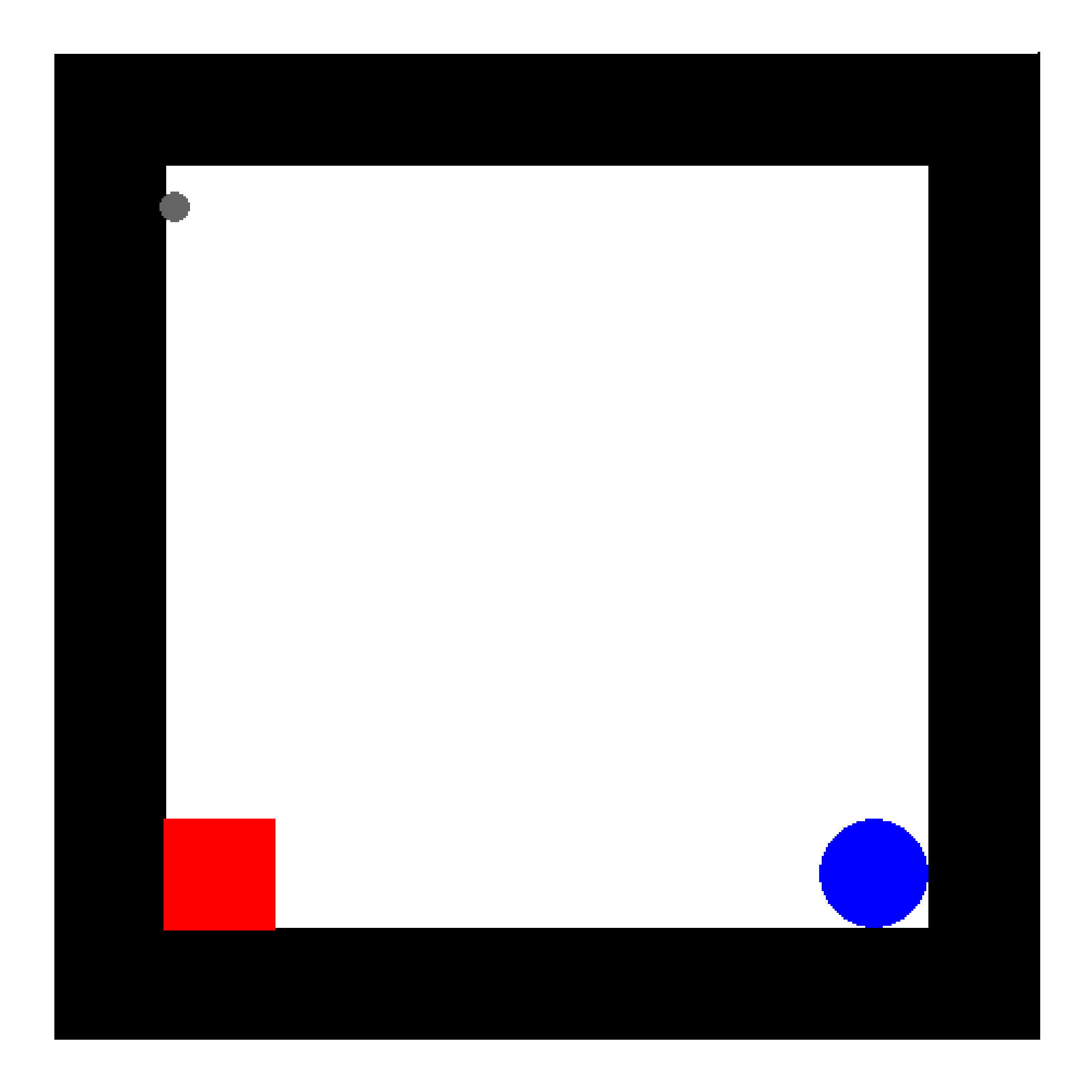}
        \caption{Goal-GAN@1K}
    \end{subfigure}\hfill\null
    \hfill
    \begin{subfigure}[b]{.24\textwidth}
        \centering
        \includegraphics[width=\textwidth]{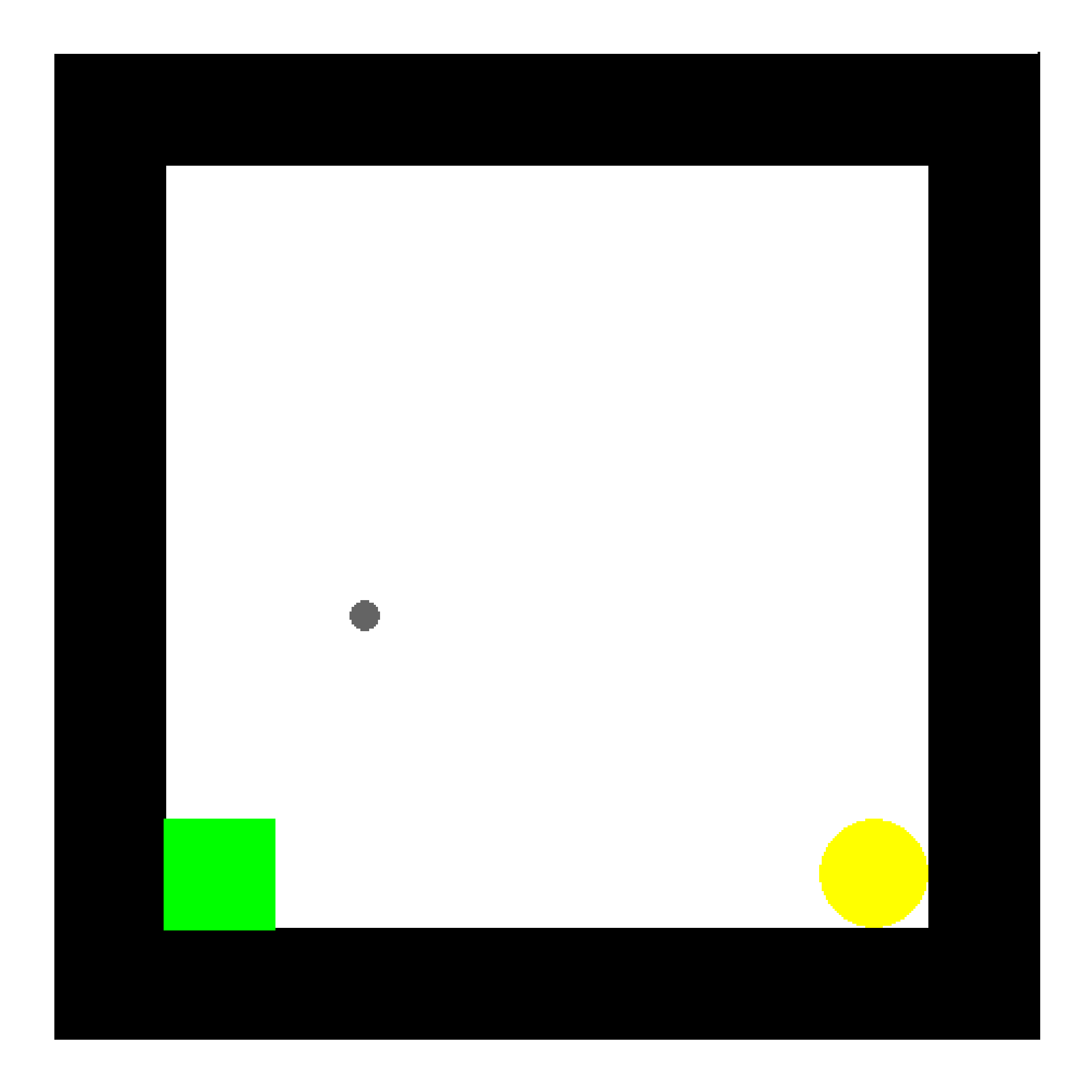}
        \caption{Currot@1K}
    \end{subfigure}\hfill\null
    \hfill
    \begin{subfigure}[b]{.24\textwidth}
        \centering
        \includegraphics[width=\textwidth]{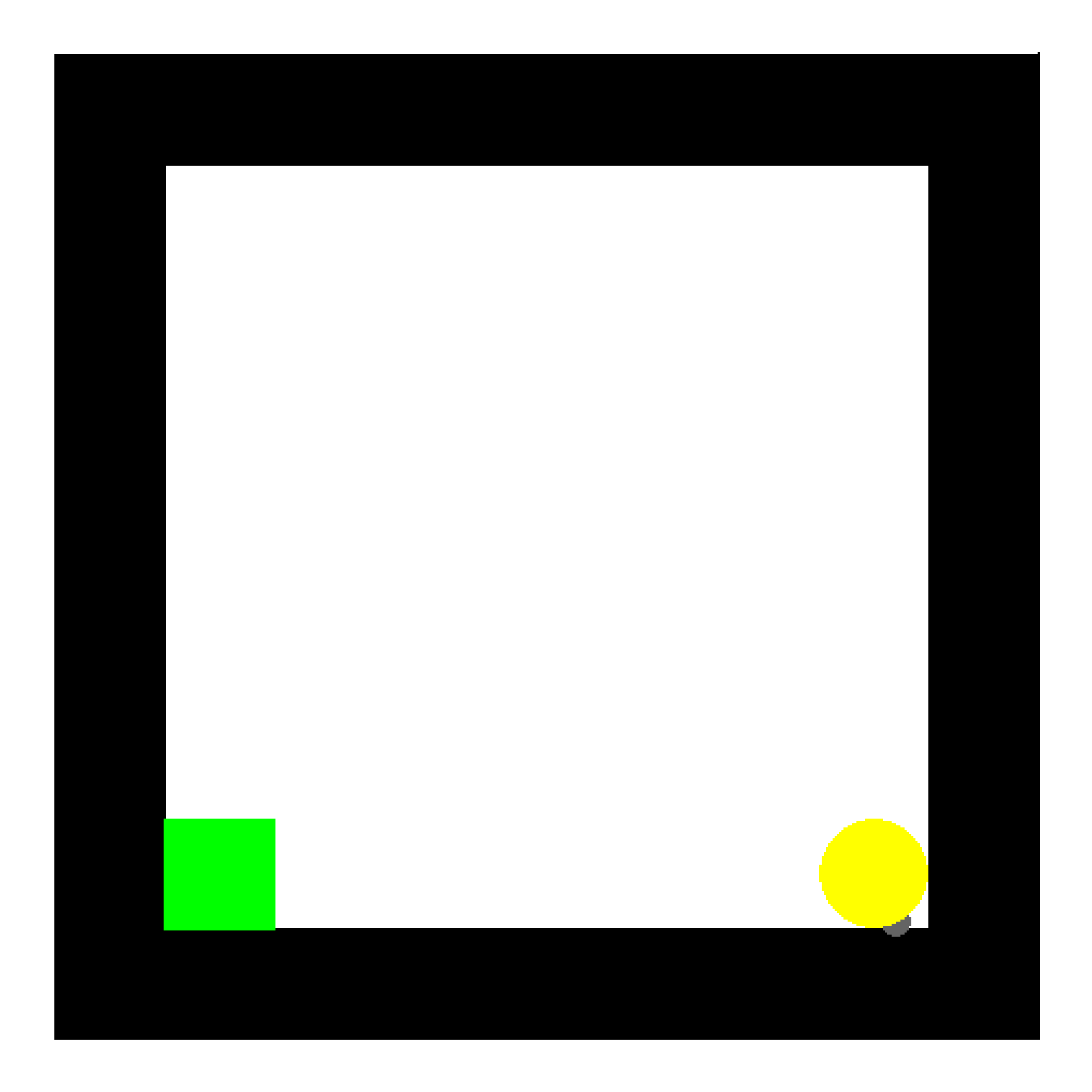}
        \caption{ALP-GMM@100K}
    \end{subfigure}
    \hfill
    \begin{subfigure}[b]{.24\textwidth}
        \centering
        \includegraphics[width=\textwidth]{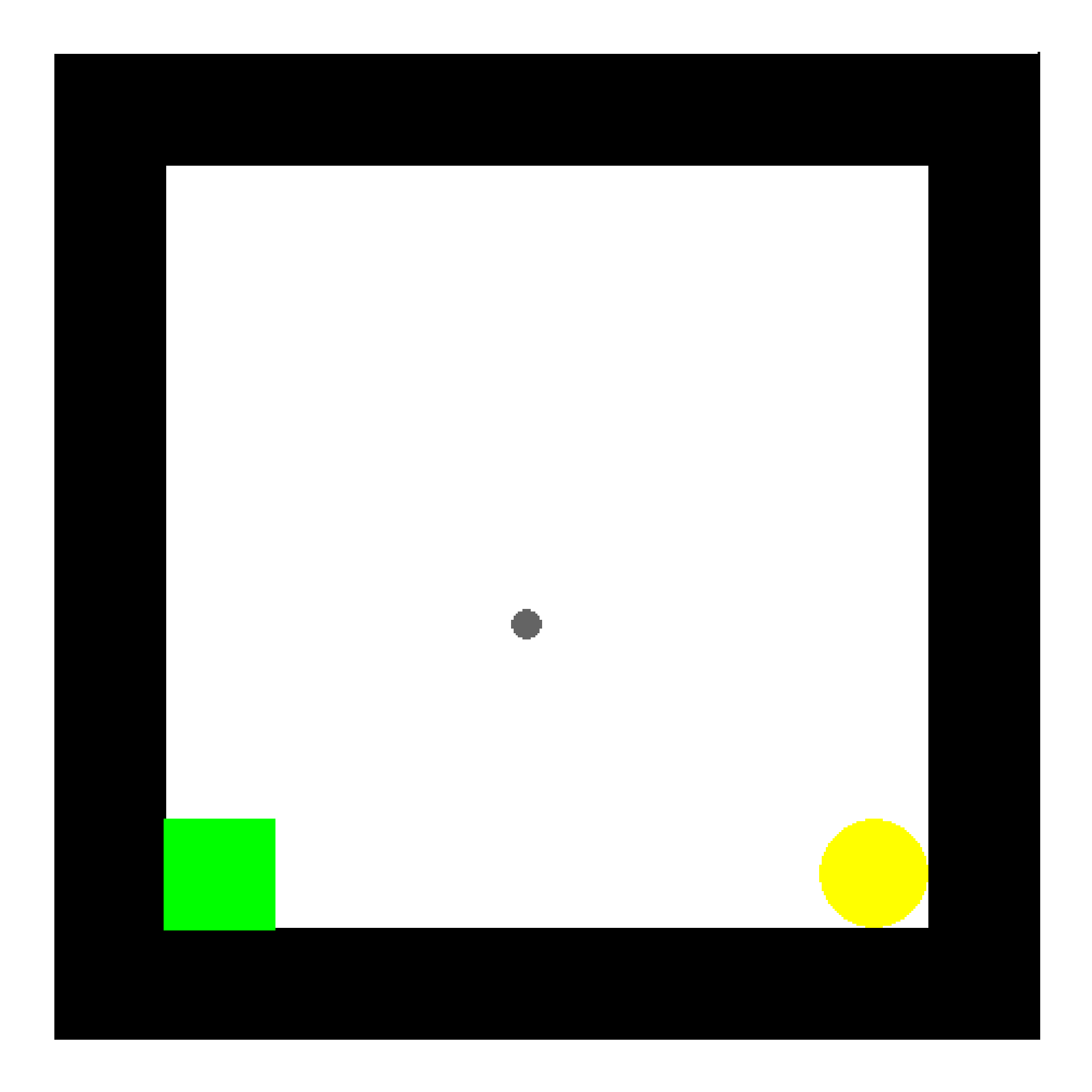}
        \caption{PLR@100K}
    \end{subfigure}\hfill\null
    \hfill
    \begin{subfigure}[b]{.24\textwidth}
        \centering
        \includegraphics[width=\textwidth]{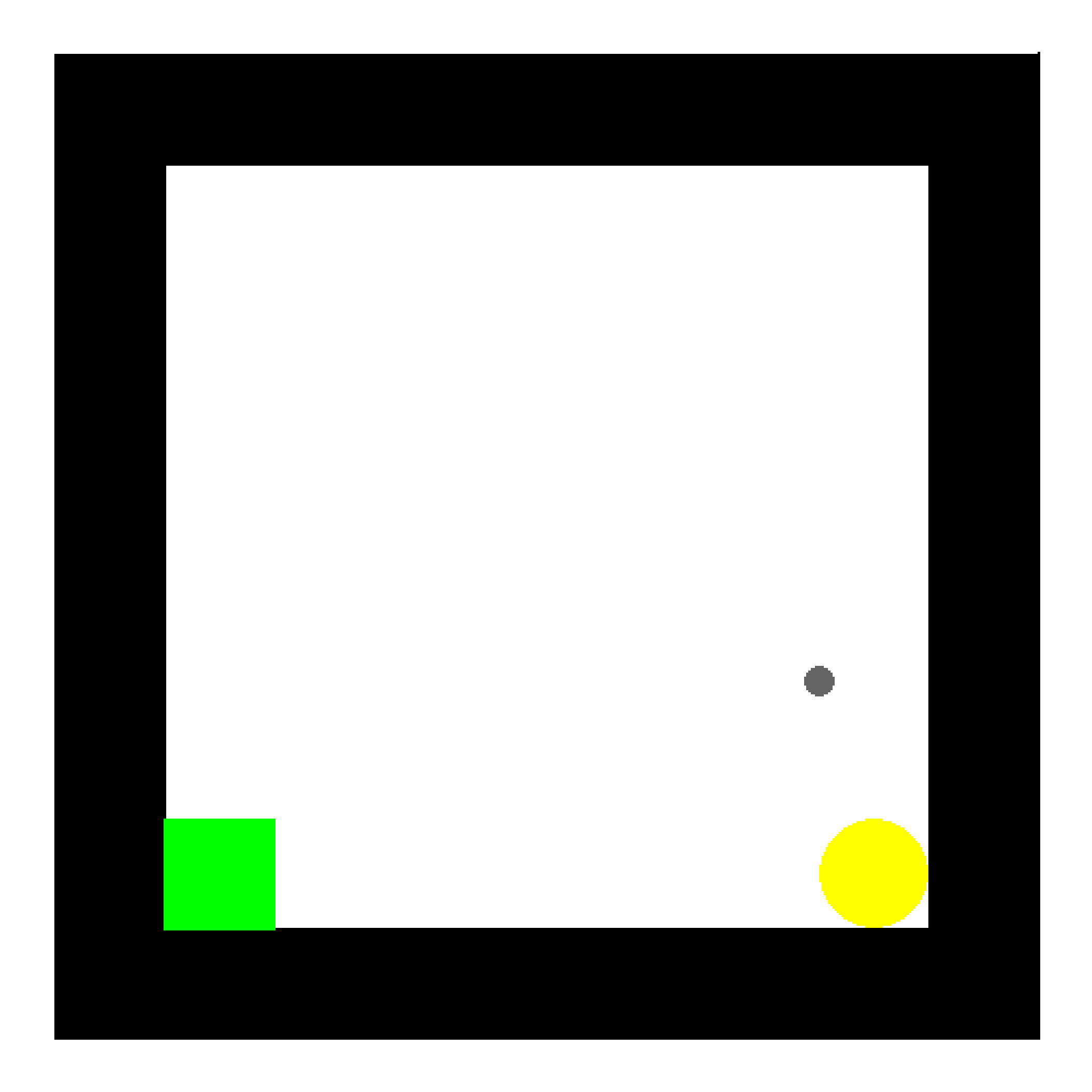}
        \caption{Goal-GAN@100K}
    \end{subfigure}\hfill\null
    \hfill
    \begin{subfigure}[b]{.24\textwidth}
        \centering
        \includegraphics[width=\textwidth]{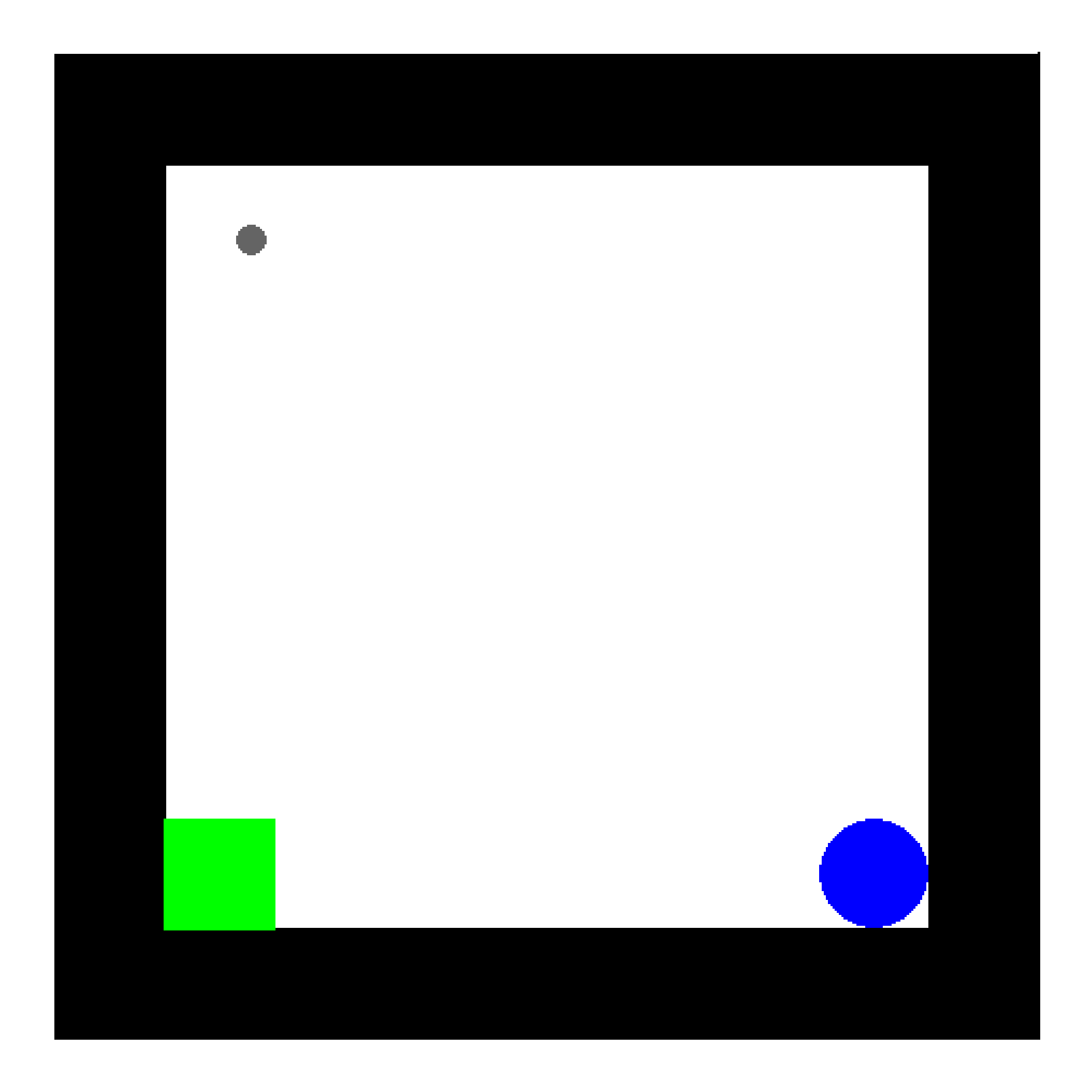}
        \caption{Currot@100K}
    \end{subfigure}\hfill\null
        \hfill
     \begin{subfigure}[b]{.24\textwidth}
        \centering
        \includegraphics[width=\textwidth]{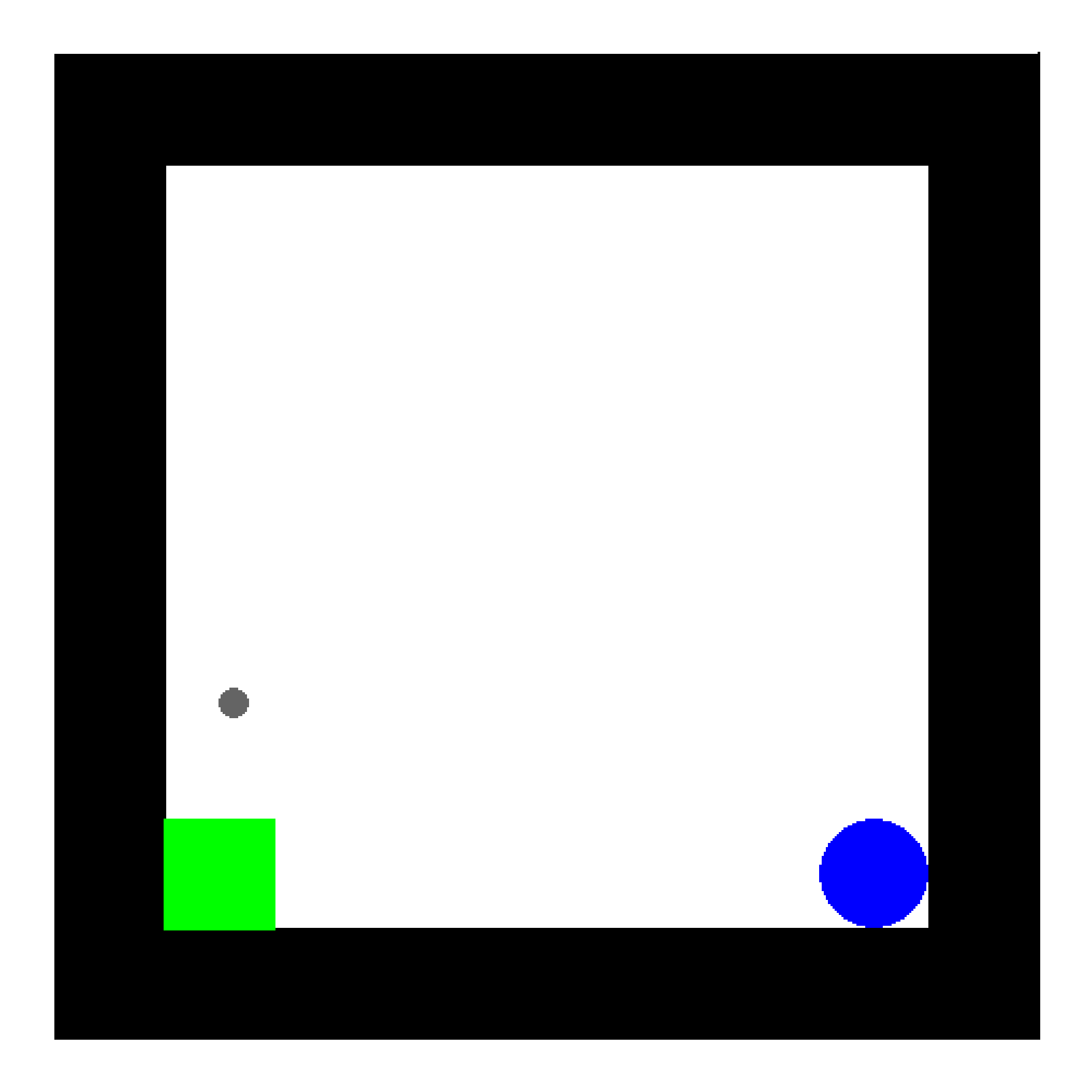}
        \caption{ALP-GMM@200K}
    \end{subfigure}
    \hfill
    \begin{subfigure}[b]{.24\textwidth}
        \centering
        \includegraphics[width=\textwidth]{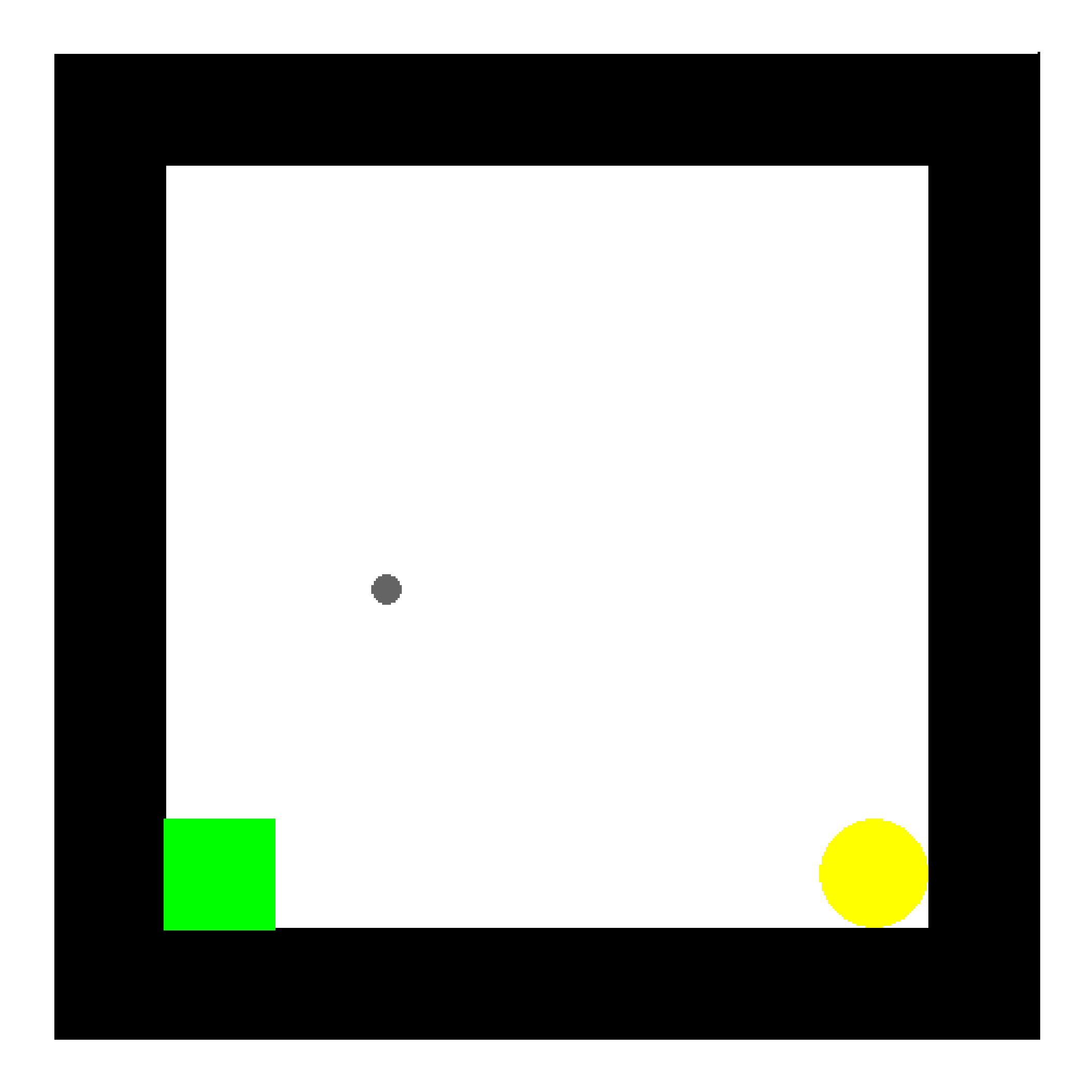}
        \caption{PLR@200K}
    \end{subfigure}\hfill\null
    \hfill
    \begin{subfigure}[b]{.24\textwidth}
        \centering
        \includegraphics[width=\textwidth]{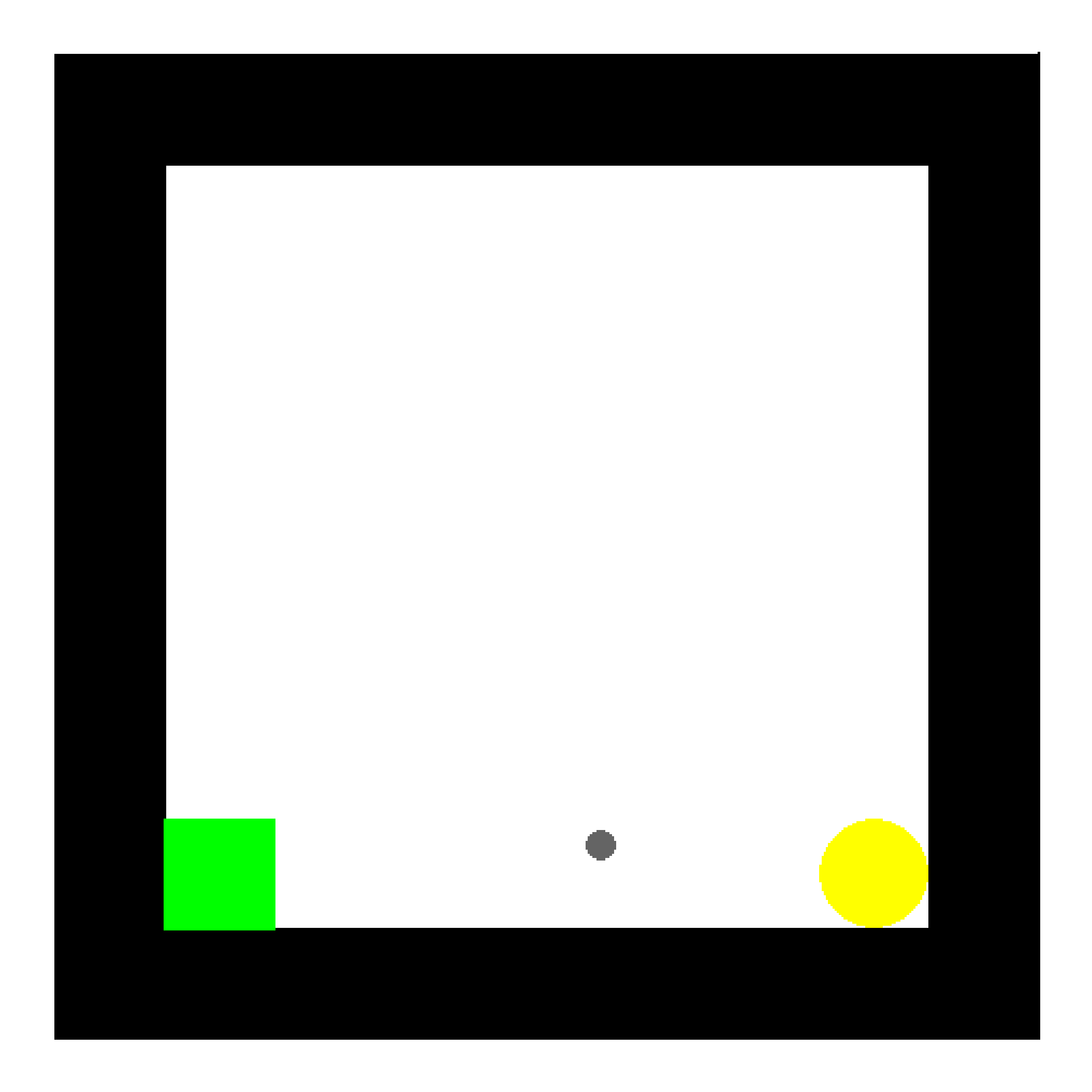}
        \caption{Goal-GAN@200K}
    \end{subfigure}\hfill\null
    \hfill
    \begin{subfigure}[b]{.24\textwidth}
        \centering
        \includegraphics[width=\textwidth]{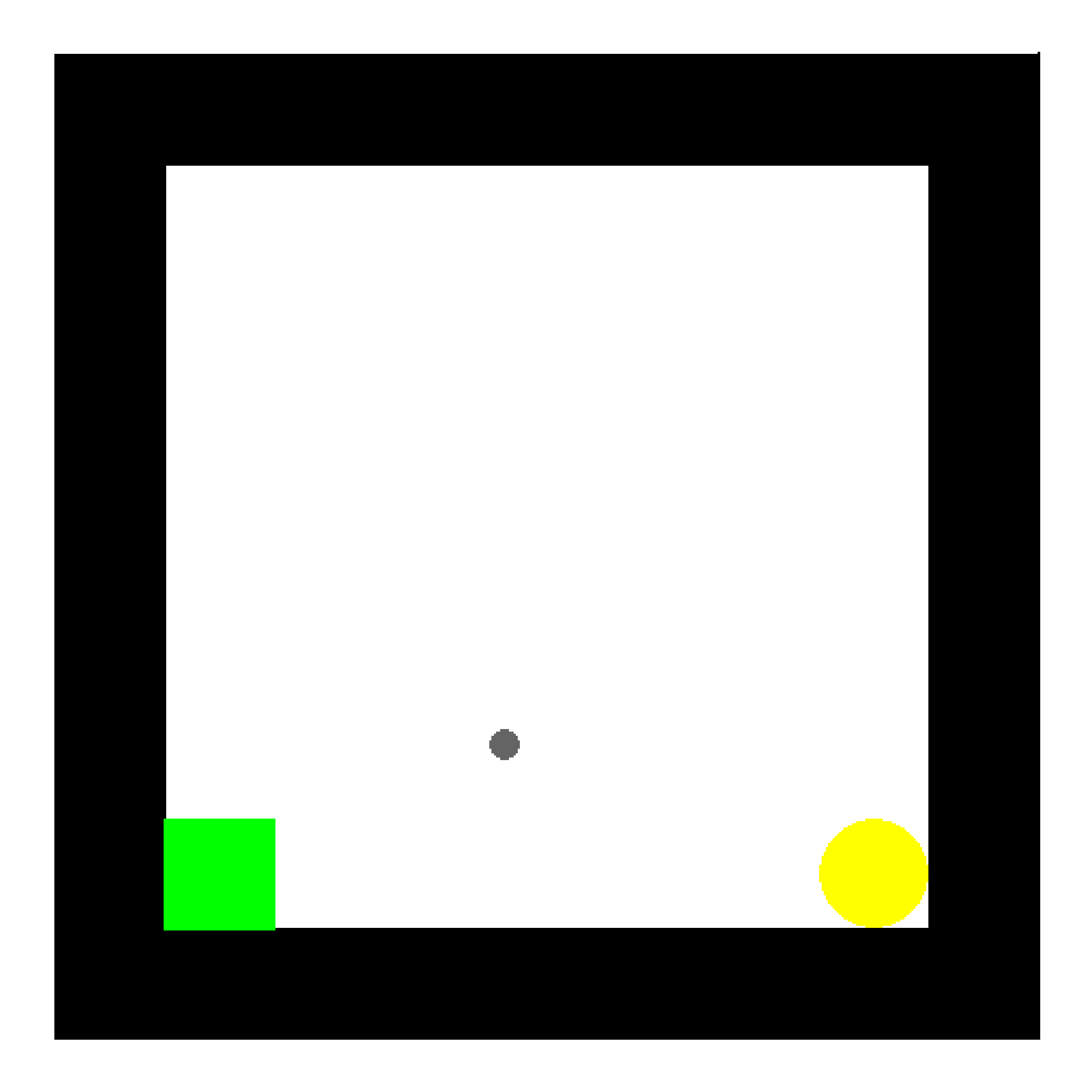}
        \caption{Currot@200K}
    \end{subfigure}\hfill\null
        \hfill
     \begin{subfigure}[b]{.24\textwidth}
        \centering
        \includegraphics[width=\textwidth]{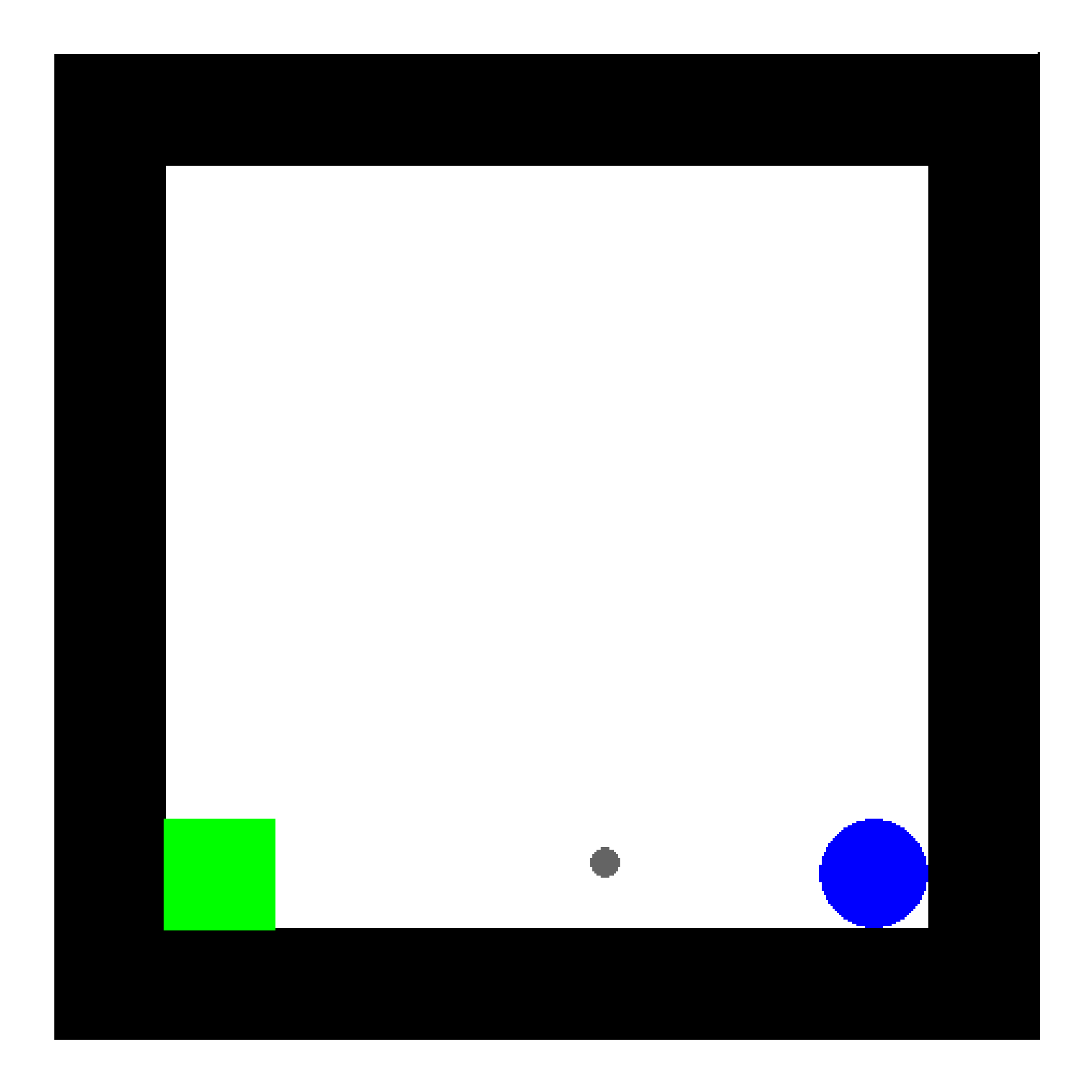}
        \caption{ALP-GMM@300K}
    \end{subfigure}
    \hfill
    \begin{subfigure}[b]{.24\textwidth}
        \centering
        \includegraphics[width=\textwidth]{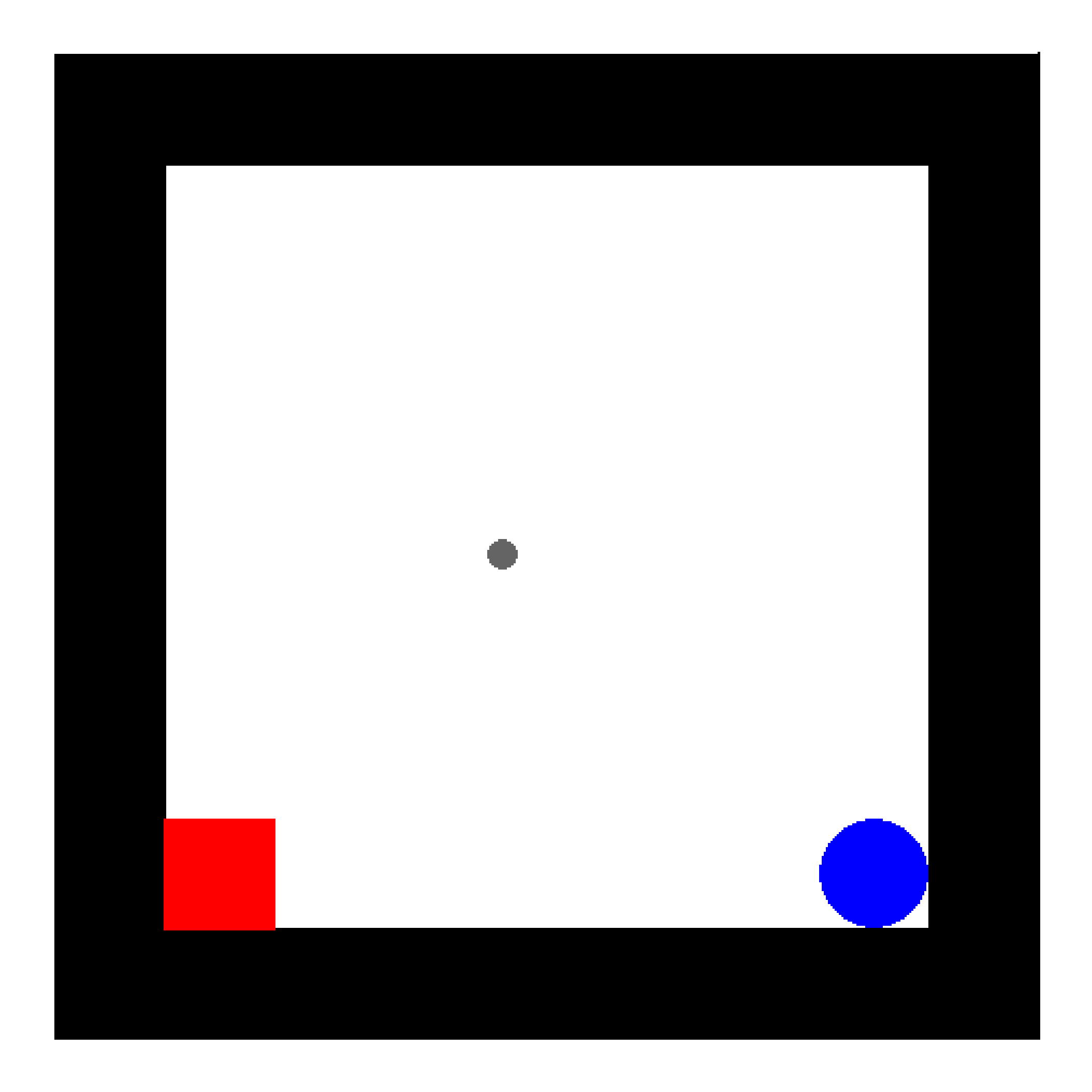}
        \caption{PLR@300K}
    \end{subfigure}\hfill\null
    \hfill
    \begin{subfigure}[b]{.24\textwidth}
        \centering
        \includegraphics[width=\textwidth]{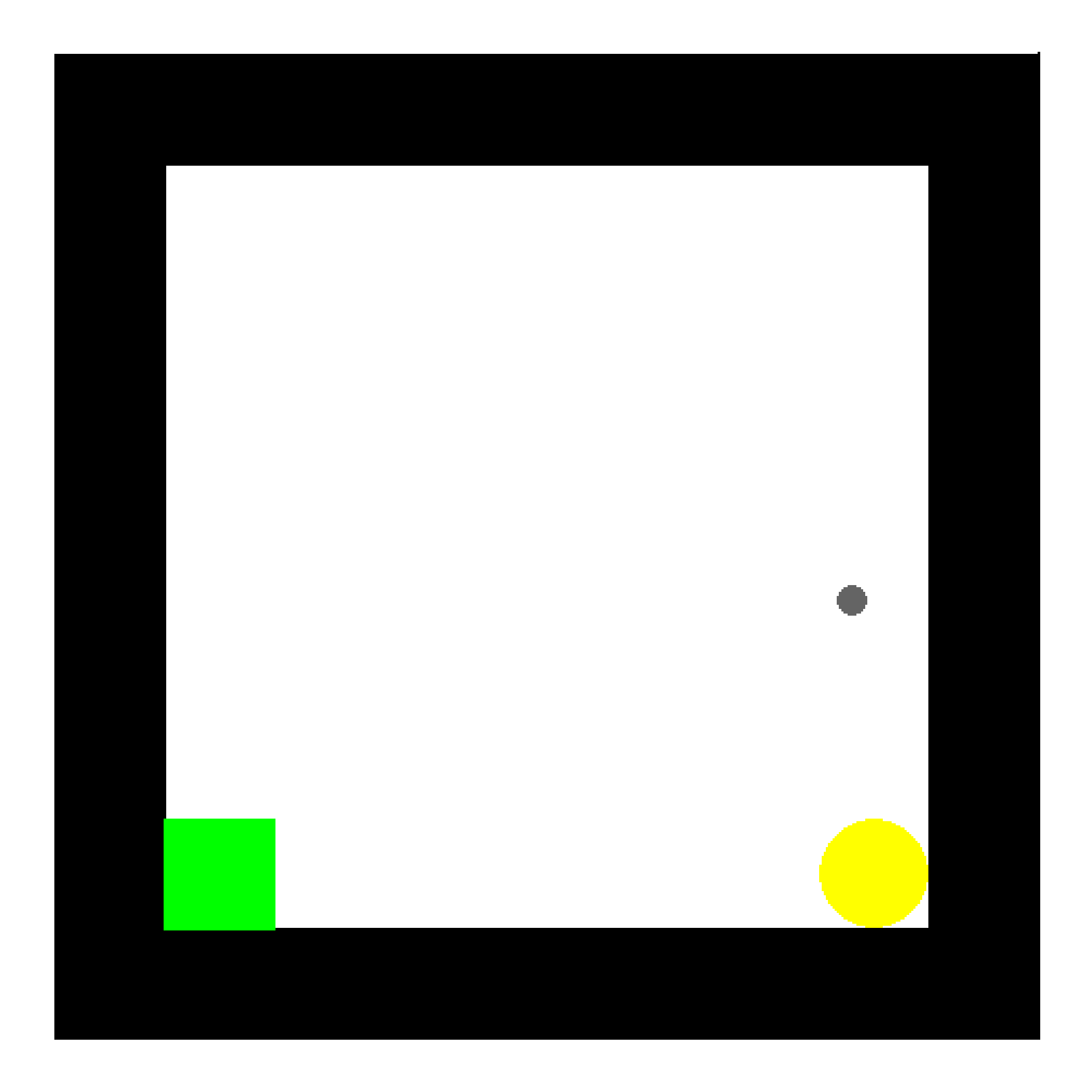}
        \caption{Goal-GAN@300K}
    \end{subfigure}\hfill\null
    \hfill
    \begin{subfigure}[b]{.24\textwidth}
        \centering
        \includegraphics[width=\textwidth]{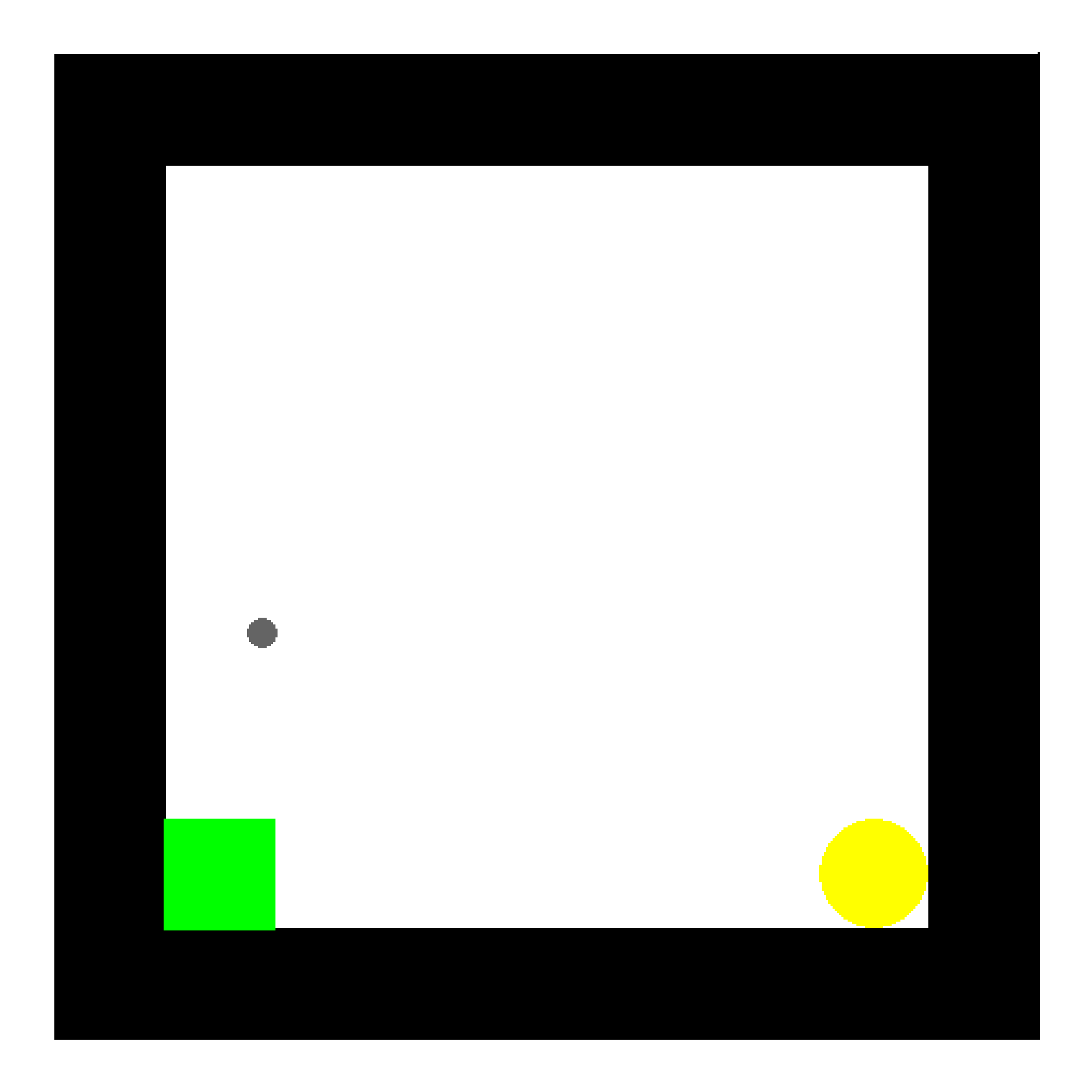}
        \caption{Currot@300K}
    \end{subfigure}\hfill\null
    \caption{Curricula generated by the (causal) augmented curriculum generators for the Continuous Button Maze. Each column shows a curriculum from one generator at different training steps.}
    \label{fig:curriculum of contraffic}
\end{figure}

\section{Detailed Related Work}
\label{sec:related work}

Research in generating suitable curricula for reinforcement learning agents can date back to the 90s when people manually designed subtasks for robot controlling problems~\cite {sanger1994neural}. In recent works, a general curriculum generation framework requires two components, an encoded task space and a task characterization function~\citep{curriculum_survey,wang2020enhancedpoet}. Each component may be either hand-coded as inputs~\citep{khan2011humans,peng2018curriculum,macalpine2018overlapping,alpgmm} or automatically learned from data along with training the agents~\citep{evolvingcurricula,currot,goalgan,plr,Florensa2017ReverseCG,risi2020increasing,cho2023outcomedirected,huang2022curriculum}. A straightforward task space encoding is to split the observation with common patterns (e.g. pixel tiles) and re-combine them to create new tasks~\citep{dahlskog2014multi}, which can be intractable in the face of a rich observation space and adds extra representational burdens to the curriculum generator. Recent work usually uses vectored parameters as task space encoding, each of which can be grounded into a unique task instance~\citep{evolvingcurricula,currot,goalgan,plr,alpgmm,tripleagent,wang2019poet,wang2020enhancedpoet,cho2023outcomedirected,huang2022curriculum}. While parameter-based encoding is widely applicable in various decision-making tasks, using a dedicated domain description language such as Video Game Description Language (VGDL) for task space encoding provides finer granularity and readability in task generation~\citep{schaul2013vgdl,perez2016general,justesen2018illuminating}. A suitable task space encoding then lays the foundation of a reasonable task characterization function, which is either a task difficulty measure~\citep{goalgan,evolvingcurricula,tripleagent,andreas2017modular,selfplay,plr} or a task similarity function~\citep{svetlikAutomaticCurriculumGraph2017,Silva2018ObjectOrientedCG,plr,eysenbach2018diversity} in general. For example, task similarity can be measured via domain knowledge based heuristics ~\citep{svetlikAutomaticCurriculumGraph2017,Andrychowicz2017HindsightER,Silva2018ObjectOrientedCG} and agent's performance can be used as a direct indicator of the task difficulty~\citep{Florensa2017ReverseCG,goalgan,Narvekar2017AutonomousTS,evolvingcurricula}. Curriculum generator relies heavily on these task characteristic functions to measure the quality of the task, schedule the training process, and evaluate the agent's performance~\citep{curriculum_survey,rl_curriculum}.

Given task space encoding and task characteristic functions, the remaining central problem of curriculum learning is how exactly one could generate new tasks efficiently. The most intuitive idea of training agents on increasingly harder tasks has been verified in various works, which can be implemented as setting different goals~\citep{goalgan,racaniere2019automated,baranes2013active} or changing starting state distributions~\citep{Florensa2017ReverseCG,salimans2018learning,asada1996purposive,narvekar2016source}. Another major branch of task generation is to change task's state space or task parameters. This approach usually works together with a parametrized task space where all state factors and transition function parameters are fully encoded as a latent vector~\citep{evolvingcurricula,currot,goalgan,plr,alpgmm,tripleagent,wang2019poet,wang2020enhancedpoet,cho2023outcomedirected,huang2022curriculum}. When more flexibility is desired, source tasks can also be generated by changing the transition or reward function forms. In curiosity-driven agents, exploration is encouraged by an intrinsic reward added upon the original reward from the task~\citep{bellemare2016unifying,ecoffet2019goexplor}, and this intrinsic reward signal can be learnable which evolves as agents progress in the task~\citep{burda2018exploration,chentanez2004intrinsically}. Another telling illustration of function forms change is the Sim2Real problem. When generating tasks for a robot in simulation, other than a simulated perfect dynamics model, one needs to take into account extra fractions, inaccurate sensors, motor latency, and poorly executed actions for a successful deployment into the real-world target task~\citep{akkaya2019solvingcube}.

The crux of curriculum reinforcement learning is to transfer useful knowledge from subtasks to target tasks~\citep{curriculum_survey}, which can be modeled as a transportability problem in the causal literature~\citep{sigmacalculus,pearlTransportabilityCausalStatistical2011,bareinboimCausalInferenceDatafusion2016}. The literature on transportability studies broadly how to answer queries with data from disparate domains. By examining shared causal mechanisms across seemingly dissimilar domains, formal graphical conditions are established to identify what queries can be answered and how those should be answered. In our curriculum reinforcement learning setting, a good curriculum elicits policy that performs well not only in subtasks but, more importantly, in target tasks. Policy pieces from various tasks in the curricula constitute our data from which our quest is to construct an optimal target task policy. Since subtasks are generated from target tasks, they share certain aspects in principle. Analyzing how we can transfer those policy pieces to the target tasks can thus be viewed as a transportability problem.  

Throughout the paper, we assume access to the causal diagram of the task. However, there is also an orthogonal line of research dedicated in learning the causal structure directly from the task from which a causal diagram can be derived naturally~\citep{hu2022causality,li2023causal,perry2022causal}.



\section{Markov Decision Processes (MDPs), Partially Observable MDPs (POMDPs), and Structural Causal Models (SCMs)}
\label{sec:mdp relation}
An MDP is defined to be a four-tuple $\langle \mathcal{S}, \mathcal{A}, \mathcal{R}, T\rangle$ where $\mathcal{S}$ is a finite set of states, $\boldsymbol{A}$ a finite set of actions, $T: \mathcal{S}\times \mathcal{A} \rightarrow \Pi(S)$ the transition function mapping from state action pair to the distribution over the set of states and $\mathcal{R}: \mathcal{S}\times \mathcal{A} \rightarrow \mathbb{R}$ the reward function~\citep{kaelbling1996reinforcement}. 
And a POMDP is defined to be a six tuple with two additional elements than the MDP, $\langle \mathcal{S}, \mathcal{A}, \mathcal{R}, T\rangle, \mathcal{O}, p$ where $\mathcal{O}$ is a finite set of observations and $p:\mathcal{S}\rightarrow \Pi(\mathcal{O})$ is the observation function mapping from the true underlying state to the distribution of observations~\citep{kaelbling1998planning}.
Recall the definition of SCMs in the preliminary section, we can see that the definition of SCMs subsumes the transition function and inherent structural assumptions in MDPs and POMDPs. We can encode all the state/action/observation/reward variables as endogenous variables and the randomness of the reward and transition functions as exogenous variables. Each variable's value is either determined by a structural equation specified by the environment or a policy specified by the agent. 

Graphically speaking, the causal diagram of a typical MDP is shown in \Cref{fig:mdp}.
\begin{figure}
    \centering
    \includegraphics[width=.5\linewidth]{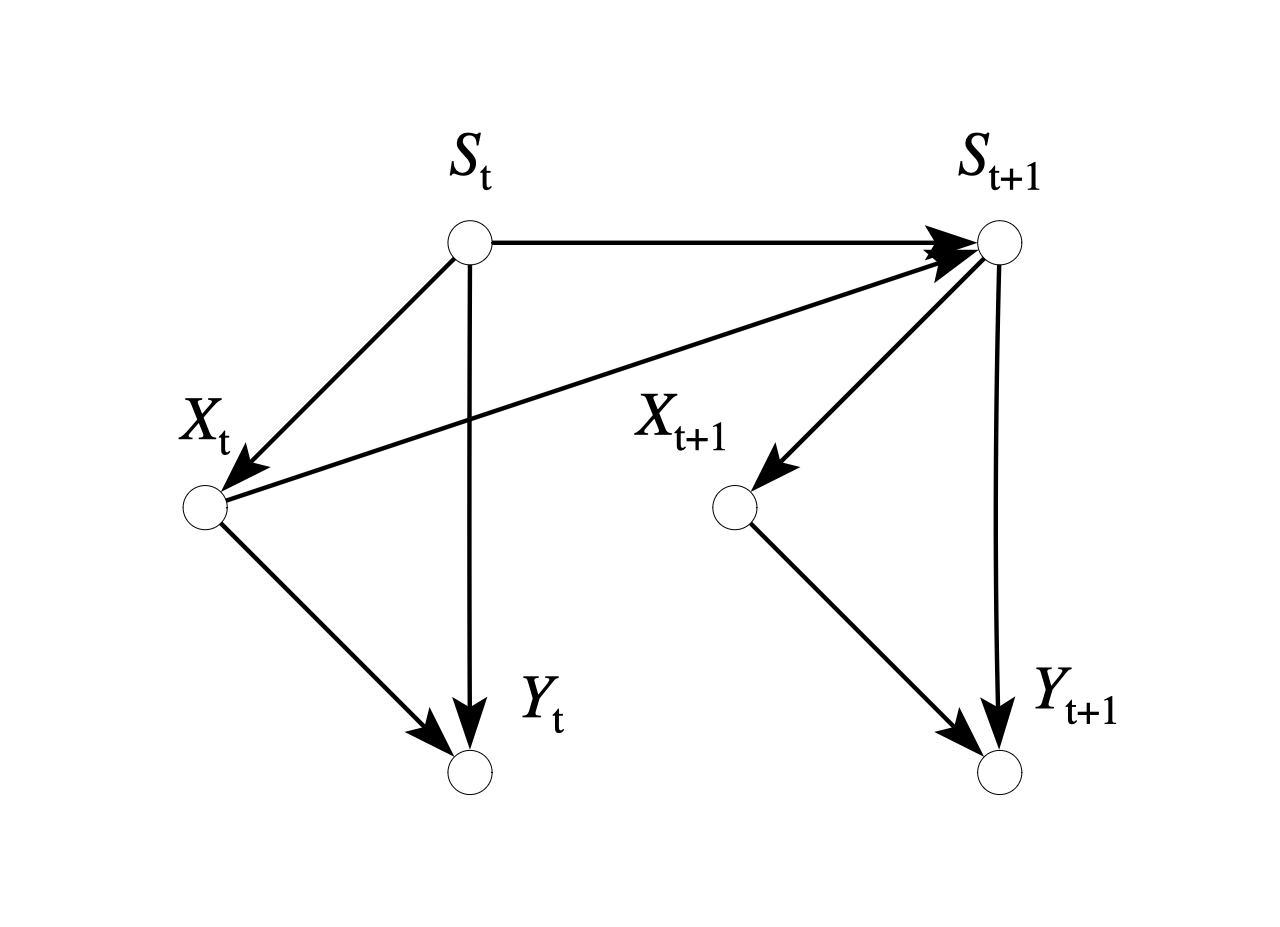}
    \caption{The causal diagram of a standard MDP. We use $X$ to denote actions and $Y$ to denote rewards.}
    \label{fig:mdp}
\end{figure}
The next state $S_{t+1}$ only takes the action and state at the current time step as input, which means that in the form of structural equations, $S_{t+1} = f_{S_{t+1}}(S_t, X_t, U_{S_{t+1}})$ where $U_{S_{t+1}}$ is an exogenous variable representing the inherent randomness in the transition function. In~\Cref{fig:mdp}, we can see clearly the Markov assumption embedded inside the graphical structure, i.e., $(S_{t+1} \independent S_{t-1}|S_t, X_t)$. Similarly, we can also ground the Markovian reward assumption with precise graphical criteria. As stated by \citeauthor{abel2021expressivity}, Markovian reward assumption assumes that the state factors that are affecting the reward are fully observable to the agent. This can be interpreted as, $\pa(Y_t)\subseteq \boldsymbol{S}_t$ where all parent nodes of the reward node is given as input to the agent ($\boldsymbol{S}_t$). Note that here we denote the state as a set of variables instead of one single variable for clarity and we implicitly assume that the agent can observe all $\boldsymbol{S}_t$ when making decisions.

For POMDP, the causal diagram is shown in \Cref{fig:pomdp}.
\begin{figure}
    \centering
    \includegraphics[width=.5\linewidth]{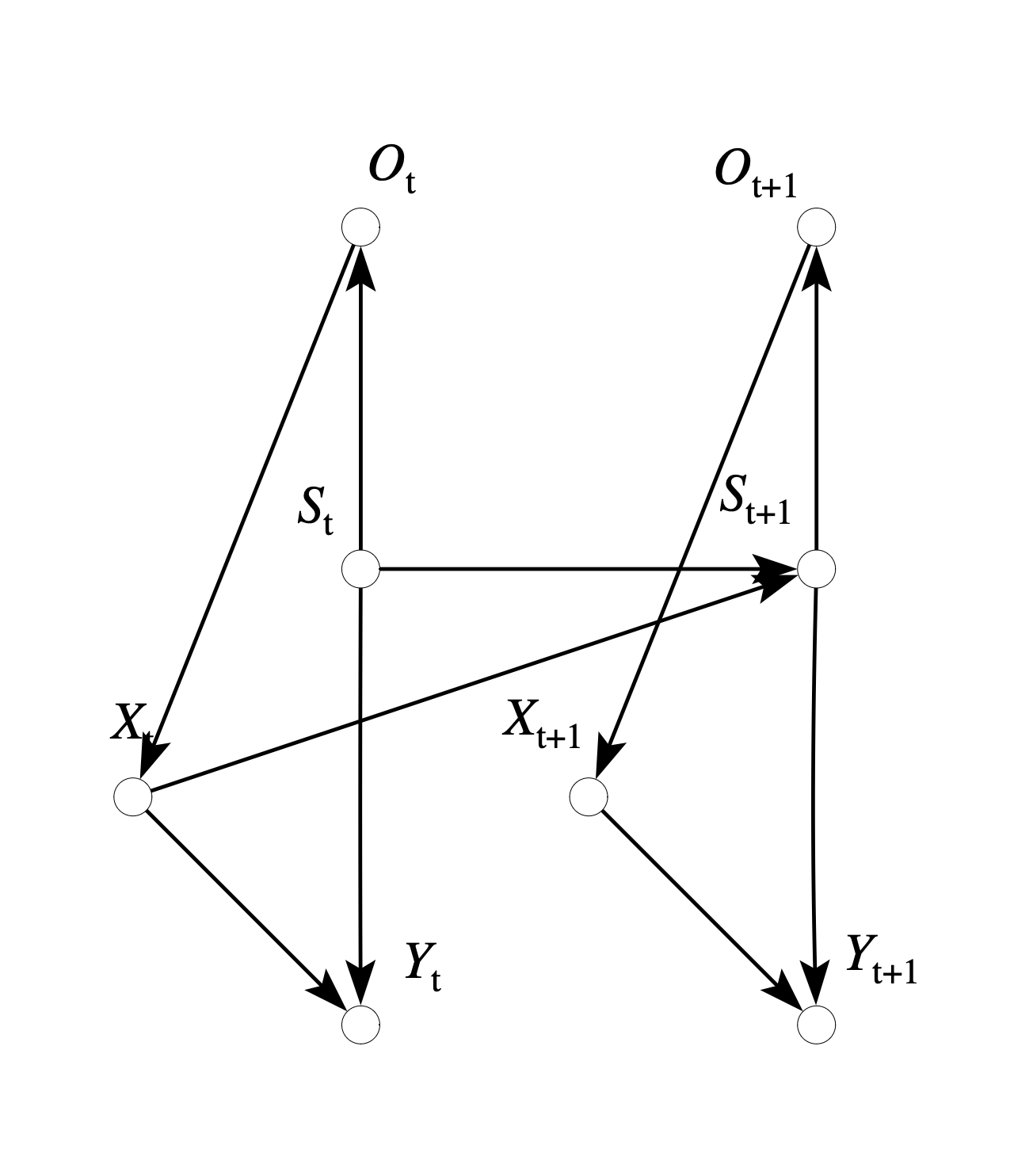}
    \caption{The causal diagram of a standard POMDP. We use $X$ to denote actions and $Y$ to denote rewards.}
    \label{fig:pomdp}
\end{figure}
The causal diagram faithfully reflects the fact that the agent cannot observe the state variables directly but only the observations. And importantly, the underlying transition dynamics between the true states and actions are still following the standard MDP. The representation of SCMs is more versatile in modeling decision making scenarios in that it is amenable to represent any data generating processes without casting structural assumptions like MDP nor POMDP to the problem. By introducing the notion of confounders, we can better utilize the graphical structure to construct optimal agents efficiently~\citep{zhangDesigningOptimalDynamic2020}.

In this work, we utilize the qualitative causal knowledge to ensure that the causal effect of changing certain aspects of the target task won't affect the optimal policy the agent will learn from the generated source task. Even without confounders, if the state space is partially observable, the same situation as in the Colored Sokoban could happen since not all factors affecting the reward can be observed by the agent. But when the Markovian reward assumption holds, where the agent can observe all the parents of the reward variable, the reward cannot be confounded with any other variables. Under this stronger assumption, as our Theorem 1 indicates, all state variables are editable. On the other hand, we are dedicated to solving the curriculum generation problem in the presence of unobserved confounders. Thus, the setting of Markovian rewards actually falls into the traditional curriculum reinforcement learning problem studied in the literature, which our work is trying to relax.

\section{Limitations}
\label{sec:limitations}

As our theorems and algorithms require causal diagrams as inputs, it might not be easily applicable when the domain specific knowledge is hard to retrieve or the input diagram is inaccurate. Thus, we will combine our method with causal discovery methods that can automatically learn the underlying causal structure from the data to loosen this constraint in the future~\citep{hu2022causality,li2023causal,perry2022causal}.

\end{document}